\crefname{appsec}{Appendix}{Appendices}
\newtheorem{assumption}{Assumption}
\newcommand{\E}{\mathop{\mathbb{E}}}
\newcommand{\V}{\mathop{\mathbb{V}}}
\newcommand{\C}{\mathop{\mathbb{C}}}
\newcommand{\R}{\mathbb{R}}
\newcommand{\tran}{^{\top}}
\newcommand{\patch}{\scalebox{0.5}{$\bigstar$}}
\DeclareMathOperator{\diag}{diag}
\DeclareMathOperator{\trace}{Tr}
\DeclareMathOperator{\vect}{vec}
\DeclareMathOperator{\sigmoid}{sigmoid}
\DeclareMathOperator{\swish}{swish}
\DeclareMathOperator{\ReLU}{ReLU}
\DeclareMathOperator{\I}{I}
\DeclareMathOperator{\cov}{cov}
\DeclareMathOperator{\var}{var}
\newcommand{\1}{\mathds{1}}
\newcommand{\Dt}{\Delta t}
\newcommand{\xdt}{\boldsymbol{x}}
\newcommand{\gdt}{\boldsymbol{g}}
\let\norm\undefined
\DeclarePairedDelimiter\norm{\lVert}{\rVert}
\DeclarePairedDelimiterX{\dprod}[2]{\langle}{\rangle}{#1, #2}
\newcommand{\mux}{\mu}
\newcommand{\sigmax}{\sigma}
\begin{document}

\title{Doubly infinite residual neural networks:\\ a diffusion process approach}
\author{\name Stefano Peluchetti \email speluchetti@cogent.co.jp\\
\addr Cogent Labs\\
106-0032, Tokyo, Japan.
\AND
\name Stefano Favaro \email stefano.favaro@unito.it\\ 
\addr Department of Economics and Statistics\\ 
University of Torino and Collegio Carlo Alberto\\
10122, Torino, Italy.}
\editor{Philipp Hennig}

\maketitle

\begin{abstract}
Modern neural networks featuring a large number of layers (depth) and units per layer (width) have achieved a remarkable performance across many domains. While there exists a vast literature on the interplay between infinitely wide neural networks and Gaussian processes, a little is known about analogous interplays with respect to infinitely deep neural networks. Neural networks with independent and identically distributed (i.i.d.) initializations exhibit undesirable forward and backward propagation properties as the number of layers increases, e.g., vanishing dependency on the input, and perfectly correlated outputs for any two inputs. To overcome these drawbacks, \cite{pelux2020sde} considered fully-connected residual networks (ResNets) with network's parameters initialized by means of distributions that shrink as the number of layers increases, thus establishing an interplay between infinitely deep ResNets and solutions to stochastic differential equations, i.e. diffusion processes, and showing that infinitely deep ResNets does not suffer from undesirable forward-propagation properties. In this paper, we review the results of \cite{pelux2020sde}, extending them to convolutional ResNets, and we establish analogous backward-propagation results, which directly relate to the problem of training fully-connected deep ResNets. Then, we investigate the more general setting of doubly infinite neural networks, where both network's width and network's depth grow unboundedly. We focus on doubly infinite fully-connected ResNets, for which we consider i.i.d. initializations. Under this setting, we show that the dynamics of quantities of interest converge, at initialization, to deterministic limits. This allow us to provide analytical expressions for inference, both in the case of weakly trained and fully trained ResNets. Our results highlight a limited expressive power of doubly infinite ResNets when the unscaled network's parameters are i.i.d. and the residual blocks are shallow.
\end{abstract}

\begin{keywords}
convolutional neural network; deep neural network; diffusion process; doubly infinite neural network; neural tangent kernel; residual neural network; stochastic differential equation.
\end{keywords}

\section{Introduction}

Modern neural networks featuring a large number of layers (depth) and units per layer (width) have achieved a remarkable performance across many domains \citep{lecun2015deep}. Under suitable distributional assumptions for network's parameters, the large-width limit of a neural network is a Gaussian process \citep{neal1995bayesian,matthews2018gaussian}. Such an interplay between infinitely wide neural networks and Gaussian processes has contributed to the study of properties of neural networks, and most recently it has become a critical tool for introducing inferential algorithms that directly target the infinite-dimensional setting \citep{lee2018deep,garriga-alonso2018deep,lee2019wide,arora2019exact}. According to recent studies on infinitely wide neural networks, it seems natural to ask whether there exists an interplay between infinitely deep neural networks and classes of stochastic processes. At a first glance, this interplay might prove elusive. Because of the duality between initialization schemes and Bayesian neural networks, it is well-known that any initialization scheme for neural networks may be interpreted as a prior distribution on network's parameters, and in turns the initialization induces a prior on the network. Therefore, any neural network at initialization may be viewed as a suitable stochastic process indexed by depth, whose distribution is defined through a sequence of conditional distributions mapping from each layer to the next layer. Early works have focused on the stabilization of the variance of key quantities of interest across the layers of neural networks \citep{glorot2010understanding,he2015delving}. More recent works \citep{poole2016exponential,schoenholz2017deep, hayou2019impact} considered the impact of initialization schemes to the forward-propagation of the input signal.

Even when initialized on the edge of chaos (EOC) for optimal forward-propagation of the signal \citep{hayou2019impact}, neural networks with an independent and identically distributed (i.i.d.) initialization exhibit pathological properties as their total depth increases. Arguably, the two most common pathological properties are: i) the dependency on the input signal eventually vanishes for most activation functions \citep{neal1995bayesian,poole2016exponential,schoenholz2017deep}; ii) the distributions of the layers, when viewed as random functions on the input space, eventually concentrate on restrictive families including constant functions \citep{hayou2019impact}. As an illustrative example, in \cref{fig:fspace_T_example} we show functions sampled from the last layer of a feedforward neural network for two activation functions under EOC initialization. For the hyperbolic tangent ($\tanh$) activation function, i.e. $\phi(x)=\tanh(x)$, the input signal has no discernible impact on the output, as can be seen by the constant marginal distributions, and the sampled functions are almost constant. This behavior is representative of most classes of smooth activation functions used in practice \citep{hayou2019impact}. For the rectified linear unit ($\ReLU$) activation function, i.e.  $\phi(x) = \max(0,x)$, the input signal affects the variance of the output and the sampled functions are piece-wise linear functions. In both cases, the outputs corresponding to any two input signals end up perfectly correlated. While this analysis applies to feedforward neural networks, residual neural networks (ResNets) suffer from analogous issues \citep{yang2017mean}. In addition, for the ResNets it is known that the variance of the Gaussian-distributed pre-activations may grow unbounded over the layers. 

\begin{figure}[htbp]
    \centering
    \hspace*{-0.5cm}\includegraphics[width=0.8\linewidth]{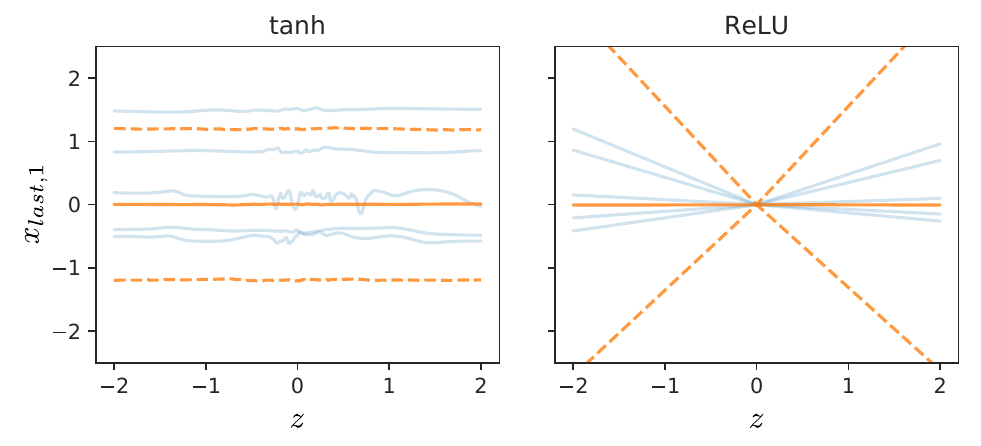}
    \caption{Samples of a given pre-activation from the last layer $x_{last,1}$ of a feedforward neural network with 500 layers of 500 units over an input signal $z \in [-2,2]$, and $\tanh$ and $\ReLU$ activation functions. Initialization on the EOC. Samples are displayed in blue, and for each input signal the $5\%$, $50\%$ and $95\%$ quantiles are displayed in orange.}
    \label{fig:fspace_T_example}
\end{figure}

The above-mentioned pathological properties, as well as other critical difficulties, are determined by the fact that common prior distributions on network's parameters introduce a constant level of randomness over each network's layer. To overcome these difficulties, \cite{pelux2020sde} introduced a prior distribution that depends on the number of layers, in such a way that the distribution of network's parameters shrinks as the number layers increases. Under this novel prior construction, \cite{pelux2020sde} showed that  fully-connected ResNets converge to certain classes of diffusion processes on a finite time interval, jointly over multiple input signals, as the number of layers increases. The conditions required for attaining the convergence to diffusion processes provide with a concrete guideline for selecting compatible neural network architectures, activation functions and distributions for network's parameters. The resulting limiting diffusion processes satisfy stochastic differential equations (SDE) that describe the evolution of infinitely deep neural networks over time (depth). The connection with SDEs sheds light on properties of very deep neural networks in a general framework, which includes finitely wide neural networks and correlated distributions for network's parameters. In particular,  \cite{pelux2020sde} showed that the limiting diffusion process is a well-behaved stochastic process in the sense that: i) it retains the dependency from the input signal; ii) it does not suffer from the perfect correlation constraint; iii) it does not collapse to a deterministic function nor it diverges. 

\subsection{Our contributions}

In this paper, we review the forward-propagation results of \cite{pelux2020sde}, and we extend them to convolutional ResNets. Then, we study analogous backward-propagation results, which relate to the fundamental problem of training ResNets. Stochastic gradient descent (SGD) is arguably the most common paradigm for training neural networks \citep{robbins1951stochastic,bottou2018optimization}. Focusing on the gradient backward-propagation, in neural networks we face a large number of Jacobian matrix multiplications for computing the gradients with respect to the networks' parameters of the lowest layers. This may result in a vanishing (or exploding) gradient, that is the gradient's magnitude in the the lowest layers goes to zero (or grows unbounded) as the number of layers increases. As SGD relies on gradients to perform the updates of the parameters, a vanishing (or exploding) gradient is detrimental to the training performance. The information propagation literature covers this setting, with results qualitatively similar to the forward-propagation analysis  \citep{schoenholz2017deep,hayou2019impact,yang2017mean}. Our study is critical to the performance of the training. We show that the Jacobian matrix of any layer with respect to the input layer converges to a matrix-valued diffusion process, which is the solution of a matrix SDE. Moreover, under appropriate non-explosivity conditions such a matrix-valued limiting diffusion process is shown to be invertible with dynamics given by a related matrix-valeud SDE. These results imply that in large-depth limit the Jacobian of the final layer with respect to any layer is again well-behaved and that exploding gradients are not possible.

We conclude our study by extending the results of \cite{pelux2020sde}, as well as the corresponding backward-propagation results presented in this paper, to the general setting of infinitely deep and infinitely wide ResNets. For short, these neural networks are referred to as doubly infinite ResNets. In such a setting, we assume that network's parameters are i.i.d., and we focus on doubly infinite ResNets defined through a restricted class of activation functions. With regards to the forward-propagation analysis of doubly infinite ResNets, we show that the dynamic of the neural network simplifies, and quantities of interest are either analytically available or can be efficiently approximated by numerical computations. Moreover, we show that the distribution in function space of a doubly infinite ResNet converges to the distribution of a Gaussian process with an affine kernel. With regards to the backward-propagation analysis, the recent literature on the Neural Tangent Kernel (NTK) investigates problem of training a neural network by means of gradient descent with infinitesimally small learning rate and quadratic loss for infinitely wide neural networks \citep{jacot2018neural,arora2019exact}. We establish a connection with this line of research, showing that for doubly infinite ResNets the NTK at initialization converges to an affine kernel. Under suitable assumptions for our class of doubly infinite ResNets, this result implies that weakly and fully trained neural networks with a large depth and width collapse to linear regression.

\subsection{Organization of the paper}

The paper is structured as follows. \cref{sec:preliminaries_diffusion} contains some preliminary results on diffusion process approximations of discrete-time stochastic processes, and the notation to be used throughout the paper. In \cref{sec:residual_diffusions} we review the forward-propagation results of \cite{pelux2020sde}, and we extend them to convolutional ResNets. \cref{sec:residual_diffusion_gradient} contains the backward-propagation analysis for infinitely deep ResNets, whereas \cref{sec:doubly_infinite} contains forward-propagation and backward-propagation analyses for doubly infinite ResNets. In \cref{sec:numerical_experiments} we present numerical experiments, and in \cref{sec:conclusion} we discuss our work and directions for future work. Proofs of our results and additional plots are deferred to \cref{app:proofs} and \cref{app:additional_plots}, respectively.

\section{Preliminaries on diffusion process approximations, and notation} \label{sec:preliminaries_diffusion}

We recall assumptions and results for diffusion process approximations of discrete-time stochastic processes, which are at the basis of the work of \cite{pelux2020sde}. Let $\xdt_{l}$, for $l=1,\dots,L$, denote the $l$-th layer of a neural network with $L$ layers, and let $\xdt_0$ be the network's input; we refer to the next section for a precise description of $\xdt_l$ in the context of neural networks, and in particular for ResNets. As we consider limiting continuous-time stochastic process, we re-index $\xdt_0,\xdt_1,\dots,\xdt_L$ on a discrete-time scale. In particular, let $T > 0$ be a terminal time, $\Dt = T/L$, for each $L$ we establish a correspondence between discrete indices $l \in \mathbb{Z}_{+}$ and discrete times $t \in \R_{+}$ by $l = 0,1,\dots,L \leftrightarrow t = 0,\,\Dt,\,2\Dt,\,\dots,\,T$.  Without loss of generality, we consider a neural network with input $\xdt_0$ and layers $\xdt_{\Dt},\dots,\xdt_T$, with $\xdt_t$ being a generic layer. Let $p(\xdt_T\,|\,\xdt_0)$ denote the conditional distribution of the network's output given the input for a neural network at initialization. To enforce desirable properties on $p(\xdt_T\,|\,\xdt_0)$, the strategy of \cite{pelux2020sde} consists in having the neural networks to converge, as $L$ goes to infinity, i.e. $\Dt \downarrow 0$, to a continuous-time stochastic process on the finite time interval $[0,T]$. In this case, for $L$ large enough, the conditional distribution $p(\xdt_T\,|\,\xdt_0)$ is close to the distribution of the limiting process at  $T$ given the same $\xdt_0$, with the limiting process being chosen in such a way to  make the transition density function well behaved. Among possible constructions of continuous-time stochastic processes as limits of discrete-time processes, here we consider the case where the limiting stochastic process has continuous paths. In all the neural network architectures considered in the present paper, each network's layer depends exclusively on the previous one, and hence $\xdt_t$ has the Markov property. These conditions identify diffusion processes \citep{stroock2006multidimensional} that are continuous-time Markov processes with continuous paths as natural candidates for the limiting process.

Let $\xdt_t$ denote a generic $D$-dimensional discrete-time Markov process, and let $\Delta \xdt_t = \xdt_{t+\Dt} - \xdt_t$ be the corresponding forward increments of the stochastic process. We recall assumptions that imply the convergence of $\xdt_t$ to a limiting diffusion process solution of a SDE. Such a SDE is referred to as the limiting SDE. In particular, it is implicit that the conditional distribution $p(\xdt_{t+\Dt}\,|\,\xdt_t)$ depends on $\Dt$ for the limiting diffusion process to exist as required.

\begin{assumption}[instantaneous mean function and covariance function] \label{ass:inf_coeff}
There exist a function $\mux(x): \R^D \rightarrow \R^D$ and a function $\sigmax^2(x): \R^D \rightarrow \R^{D \times D}$ such that for some $\delta > 0$ it holds
\begin{equation}
\lim_{\Dt \downarrow 0} \frac{\E[\Delta \xdt_t \,|\, \xdt_t]}{\Dt}   = \mux(\xdt_t)\label{eq:mu_x},
    \end{equation}
    \begin{equation}
\lim_{\Dt \downarrow 0} \frac{\V[\Delta \xdt_t \,|\, \xdt_t]}{\Dt}   = \sigmax^2(\xdt_t)\label{eq:sigma_x}
\end{equation}
and
\begin{equation}
 \lim_{\Dt \downarrow 0} \frac{\E[(\Delta \xdt_t)^{2 + \delta} \,|\, \xdt_t]}{\Dt} = 0\label{eq:continuity_x}
\end{equation}
uniformly on compact sets of $\mathbb{R}^D$ for each component, where $\mux(x)$ and $\sigmax^2(x)$ are continuous functions, and $\sigmax^2(x)$ is positive semi-definite, i.e. $\sigmax^2(x) = \sigmax(x) \sigmax(x)\tran$ for $\sigmax(x): \R^D \rightarrow \R^{D \times D}$.
\end{assumption}

The infinitesimal evolution of the diffusion processes considered in the present work is characterized by their instantaneous mean vector \cref{eq:mu_x} and instantaneous covariance matrix \cref{eq:sigma_x}. That is, the first two limits in \cref{ass:inf_coeff} pinpoint the form of the limiting SDE. The condition \cref{eq:continuity_x} represents a technical condition, in the sense that it allows us to consider the limits \cref{eq:mu_x} and \cref{eq:sigma_x} instead of their truncated version. We refer to \cite{nelson1990arch} for additional details on \cref{ass:inf_coeff} and related assumptions. The next theorem establishes that, under \cref{ass:inf_coeff} and additional assumptions, in the limit of $\xdt_t$ can be embedded in the solution of an SDE.

\begin{theorem} \label{thm:sde_convergence}
Under \cref{ass:inf_coeff}, extend the discrete-time stochastic process $\xdt_t$ to the stochastic process $\overline{\xdt}_t$ on $t \in [0,T]$ by continuous-on-right step-wise-constant interpolation of $\xdt_t$, i.e.
\begin{equation} \label{eq:sde_discrete_interpolated}
    \overline{\xdt}_t = \xdt_u\1_{u \leq t < u + \Dt}\qquad u \in \{0,\Dt,2\Dt,\dots,T\}.
\end{equation}
According to the construction \eqref{eq:sde_discrete_interpolated}, consider the $D$-dimensional SDE on $[0,T]$ with initial value $x_0 = \xdt_0$, drift vector $\mux(x)$ given by \cref{eq:mu_x}, and diffusion matrix $\sigmax(x)$ given by a square root of \cref{eq:sigma_x}:
\begin{equation} \label{eq:sde}
    dx_t = \mux(x_t) dt + \sigmax(x_t) dB_t,
\end{equation}
where $B_t$ is a $D$-dimensional Brownian motion (BM) with independent components, and \cref{eq:sde} means that
\begin{equation*}
    x_T = x_0 + \int_0^T \mux(x_t) dt + \int_0^T \sigmax(x_t) dB_t,
\end{equation*}
with the first and the second integral being a Riemann integral and an Ito integral, respectively. If the SDE \cref{eq:sde} admits a weak solution, and if this solution is unique in law and non-explosive, then the stochastic process $\overline{\xdt}_t$ defined in \cref{eq:sde_discrete_interpolated} converges in distribution to the solution of the SDE \cref{eq:sde}. This result still holds true for an independent and square integrable random variable $\xdt_0 \sim p(\xdt_0)$, provided that the driving BM is independent of $\xdt_0$. In both cases the convergence in distribution is on $\mathcal{D}([0,\infty),\mathbb{R}^D)$, the space of $\mathbb{R}^D$-valued processes on $[0,\infty)$ which are continuous from the right with finite left limits, endowed with the Skorohod metric \citep[Chapter 3]{billingsley1999convergence}.
\end{theorem}

We are dealing with three stochastic processes: i) the discrete-time stochastic process $\xdt_t$; ii) the continuous-time interpolation process $\overline{\xdt}_t$ of $\xdt_t$; iii) the limiting diffusion process $x_t$ of  $\xdt_t$. In Theorem \ref{thm:sde_convergence}, the continuous-time interpolation $\overline{\xdt}_t$ of $\xdt_t$ is introduced because we are seeking a continuous-time limiting process from a discrete-time stochastic process. The convergence established in Theorem \ref{thm:sde_convergence} is a strong convergence in the sense that it concerns with the convergence of the distribution of the stochastic process $(\overline{\xdt}_t)_{t \in [0,T]}$ as a stochastic object on the finite interval $[0,T]$ to the limiting diffusion process $(x_t)_{t \in [0,T]}$ as $L \uparrow \infty$. We consider weak solutions, as opposed to a strong solution, where it suffices that a BM $B_t$ can be found such that a solution can be obtained \citep[Section 5.3]{oksendal2003stochastic}. The focus on weak solutions and uniqueness in law of such a solutions, also known as weak uniqueness, is justified by our interest in the distributional properties of the limiting behavior of $\xdt_t$. In particular, weak solutions and uniqueness enable us to consider weaker requirements for attaining convergence of $\xdt_t$. In particular, consider the following discretization of the SDE \cref{eq:sde}:
\begin{equation} \label{eq:euler_sde}
    x_{t+\Dt} = x_t + \mux(x_t) \Dt + \sigmax(x_t) \zeta_t \sqrt{\Dt},
\end{equation}
where $\zeta_t$ is a $D$-dimensional random vector whose components are i.i.d. as standard Gaussian random variables (mean $0$ and variance $1$). Under suitable conditions, and in an appropriate sense \citep[Sections 10.2 and 14.1]{kloeden1992numerical}, it can be proved that the SDE \cref{eq:euler_sde} converges to the SDE \cref{eq:sde}. In the deterministic part of the SDE \cref{eq:euler_sde} we recognize the so-called Euler discretization of an ODE. In particular, \cref{thm:sde_convergence} postulates the existence and uniqueness in law of the weak solution of the limiting SDE, and its non-explosive behavior. This following assumptions state sufficient conditions for the postulated solution of the limiting SDE.

\begin{assumption}[weak solution and uniqueness in law on compact sets] \label{ass:existence_uniqueness}
The mean function $\mux(x)$ and the covariance function $\sigmax(x)$ are twice continuous and differentiable functions.
\end{assumption}

\begin{assumption}[non-explosive solution] \label{ass:non_explosivity}
There exists a finite $C > 0$ such that for each $x \in \R^{D}$:
\begin{equation*}
    \norm{\mux(x)} + \norm{\sigmax(x)} \leq C(1 + \norm{x}).
\end{equation*}
\end{assumption}

If \cref{ass:inf_coeff} and \cref{ass:existence_uniqueness} hold true and \cref{ass:non_explosivity} does not hold, then we still obtain convergence to the solution $x_{t}$ of the SDE \cref{eq:sde}. However, $x_t$ might diverge to infinity with positive probability on any time interval. We will return to this problem in the next section.

We conclude by introducing the main notation to be used throughout the paper: tensors (matrices, vectors) are indexed via subscripts ($u_i$, $u_{i,j}$, \dots), and we make use of $\bullet$ to index all elements and of $:$ to index ranges; we make no distinction between vectors and $n \times 1$ matrices, i.e. vectors are assumed to be column vectors; for a matrix $u$, $u\tran$ is its transpose, and if $u$ is square $\diag(u)$ is its diagonal vector and $\trace(u)$ is its trace; the norm of a vector $u$ is $\norm{u} = \sqrt{u\tran u}$; if $v$ is another vector their inner product is $\dprod{u}{v}=u \tran v$; the norm of a matrix $u$ is $\norm{u} = \sqrt{\trace(u \tran u)}$; for two matrices $u$ and $v$, $u v$ stands for the matrix multiplication, $u \otimes v$ for Kronecker's tensor product and $u \odot v$ for the element-wise product; we assume that matrix multiplication has higher precedence than element-wise product; for a tensor $u$, $\vect(u)$ is its vectorization (row-wise for matrices, with elements being traversed starting from the last dimension); we make use of $\I$ to denote the identity matrix and of $1$ to denote a vector of ones; for random variables $z$ and $w$, let $\var[z]$ be the variance of $z$, $\cov[z,w]$ be the covariance between $z$ and $w$ and $\rho[z,w]$ be their correlation; for random vectors $z \in \mathbb{R}^r$ and $w \in \mathbb{R}^c$ the $r \times c$ cross-covariance matrix $\C[z,w]$ is given by $\C[z,w]_{i,j} = \cov[z_i, w_j]$; the $r \times r$ covariance matrix of $z$ is thus $\V[z] = \C[z,z]$; the expectation $\E[z]$ of a random tensor $z$ is defined as the tensor of the expectations of its elements; for $D$-dimensional stochastic processes $z_t$ and $w_t$,  we make use of $[z]_t$ to denote the quadratic variation, which is a $D$-dimensional vector, and of $[z,w]_t$ to denote the quadratic covariation, which is a $D \times D$-dimensional matrix; for a differentiable function $f: \R^k \rightarrow \R$, $\nabla_u f(u) \in \R^k$ is its corresponding gradient vector; if $f: \R^k \rightarrow \R^m$, $J(f(u),u) \in \R^{m \times k}$ is its Jacobian matrix. We refer to \cref{tab:symb} for the complete notation.

\begin{table}
\caption{Notation: symbols and variables}
\label{tab:symb}
\centering
\begin{tabular}{lll}
\toprule
Symbol & Description \\
\midrule
$u_i$, $u_{i,j}$, $u_{i,j,l}$, \dots & vector, matrix, tensor indexing \\
& we use $:$ for ranges, $\bullet$ for all elements, $\patch$ for convolutional patches \\
$u\tran$ & matrix transpose \\
$\diag(u)$ & matrix diagonal \\
$\trace(u)$ & matrix trace \\
$\norm{u}$ & vector or matrix norm \\
$\dprod{u}{v}$ & inner product of 2 vectors \\
$u v$ & matrix multiplication \\
$u \otimes v$ & Kronecker's tensor product \\
$u \odot v$ & element-wise tensor product \\
$\vect(u)$ & tensor vectorization \\
$\I$ & identity matrix \\
$1$ & vector of ones \\
$\1$ & indicator function \\
$\var[z]$ & variance of a random variable \\
$\cov[z,w]$ & covariance between 2 random variables \\
$\rho[z,w]$ & correlation between 2 random variables \\
$\C[z,w]$ & cross-covariance matrix between 2 random vectors \\
$\V[z]$ & covariance matrix of a random vector \\
$\E[z]$ & expectation of a random tensor \\
$[z]_t$ & quadratic variation of a stochastic process \\
$[z,w]_t$ & quadratic covariation between 2 stochastic processes \\
$\nabla_u f(u)$ & function gradient \\
$J(f(u),u)$ & function jacobian \\
$\mathcal{N},\mathcal{MN},\mathcal{TN}$ & Gaussian, matrix-Gaussian, tensor-Gaussian distributions \\
\midrule
Variable & Description \\
\midrule
$D$ & width of each layer, indexed by $d$ \\
$L$ & number or layers, indexed by $l$ \\
$T$ & integration time, indexed by $t$ \\
$\Dt$ & time discretization interval \\
$\xdt_{l}$ & neural network layers \\
$\xdt_{t}$ & neural network layers over time \\
$\overline{\xdt}_t$ & interpolated neural network layers over time \\
$x_{t}$ & limit diffusion of $\overline{\xdt}_t$ (as $\Dt \downarrow 0$) \\
$\gdt_{t}$ & neural network output-input Jacobian over time \\
$\overline{\gdt}_t$ & interpolated neural network output-input Jacobian over time \\
$g_{t}$ & limit diffusion of $\overline{\gdt}_t$ (as $\Dt \downarrow 0$) \\
$\phi,\psi$ & activation functions \\
$A_t,a_t$ & neural network weights and biases \\
$W_t,b_t$ & neural network weights and biases diffusions \\
\bottomrule
\end{tabular}
\end{table}

\section{Infinitely deep ResNets}\label{sec:residual_diffusions}

\cite{pelux2020sde} investigated the implications of \cref{ass:inf_coeff}, \cref{ass:existence_uniqueness} and \cref{ass:non_explosivity} in the context of forward-propagation analysis of fully-connected ResNets, thus establishing an interplay between infinitely deep ResNets and solutions of SDEs, that is diffusion processes. In this section, we review and discuss the main results established in \cite{pelux2020sde}, and we extend them to the more general setting of convolutional ResNets. 

\subsection{Fully-connected ResNets}\label{sec:residual_diffusions_sub1}

We consider unmodified, albeit simplified, standard neural network architectures, which is in line with the research area of forward-propagation analysis \citep{poole2016exponential,schoenholz2017deep,hayou2019impact}. Consistently with Section \ref{sec:preliminaries_diffusion}, we consider a discrete-time stochastic process $\xdt_t \in \R^D$, which is assumed to be of constant dimensionality in order that $\Delta \xdt_t$ is well-defined. For \cref{ass:inf_coeff} to hold, we need $\Pr(\norm{\Delta \xdt_t}>\varepsilon\,|\,\xdt_t) \downarrow 0$ as $\Dt \downarrow 0$ for any $\varepsilon > 0$, that is we require the increments of the stochastic process $\xdt_t $ to vanish eventually. Intuitively, this is because the continuity of the paths of the limiting diffusion process.  According to the stochastic process $\xdt_t $, a fully-connected feedforward neural network is expressed by
\begin{displaymath}
\xdt_{t+\Dt} = f_t(\xdt_t) = \phi(A_t \xdt_t + a_t),
\end{displaymath}
for a nonlinear activation function $\phi: \R \rightarrow \R$ applied element-wise. We refer to $A_t \in \R^{D \times D}$ as weights and to $a_t \in \R^D$ as biases. Hence $\Delta \xdt_t = \phi(A_t \xdt_t + a_t) - \xdt_t$. Shrinking increments would imply that for all $x$, $\phi(A_t x + a_t)$ can be made arbitrarily concentrated around $x$ with a suitable choice of distributions for $(A_t,a_t)$. This cannot be achieved unless $\phi$ is linear or the distribution of $(A_t,a_t)$ depends on $x$. Indeed, fixing $x$ determines the values around which $(A_t,a_t)$ need to concentrate for the increments to vanish (if any), hence the increments will not vanish for a different $x' \neq x$, a fact that is most easily seen in the specific case where $(A_t,a_t)$ are scalars. 

The same lines of reasoning rules out the residual network architecture (ResNet), originally introduced in the work of \cite{he2016deep}. In particular, in the ResNet architecture we write $\xdt_{t+\Dt} = f_t(\xdt_t + r_t(\xdt_t))$. This leaves us with the identity ResNet of \cite{he2016identity} where we write
\begin{equation}\label{eq:shallow_block11}
    \xdt_{t+\Dt} = \xdt_t + r_t(\xdt_t)
\end{equation}
for some choice of $r_t$, the residual blocks, which we require to eventually vanish. Each $r_t$ results from an interleaved application of affine transforms and non-linear activation functions. \cite{pelux2020sde} considered the case of shallow residual blocks, such that \eqref{eq:shallow_block11} becomes
\begin{equation}\label{eq:shallow_block}
    \xdt_{t+\Dt} = \xdt_t + \phi(A_t \psi(\xdt_t) +  a_t)
\end{equation}
for two activation functions $\phi: \R \rightarrow \R$, $\psi: \R \rightarrow \R$ which are applied element-wise. We remark that the non-standard approach of using of 2 activation functions, i.e. $\phi$, $\psi$, is applied to cover the case of shallow residual blocks in full generality. For a shallow residual block $r_t$, the vanishing increments requirement is satisfied by having the distributions of weights $A_t$ and biases $a_t$ both concentrate around 0 provided that $\phi(0) = 0$. Furthermore, it proves to be advantageous to consider weights and biases given by increments of diffusion processes corresponding to solvable SDEs. Notice that the use of increments implies independence across layers, and the simplest parametrization corresponds to typical fully i.i.d. initializations used in practice.

\begin{assumption}[distributions of network's parameters and scaling] \label{ass:diffusion_parameters}
Let $W_t$ be a diffusion process in $\R^{D \times D}$ and let $b_t$ be a diffusion process in $\R^{D}$, which are defined as the solutions of
\begin{equation}\label{eq:param_diff_W}
dW_t = \mu^W dt + d\widetilde{W}_t
\end{equation}
with $d\vect(\widetilde{W}_t) = \sigma^W d\vect(B^W_t)$, and
\begin{equation}\label{eq:param_diffusion_b}
db_t = \mu^b dt + \sigma^b dB^b_t,
\end{equation}
respectively, where $B^W_t\in\R^{D \times D}$ and $B^b_t\in\R^{D}$ are independent BMs with independent components, $\mu^b \in \R^D$, $\sigma^W \in \R^{D^2 \times D^2}, \sigma^b \in \R^{D \times D}$, and $\Sigma^W = \sigma^W {\sigma^W}\tran$, $\Sigma^b = \sigma^b {\sigma^b} \tran$ are positive semi-definite. That is, $W_t$ and $b_t$ are matrix-valued and vector-valued diffusion processes, respectively, solutions of SDEs with deterministic time-homogeneous drift and diffusion coefficients.
\end{assumption}

Now, we consider the setting of \cref{ass:diffusion_parameters}. In particular, the discretizations of the diffusion processes $W_t$ and $b_t$ displayed in \eqref{eq:param_diff_W} and \eqref{eq:param_diffusion_b}, respectively, admit exact representations as
\begin{displaymath}
\Delta W_t = \mu^W \Dt + \varepsilon^W_t \sqrt{\Dt}
\end{displaymath}
and
\begin{displaymath}
   \Delta b_t = \mu^b \Dt + \varepsilon^b_t \sqrt{\Dt},
\end{displaymath}
where $\vect(\varepsilon^W_t) \overset{i.i.d.}{\sim} \mathcal{N}_{D^2}\big(0, \Sigma^W\big)$ and $\varepsilon^b_t \overset{i.i.d.}{\sim} \mathcal{N}_{D}\big(0, \Sigma^b\big)$ 
for $t=\Dt,\dots,T$, with $\mathcal{N}$ being the multivariate Gaussian distribution. According to the above discretizations of $W_t$ and $b_t$, we consider residual blocks where $A_t = \Delta W_t$ and $a_t = \Delta b_t$, and then we write the ResNet as follows
\begin{equation}
    \xdt_{t+\Dt} = \xdt_t + \phi(\Delta W_t \psi(\xdt_t) + \Delta b_t).\label{eq:resnet_fc}
\end{equation}
 \cref{ass:diffusion_parameters} covers the case where network's parameters are i.i.d. across layers according to a multivariate Gaussian distribution, up to the required scaling which is necessary to obtain the desired limiting diffusion process. By considering deterministic but time-dependent $\mu^W_t,\mu^b_t,\Sigma^W_t,\Sigma^b_t$ the extension to layer-dependent distributions is immediate. More generally, we can consider $W_t$ and $b_t$ driven by arbitrary SDEs. Moreover, dependencies across the parameters of different layers can be accommodated by introducing additional SDE-driven processes, driving the evolution of $W_t$ and $b_t$. We do not pursue further these directions in the present work. As for the activation functions, we require the following regularity assumptions.

\begin{assumption}[regularity of activation functions] \label{ass:activation}
The activation function $\psi: \R \rightarrow \R$ is continuously differentiable two times on $\R$. The activation function $\phi: \R \rightarrow \R$ is such that $\phi(0) = 0$ and, moreover, $\phi$ is continuously differentiable three times on $\mathbb{R}$ and its second and third derivatives have at most exponential tails growth, that is for some $k > 0$ it holds true
\begin{displaymath}
    \lim_{|x| \uparrow \infty} \frac{|\phi''(x)|}{e^{k |x|}} + \lim_{|x| \uparrow \infty} \frac{|\phi'''(x)|}{e^{k |x|}} < \infty.
\end{displaymath}
\end{assumption}

The assumption $\phi(0)=0$ and the smoothness assumptions on $\phi$ have been shown to be key requirements to achieve good signal propagation \citep{hayou2019impact,hayou2019training}. On the basis of \cref{ass:diffusion_parameters} and \cref{ass:activation}, we now report the main result of \cite{pelux2020sde}.

\begin{theorem} \label{thm:resnet_fc_sde}
Under \cref{ass:diffusion_parameters} and \cref{ass:activation}, \cref{ass:inf_coeff} with $\delta = 2$ holds true for the ResNet $\xdt_t$ defined in \cref{eq:resnet_fc}. The infinitesimal mean $\mu(x) $and covariance $\sigmax^2(x)$ functions are such that
\begin{displaymath}
\mux(x) = \phi'(0) (\mu^b + \mu^W \psi(x)) + \frac{1}{2} \phi''(0) \diag(\V[\varepsilon^W_t \psi(x) + \varepsilon^b_t\,|\,x])
    \end{displaymath}
    and
\begin{displaymath}    
\sigmax^2(x) = \phi'(0)^2 \V[\varepsilon^W_t \psi(x) + \varepsilon^b_t\,\|,x].
\end{displaymath}
Furthermore, \cref{ass:existence_uniqueness} is satisfied and, by means of \cref{thm:sde_convergence}, the continuous-time interpolation $\overline{\xdt}_t$ of the ResNet $\xdt_t$ converges in distribution to the solution on $[0,T]$ of the SDE
\begin{align} 
    &dx_t = \phi'(0) \big(\V[\varepsilon^W_t \psi(x_t) + \varepsilon^b_t\,|\,x_t]\big)^{1/2} dB_t\label{eq:resnet_fc_sde}\\
    &\quad\quad\quad+ \big(\phi'(0) (\mu^b + \mu^W \psi(x_t)) + \frac{1}{2} \phi''(0) \diag(\V[\varepsilon^W_t \psi(x_t) + \varepsilon^b_t\,|\,x_t])\big)dt\nonumber
\end{align}
with initial value $x_0 = \xdt_0$ where $B_t$ is a $D$-dimensional BM vector with independent components.
\end{theorem}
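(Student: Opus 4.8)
The plan is to verify the three limits in \cref{ass:inf_coeff} directly by Taylor-expanding the residual block and computing conditional moments of the Gaussian parameter increments, and then to check the smoothness and growth conditions required to invoke \cref{thm:sde_convergence}. Write $\Delta \xdt_t = \phi(\Delta W_t \psi(\xdt_t) + \Delta b_t)$ and abbreviate the pre-activation argument by $\xi_t = \Delta W_t \psi(\xdt_t) + \Delta b_t \in \R^D$. Conditionally on $\xdt_t = x$, the vector $\xi_t$ is Gaussian with mean $\mu^W \psi(x)\Dt + \mu^b \Dt = O(\Dt)$ and covariance $\V[\varepsilon^W_t \psi(x) + \varepsilon^b_t \mid x]\,\Dt = O(\Dt)$; in particular each component $\xi_{t,i}$ has conditional moments of order $\Dt^{1/2}$ (odd-ish, dominated by the Gaussian noise) and $\Dt$ (second moment), with higher even moments of order $\Dt^{m/2}$. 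This is the scaling that makes everything work.

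First I would do the componentwise Taylor expansion $\phi(\xi_{t,i}) = \phi'(0)\xi_{t,i} + \tfrac12\phi''(0)\xi_{t,i}^2 + \tfrac16\phi'''(\theta_i)\xi_{t,i}^3$ with $\theta_i$ between $0$ and $\xi_{t,i}$ (using $\phi(0)=0$). Taking conditional expectation: the linear term gives $\phi'(0)(\mu^W\psi(x)+\mu^b)\Dt$, the quadratic term gives $\tfrac12\phi''(0)\,\E[\xi_{t,i}^2\mid x] = \tfrac12\phi''(0)\big(\V[\varepsilon^W_t\psi(x)+\varepsilon^b_t\mid x]_{ii}\Dt + O(\Dt^2)\big)$, and the cubic remainder is $o(\Dt)$ after dividing by $\Dt$ — this last point is where the exponential-tail bound on $\phi'''$ in \cref{ass:activation} is used, via a Cauchy–Schwarz / Gaussian moment-generating-function argument to control $\E[|\phi'''(\theta_i)|\,|\xi_{t,i}|^3 \mid x]$ uniformly on compacts. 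Dividing by $\Dt$ and letting $\Dt\downarrow 0$ yields \cref{eq:mu_x} with the stated $\mu_x(x)$. For the covariance \cref{eq:sigma_x}, I would compute $\E[\Delta\xdt_{t,i}\,\Delta\xdt_{t,j}\mid x]$: to leading order this is $\phi'(0)^2\,\E[\xi_{t,i}\xi_{t,j}\mid x] = \phi'(0)^2\,\C[\varepsilon^W_t\psi(x)+\varepsilon^b_t]_{ij}\Dt + O(\Dt^{3/2})$, while the product of the means is $O(\Dt^2)$; the cross-terms between the linear and quadratic pieces are $O(\Dt^{3/2})$ by Gaussian moment bounds, so after dividing by $\Dt$ the limit is $\phi'(0)^2\V[\varepsilon^W_t\psi(x)+\varepsilon^b_t\mid x]$. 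Positive semi-definiteness is automatic since it is a variance. For \cref{eq:continuity_x} with $\delta = 2$, I would bound $\E[\|\Delta\xdt_t\|^4\mid x]$; since $\|\Delta\xdt_t\| \le \sum_i |\phi(\xi_{t,i})|$ and $|\phi(\xi_{t,i})| \le C(|\xi_{t,i}| + |\xi_{t,i}|^2 e^{k|\xi_{t,i}|})$ by the growth assumption on $\phi''$ (hence on $\phi'$ and $\phi$), the fourth moment is $O(\Dt^2)$ using Gaussian moments and the boundedness of the MGF of $\xi_{t,i}$ for small $\Dt$; dividing by $\Dt$ gives something $O(\Dt)\to 0$. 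Uniformity on compacts in all three limits follows because $\psi$ is locally bounded (so $\psi(x)$ ranges over a compact set when $x$ does) and all error bounds depend on $x$ only through $\|\psi(x)\|$ and the parameter constants.

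Second, continuity of $\mu_x$ and $\sigma_x^2$: both are polynomial expressions in the entries of $\psi(x)$, which is continuous by \cref{ass:activation}; this gives item (ii). For \cref{ass:existence_uniqueness} I would note that $\V[\varepsilon^W_t\psi(x)+\varepsilon^b_t\mid x]$ is a quadratic form in $\psi(x)$ with coefficients determined by $\Sigma^W, \Sigma^b$; since $\psi$ is $C^2$, both $\mu_x$ and this covariance matrix are $C^2$ in $x$. The remaining point — that a $C^2$ square root $\sigma_x(x)$ of the covariance can be chosen — I would handle by exhibiting an explicit square root rather than appealing to a general smooth-square-root lemma: writing $\varepsilon^W_t\psi(x)+\varepsilon^b_t = M(x)\eta_t$ for a fixed Gaussian vector $\eta_t$ and an explicit matrix $M(x)$ built from $\psi(x)$ (this is possible because $\varepsilon^W_t\psi(x)$ is linear in $\varepsilon^W_t$ with coefficients depending on $\psi(x)$), so that $\phi'(0)M(x)(\text{const})$ serves as $\sigma_x(x)$ and inherits $C^2$ regularity from $\psi$. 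Then \cref{thm:sde_convergence} applies and delivers convergence in law of $\overline{\xdt}_t$ to the solution of \cref{eq:sde} with these coefficients, which is exactly \cref{eq:resnet_fc_sde} after substituting the computed $\mu_x$ and $\sigma_x^2$.

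The main obstacle I anticipate is the control of the Taylor remainders under only exponential-tail bounds on $\phi''$ and $\phi'''$: one must show that terms like $\E[|\phi'''(\theta_{i})|\,|\xi_{t,i}|^3 \mid x]$ and $\E[|\phi(\xi_{t,i})|^4\mid x]$ are $O(\Dt^{3/2})$ and $O(\Dt^2)$ respectively, uniformly over $x$ in a compact set. The clean way is to bound $|\phi'''(\theta_i)| \le c\,(1 + e^{k|\xi_{t,i}|})$ (since $\theta_i$ lies between $0$ and $\xi_{t,i}$ and $\phi'''$ has at most exponential growth), then apply Cauchy–Schwarz and the fact that for a Gaussian $\xi_{t,i}$ with variance $O(\Dt)$ and mean $O(\Dt)$ one has $\E[e^{2k|\xi_{t,i}|}\mid x] \le 2\E[e^{2k\xi_{t,i}}\mid x] + 2\E[e^{-2k\xi_{t,i}}\mid x] = O(1)$ uniformly for $\Dt$ small and $x$ in a compact set, while $\E[|\xi_{t,i}|^6\mid x] = O(\Dt^3)$. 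Combining gives the required rates. Everything else — the algebra of matching moments, identifying the drift and diffusion coefficients, and the verification of \cref{ass:existence_uniqueness} — is routine once this moment-control step is in place. I would also remark that \cref{ass:non_explosivity} need not hold in general (e.g. if $\psi$ grows super-linearly), so the theorem statement correctly only claims convergence via \cref{thm:sde_convergence} under the postulated existence/uniqueness, not global non-explosivity.
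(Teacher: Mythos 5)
Your proposal is correct and follows essentially the same route as the paper's proof: a componentwise Taylor expansion of $\phi$ around $0$ exploiting $\phi(0)=0$, Gaussian moment calculations for the increments $\Delta W_t\psi(x)+\Delta b_t$ (with the second moment computed in place of the variance since the means contribute only $O(\Dt^2)$), Cauchy--Schwarz combined with exponential-moment/MGF control of the $\phi''$, $\phi'''$ remainders to get uniformity on compacts, and then smoothness of the coefficients from $\psi\in C^2$ followed by an application of \cref{thm:sde_convergence}. The only genuine (and welcome) refinement is your explicit factorization $\varepsilon^W_t\psi(x)+\varepsilon^b_t = M(x)\eta_t$ to obtain a smooth dispersion coefficient, a point the paper's proof passes over by asserting \cref{ass:existence_uniqueness} directly from the $C^2$ dependence of $\V[\varepsilon^W_t\psi(x)+\varepsilon^b_t|x]$ on $\psi(x)$; since weak existence and uniqueness in law depend only on the drift and on $\sigma_x\sigma_x\tran$, your rectangular factorization serves the same purpose and is in the spirit of \cref{thm:resnet_fc_sde_2}.
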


Theorem \ref{thm:resnet_fc_sde} does not establish a direct connection between the limiting diffusion process $x_t$ and the driving sources of randomness provided by the diffusion processes $W_t$ and $b_t$. Since we are interested in the study of properties of ResNets in the function space, that is over multiple input signals, a brute force approach would require to establish limiting diffusion processes as in \cref{thm:resnet_fc_sde} for an enlarged discrete-time stochastic process $\xdt_t=[\xdt_t^{(1)} \cdots \xdt_t^{(N)}] \in \R^{DN}$ corresponding to a collection of $N$ initial values $\xdt_0=[\xdt_0^{(1)} \cdots \xdt_0^{(N)}]$. Instead, \cite{pelux2020sde} showed that the limiting SDE is equivalent in law to the solution of another SDE which preserves the dependency on the driving sources of randomness. From here on, let $\xdt_t^{(i)}$ and $\xdt_t^{(j)}$ denote ResNets corresponding to two initial values $\xdt_0^{(i)}$ and $\xdt_0^{(j)}$, respectively. Moreover, let $x_t^{(i)}$ and $x_t^{(j)}$ denote limiting diffusion processes corresponding to the same two initial values, i.e. $x_0^{(i)} = \xdt_0^{(i)}$ and $x_0^{(j)} = \xdt_0^{(j)}$ respectively. Hereafter, we  continue to make use of $\xdt_t$ for $\xdt_t^{(i)}$ and $x_t$ for $x_t^{(i)}$ when no confusion arises. Under \cref{ass:diffusion_parameters} and \cref{ass:activation}, the next corollary characterizes the limiting diffusion process of the ResNet $\xdt_t^{(i)}$ 

\begin{corollary} \label{thm:resnet_fc_sde_2}
Let $\xdt_t^{(i)}$ be the ResNet corresponding to the initial value $\xdt_0^{(i)}$. Under \cref{ass:diffusion_parameters} and \cref{ass:activation}, the limiting diffusion process $x_t^{(i)}$ of $\xdt_t^{(i)}$ is the solution on $[0,T]$ of the SDE
\begin{equation} \label{eq:resnet_fc_sde_2}
    dx_t^{(i)} = \phi'(0)(dW_t \psi(x_t^{(i)}) + db_t) + \frac{1}{2}\phi''(0)(d[W \psi(x^{(i)})]_t + d[b]_t),
\end{equation}
and
\begin{equation} \label{eq:resnet_fc_sde_2_cross}
    d[x^{(i)},x^{(j)}]_t = \phi'(0)^2(d[W\psi(x^{(i)}),W\psi(x^{(j)})]_t + d[b,b]_t).
\end{equation}
\end{corollary}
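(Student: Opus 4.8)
The plan is to verify that the proposed SDE \cref{eq:resnet_fc_sde_2} is equivalent in law to the SDE \cref{eq:resnet_fc_sde} from \cref{thm:resnet_fc_sde}, and then to read off the quadratic-covariation identity \cref{eq:resnet_fc_sde_2_cross} from the structure of the noise. The key observation is that in \cref{eq:resnet_fc_sde_2} the driving martingale is $\phi'(0)\,dW_t\,\psi(x_t^{(i)})$ plus $\phi'(0)\,db_t$, where $W_t,b_t$ are the fixed parameter diffusions of \cref{ass:diffusion_parameters}; since these have deterministic, time-homogeneous coefficients, the Ito integral $\int_0^t \sigma^W\,d\vect(B^W_s)$ contracted against $\psi(x_s^{(i)})$ is a continuous local martingale, and by the L\'evy characterization (or simply by computing its quadratic variation) it is equal in law, conditionally on the path $x^{(i)}$, to $\int_0^t \big(\V[\varepsilon^W_s\psi(x_s^{(i)})+\varepsilon^b_s\mid x_s^{(i)}]\big)^{1/2}\,dB_s$ for a $D$-dimensional standard BM $B$. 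This matches the diffusion coefficient in \cref{eq:resnet_fc_sde}.

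First I would make the noise term explicit. Writing $A_t\psi(x)+a_t$ with $A_t=\Delta W_t$, $a_t=\Delta b_t$, the infinitesimal increment of $dW_t\psi(x_t)+db_t$ has conditional mean $(\mu^W\psi(x_t)+\mu^b)\,dt$ and conditional covariance matrix $\V[\varepsilon^W_t\psi(x_t)+\varepsilon^b_t\mid x_t]\,dt$ (using that the Gaussian noise is independent of $x_t$ and that $\widetilde W$, $b$ have independent BM drivers). So the drift of \cref{eq:resnet_fc_sde_2} is $\phi'(0)(\mu^W\psi(x_t)+\mu^b)\,dt$ from the first-order term plus $\tfrac12\phi''(0)\diag\big(\V[\varepsilon^W_t\psi(x_t)+\varepsilon^b_t\mid x_t]\big)\,dt$ from the quadratic-variation term $\tfrac12\phi''(0)(d[W\psi(x)]_t+d[b]_t)$ — here one checks that $d[W\psi(x)]_t + d[b]_t$, the vector of quadratic variations of the components of $W_t\psi(x_t)+b_t$, equals $\diag\big(\V[\varepsilon^W_t\psi(x_t)+\varepsilon^b_t\mid x_t]\big)\,dt$. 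This reproduces exactly the drift $\mu_x$ of \cref{thm:resnet_fc_sde}, and the diffusion coefficient matches by the argument above; since by \cref{thm:resnet_fc_sde} the SDE \cref{eq:resnet_fc_sde} has a unique law, \cref{eq:resnet_fc_sde_2} is a valid representation of $x_t^{(i)}$.

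For \cref{eq:resnet_fc_sde_2_cross}: the finite-variation (drift) parts of \cref{eq:resnet_fc_sde_2} contribute nothing to quadratic covariation, so $d[x^{(i)},x^{(j)}]_t$ comes only from the martingale parts $\phi'(0)(dW_t\psi(x_t^{(i)})+db_t)$ and $\phi'(0)(dW_t\psi(x_t^{(j)})+db_t)$, and bilinearity of the covariation bracket together with $\phi'(0)^2$ pulled out gives \cref{eq:resnet_fc_sde_2_cross} directly. The subtle point here is that the \emph{same} driving processes $W_t$, $b_t$ appear for both inputs — this is precisely what the corollary buys us over the brute-force enlarged-system approach, and it is why the cross bracket $d[W\psi(x^{(i)}),W\psi(x^{(j)})]_t$ is generally nonzero, encoding the joint dynamics.

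I expect the main obstacle to be the rigorous justification that $x_t^{(i)}$ defined by \cref{eq:resnet_fc_sde_2} is genuinely the weak limit, rather than merely a process with the right infinitesimal moments: one must argue that \cref{eq:resnet_fc_sde_2} admits a (weak) solution whose $x^{(i)}$-marginal law coincides with the unique law from \cref{thm:resnet_fc_sde}. This is really a statement about constructing a coupling of the limiting diffusion with the parameter diffusions $W,b$; the cleanest route is to observe that the Euler scheme \cref{eq:resnet_fc} already realizes $\xdt_t$ as a functional of the \emph{discretized} $W,b$, pass to the limit jointly in $(\xdt_t, W_t, b_t)$ using \cref{thm:sde_convergence} applied to the enlarged (but decoupled in the parameters) system, and identify the limit martingale problem — the regularity in \cref{ass:activation} (three times differentiable $\phi$, $\psi\in C^2$, exponential tail control) is exactly what licenses the Taylor expansion of $\phi$ to second order with a controlled remainder, which is the technical heart of the identification.
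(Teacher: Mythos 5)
Your proposal is correct and follows essentially the same route as the paper: identify $d[W\psi(x)]_t + d[b]_t$ with $\diag(\V[\varepsilon^W_t\psi(x_t)+\varepsilon^b_t\mid x_t])\,dt$, match the drift and the quadratic (co)variation of \cref{eq:resnet_fc_sde_2} against those of \cref{eq:resnet_fc_sde}, conclude equivalence in law via the weak uniqueness already established in \cref{thm:resnet_fc_sde}, and obtain \cref{eq:resnet_fc_sde_2_cross} by direct computation on the martingale parts. Your closing remarks on rigorously coupling the limit with the parameter diffusions $(W_t,b_t)$ go beyond what the paper's proof spells out, but they do not change the argument's substance.
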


A direct consequence of \cref{thm:resnet_fc_sde} is that the distribution of the ResNet output given the input, i.e. $p(\xdt_T\,|\,\xdt_0)$, converges to the transition density $p(x_T\,|\,x_0)$ of the solution of \cref{eq:resnet_fc_sde_2}. In particular, as $T$ is finite, the dependency on the input does not vanish in the limit of infinite total depth $L$ and can be controlled via the distributions of network's parameters and the integration time $T$. The representations  \cref{eq:resnet_fc_sde} and \cref{eq:resnet_fc_sde_2} are complementary: depending on the situation it will prove advantageous to use one or the other. Theorem \ref{thm:resnet_fc_sde} and Corollary \ref{thm:resnet_fc_sde_2} are general in the sense that we allow for an arbitrary covariance structure between the elements of $\varepsilon^W_t$, i.e. an arbitrary (constant and deterministic) quadratic covariation for $W_t$. This makes it difficult to derive more explicit results, and is also an impractical approach as the parametrization requires $\mathcal{O}(D^4)$ elements. \cite{pelux2020sde} then considered more restrictive distribution assumptions with a more manageable $\mathcal{O}(D^2)$ parametrization cost.

\begin{assumption}[matrix Gaussian network's parameters] \label{ass:diffusion_parameters_matrix_variate}
Let $b_t,\mu^b,\sigma_b,B^b_t,\mu^W,B^W_t$ be defined as in \cref{ass:diffusion_parameters}, and let $W_t$ be a diffusion process in $\R^{D \times D}$, which is defined as the solution of 
\begin{displaymath}
    dW_t = \mu^W dt + \sigma^{W_O} dB^W_t \sigma^{W_I},
\end{displaymath}
where $\sigma^{W_O},\sigma^{W_I} \in \R^{D \times D}$ and $\Sigma^{W_O} = \sigma^{W_O} {\sigma^{W_O}} \tran$, $\Sigma^{W_I} = {\sigma^{W_I} }\tran \sigma^{W_I}$ are positive semi-definite matrices.
\end{assumption}

We consider the setting of  \cref{ass:diffusion_parameters_matrix_variate}. Under this setting the discretization of $W_t$ satisfies
\begin{displaymath}
\varepsilon^W_t \overset{i.i.d.}{\sim} \mathcal{MN}_{D,D}\big(0, \Sigma^{W_O}, \Sigma^{W_I} \big)
\end{displaymath}
for $t=\Dt,\dots,T$, where $\mathcal{MN}$ stands for the matrix Gaussian distribution. This is a direct consequence of the following fact: if $\zeta \sim \mathcal{MN}(0, \I, \I)$ then $A \zeta B \sim \mathcal{MN}(0, A A\tran, B\tran B)$. The fundamental property of  the $\mathcal{MN}$ distributions is that the covariance factorizes as $\cov(\varepsilon^W_{o,i},\varepsilon^W_{o',i'}) = \Sigma^{W_O}_{o,o'}\Sigma^{W_I}_{i,i'}$. The reader is referred to \cite{gupta1999matrix} for a comprehensive treatment of matrix Gaussian distributions and their properties. Under  \cref{ass:activation} and \cref{ass:diffusion_parameters_matrix_variate}, the next corollary characterizes the limiting diffusion process of the ResNet $\xdt_t^{(i)}$.

\begin{corollary} \label{thm:resnet_fc_matrix_variate}
 Let $\xdt_t^{(i)}$ be the ResNet corresponding to the initial value $\xdt_0^{(i)}$. Under \cref{ass:activation} and \cref{ass:diffusion_parameters_matrix_variate}, the limiting diffusion process $x_t^{(i)}$ of $\xdt_t^{(i)}$ is the solution on $[0,T]$ of the SDE
\begin{align}
    &dx_t^{(i)} = \phi'(0)\big((\mu^W \psi(x_t^{(i)}) + \mu^b) dt + \sigma^{W_O} dB^W_t \sigma^{W_I} \psi(x_t^{(i)}) + \sigma^b dB^b_t\big)\label{eq:resnet_fc_matrix_variate}\\ 
    &\quad\quad\quad+ \frac{1}{2}\phi''(0)\diag\big(\Sigma^b + \Sigma^{W_O} (\psi(x_t^{(i)})\tran \Sigma^{W_I} \psi(x_t^{(i)})) \big)dt\nonumber,
    \end{align}
  and  
    \begin{displaymath}
d[x^{(i)},x^{(j)}]_t = \phi'(0)^2\big(\Sigma^b + \Sigma^{W_O} \psi(x^{(i)}_t)\tran \Sigma^{W_I} \psi(x^{(j)}_t)\big)dt.
    \end{displaymath}
\end{corollary}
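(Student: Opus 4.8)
The plan is to read off \cref{thm:resnet_fc_matrix_variate} from \cref{thm:resnet_fc_sde_2}: it suffices to substitute the specialised dynamics $dW_t=\mu^W\,dt+\sigma^{W_O}\,dB^W_t\,\sigma^{W_I}$ and $db_t=\mu^b\,dt+\sigma^b\,dB^b_t$ of \cref{ass:diffusion_parameters_matrix_variate} into \eqref{eq:resnet_fc_sde_2} and \eqref{eq:resnet_fc_sde_2_cross}, and then evaluate the quadratic (co)variation terms using the factorised covariance $\cov(\varepsilon^W_{o,i},\varepsilon^W_{o',i'})=\Sigma^{W_O}_{o,o'}\Sigma^{W_I}_{i,i'}$ of the matrix Gaussian law recorded just below that assumption. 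As a consistency check, the drift and the instantaneous covariance can equivalently be recomputed from the formulas of \cref{thm:resnet_fc_sde} by evaluating $\V[\varepsilon^W_t\psi(x)+\varepsilon^b_t\mid x]$ under the $\mathcal{MN}$ assumption.

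The computation itself proceeds in three short moves. First, right-multiplying the It\^o differential of $W_t$ by the adapted vector $\psi(x^{(i)}_t)$ and adding $db_t$ separates $\phi'(0)(dW_t\psi(x^{(i)}_t)+db_t)$ into the drift $\phi'(0)(\mu^W\psi(x^{(i)}_t)+\mu^b)\,dt$ and the martingale $\phi'(0)\big(\sigma^{W_O}\,dB^W_t\,\sigma^{W_I}\psi(x^{(i)}_t)+\sigma^b\,dB^b_t\big)$, which is exactly the first line of \eqref{eq:resnet_fc_matrix_variate}. Second, writing $(\varepsilon^W_t\psi(x))_o=\sum_i\varepsilon^W_{t,o,i}\psi(x_i)$ and applying the covariance factorisation gives $\cov\!\big((\varepsilon^W_t\psi(x^{(i)}))_o,(\varepsilon^W_t\psi(x^{(j)}))_{o'}\big)=\Sigma^{W_O}_{o,o'}\,\psi(x^{(i)})\tran\Sigma^{W_I}\psi(x^{(j)})$, i.e. the cross-covariance is the $D\times D$ matrix $\Sigma^{W_O}$ scaled by the scalar $\psi(x^{(i)})\tran\Sigma^{W_I}\psi(x^{(j)})$; specialising to $i=j$ and adding the independent bias contribution $\Sigma^b$ gives $\V[\varepsilon^W_t\psi(x^{(i)})+\varepsilon^b_t\mid x^{(i)}]=\Sigma^b+\Sigma^{W_O}(\psi(x^{(i)})\tran\Sigma^{W_I}\psi(x^{(i)}))$, whose $\diag$ is precisely the $\tfrac12\phi''(0)$-correction term in \eqref{eq:resnet_fc_matrix_variate}. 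Third, by bilinearity of the quadratic covariation, independence of $B^W$ and $B^b$, and the fact that the \emph{same} Brownian motions drive every $x^{(i)}$, the same factorisation turns $d[W\psi(x^{(i)}),W\psi(x^{(j)})]_t+d[b,b]_t$ into $\big(\Sigma^b+\Sigma^{W_O}\psi(x^{(i)}_t)\tran\Sigma^{W_I}\psi(x^{(j)}_t)\big)\,dt$, which yields the claimed cross-variation after multiplication by $\phi'(0)^2$.

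I expect the only delicate point to be bookkeeping: making precise that in \eqref{eq:resnet_fc_sde_2} the object $dW_t\psi(x^{(i)}_t)$ denotes right-multiplication of the matrix/vector It\^o differential of $W_t$ by the random adapted vector $\psi(x^{(i)}_t)$, and that $[W\psi(x^{(i)})]_t$ is the quadratic variation of the associated stochastic integral $t\mapsto\int_0^t \sigma^{W_O}\,dB^W_s\,\sigma^{W_I}\psi(x^{(i)}_s)$ rather than of the product process $W_t\psi(x^{(i)}_t)$; once this convention is fixed, the matrix-Gaussian structure collapses the $\mathcal{O}(D^4)$ covariance of $\varepsilon^W_t$ into $\Sigma^{W_O}$ times a single input-dependent scalar and every displayed identity follows by direct substitution. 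Existence and uniqueness in law of the limiting SDE, together with the applicability of \cref{thm:sde_convergence}, are inherited from \cref{thm:resnet_fc_sde} since \cref{ass:diffusion_parameters_matrix_variate} is a special case of \cref{ass:diffusion_parameters}; in particular one also verifies that $\sigma^{W_O}\,dB^W_t\,\sigma^{W_I}\psi(x^{(i)}_t)+\sigma^b\,dB^b_t$ is a valid square-root of the instantaneous covariance in the sense of \eqref{eq:sde}, because its quadratic covariation matrix equals $\sigma_x^2(x^{(i)}_t)/\phi'(0)^2\,dt$.
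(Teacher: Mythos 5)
Your proposal is correct and follows essentially the same route as the paper: the paper also specialises \cref{thm:resnet_fc_sde_2} to \cref{ass:diffusion_parameters_matrix_variate} and computes the quadratic covariation entrywise via the factorisation $\cov(\varepsilon^W_{r,d},\varepsilon^W_{c,u})=\Sigma^{W_O}_{r,c}\Sigma^{W_I}_{d,u}$, arriving at $\Sigma^{W_O}_{r,c}\,\psi(x^{(i)}_t)\tran\Sigma^{W_I}\psi(x^{(j)}_t)$ exactly as you do. Your additional remarks on the drift separation and on inheriting well-posedness from \cref{thm:resnet_fc_sde} are consistent with, though not spelled out in, the paper's argument.
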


Under \cref{ass:diffusion_parameters_matrix_variate} we have that the covariance function $\V[\varepsilon^W_t \psi(x_t) + \varepsilon^b_t\,|\,x_t]$ is given by $\Sigma^b + \Sigma^{W_O} (\psi(x_t)\tran \Sigma^{W_I} \psi(x_t))$. In particular, the dependency on the state $x_t$ in Equation \cref{eq:resnet_fc_sde} goes through a linear transformation and a weighted inner product. This fact sheds light on the impact of introducing dependencies among row and columns of network's parameters $A_t = \Delta W_t$. Specifically, the matrix $\Sigma^{W_I}$ defines the structure of the inner weighted product, while the matrix $\Sigma^{W_O}$ defines how such transforms affect each dimension $d \in D$. \cite{pelux2020sde} completed their study by considering the simplest fully i.i.d. setting with the assumption of centered distributions for $W_t$ and $b_t$. Fully i.i.d. initializations are commonly used in training of neural networks. A scaling of the weights by $D^{-1/2}$ is also introduced; this is the same scaling used to obtain Gaussian process limits in infinitely wide networks \citep{neal1995bayesian,lee2018deep}. In \cref{sec:doubly_infinite} we show that such a scaling allows to study $D \uparrow \infty$.

\begin{assumption}[fully i.i.d. network's parameters] \label{ass:diffusion_parameters_iid}
Let $W_t$ be a diffusion process in $\R^{D \times D}$ and let $b_t$ be a diffusion process in $\R^{D}$, which are defined as the solutions of the SDSs
\begin{equation}\label{eq:param_diff_W_iid}
dW_t = \frac{\sigma_w}{\sqrt{D}} dB^W_t
\end{equation}
and
\begin{equation}\label{eq:param_diffusion_b_iid}
db_t = \sigma_b dB^b_t,
\end{equation}
respectively, where $B^W_t\in\R^{D \times D}$ and $B^b_t\in\R^D$ are independent BMs  and $\sigma_w>0,\sigma_b>0$ are scalars.
\end{assumption}

Now, we consider the setting of \cref{ass:diffusion_parameters_iid}. In particular, the discretizations of the diffusion processes $W_t$ and $b_t$ displayed in \eqref{eq:param_diff_W_iid} and \eqref{eq:param_diffusion_b_iid}, respectively, admit exact representations as
\begin{equation}\label{eq:fc_iid_discr_std1}
\Delta W_t = \varepsilon^W_t \frac{\sigma_w}{\sqrt{D}} \sqrt{\Dt}
 \end{equation}  
 and
 \begin{equation}\label{eq:fc_iid_discr_std2}
\Delta b_t = \varepsilon^b_t \sigma_b \sqrt{\Dt}
 \end{equation} 
where $\varepsilon^W_t \overset{i.i.d.}{\sim} \mathcal{MN}_{D,D}\big(0, \I_D, \I_D \big)$ and $\varepsilon^b_t \overset{i.i.d.}{\sim} \mathcal{N}_{D}\big(0, \I_D \big)$. Under \cref{ass:activation} and \cref{ass:diffusion_parameters_iid}, the next corollary characterizes the limiting diffusion process of the ResNet $\xdt_t^{(i)}$.

\begin{corollary} \label{thm:resnet_fc_iid}
et $\xdt_t^{(i)}$ be the ResNet corresponding to the initial value $\xdt_0^{(i)}$. Under \cref{ass:activation} and \cref{ass:diffusion_parameters_iid}, the limiting diffusion process $x_t^{(i)}$ of $\xdt_t^{(i)}$ is the solution on $[0,T]$ of the SDE
\begin{align}
    &dx_t^{(i)} = \phi'(0)\big(\frac{\sigma_w}{\sqrt{D}} dB^W_t \psi(x_t^{(i)}) + \sigma_b dB^b_t\big)\label{eq:resnet_fc_iid}\\ 
    &\notag\quad\quad\quad+ \frac{1}{2}\phi''(0)\big(\sigma_b^2 + \frac{\sigma_w^2}{D} \norm{\psi(x_t^{(i)})}^2) \big) \I_D dt,
    \end{align}
    and
\begin{displaymath}
d[x^{(i)},x^{(j)}]_t = \phi'(0)^2\big(\sigma_b^2 + \frac{\sigma_w^2}{D} \dprod{\psi(x^{(i)}_t)}{\psi(x^{(j)}_t)}\big) \I_D dt.
\end{displaymath}
\end{corollary}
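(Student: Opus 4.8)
The plan is to read off \cref{thm:resnet_fc_iid} as the specialization of \cref{thm:resnet_fc_matrix_variate} to the parameter choices imposed by \cref{ass:diffusion_parameters_iid}, with one genuine extra step for the noise term. First I would record that \cref{ass:diffusion_parameters_iid} is exactly \cref{ass:diffusion_parameters_matrix_variate} with $\mu^W = 0$, $\mu^b = 0$, $\sigma^{W_O} = (\sigma_w/\sqrt{D})\,\I_D$, $\sigma^{W_I} = \I_D$ and $\sigma^b = \sigma_b\,\I_D$, so that $\Sigma^{W_O} = (\sigma_w^2/D)\,\I_D$, $\Sigma^{W_I} = \I_D$ and $\Sigma^b = \sigma_b^2\,\I_D$; since \cref{ass:activation} is assumed and \cref{ass:diffusion_parameters_matrix_variate} now holds, \cref{thm:resnet_fc_matrix_variate} (hence also \cref{thm:resnet_fc_sde} and \cref{thm:resnet_fc_sde_2}) applies.

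Second, I would substitute. The Riemann drift carried by $\mu^W,\mu^b$ disappears; using $\psi(x_t^{(i)})\tran\Sigma^{W_I}\psi(x_t^{(i)}) = \norm{\psi(x_t^{(i)})}^2$, the It\^o-correction drift in \cref{thm:resnet_fc_matrix_variate} collapses to $\tfrac12\phi''(0)\,\diag\!\big((\sigma_b^2 + \tfrac{\sigma_w^2}{D}\norm{\psi(x_t^{(i)})}^2)\,\I_D\big)\,dt$, i.e. the constant vector with entries $\tfrac12\phi''(0)(\sigma_b^2 + \tfrac{\sigma_w^2}{D}\norm{\psi(x_t^{(i)})}^2)$, which is what the statement abbreviates with $\I_D$ in \cref{eq:resnet_fc_iid}. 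For the cross term, the covariation formula of \cref{thm:resnet_fc_matrix_variate} becomes $\phi'(0)^2\big(\sigma_b^2 + \tfrac{\sigma_w^2}{D}\dprod{\psi(x^{(i)}_t)}{\psi(x^{(j)}_t)}\big)\,\I_D\,dt$ directly, with no further manipulation; this already exhausts the second displayed equation of the corollary, since \cref{thm:resnet_fc_sde_2}/\cref{thm:resnet_fc_matrix_variate} keep the common driving $W,b$ and thus pin down the joint covariation.

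Third, it remains to rewrite the martingale part. After substitution the It\^o term of \cref{thm:resnet_fc_matrix_variate} reads $\phi'(0)\big((\sigma_w/\sqrt{D})\,dB^W_t\,\psi(x_t^{(i)}) + \sigma_b\,dB^b_t\big)$ with $B^W$ a $D\times D$ matrix BM, whereas \cref{eq:resnet_fc_iid} has the $D$-dimensional vector term $\phi'(0)(\sigma_w/\sqrt{D})\norm{\psi(x_t^{(i)})}\,dB^W_t$. I would argue this is an equality in law: the process $M_t = \int_0^t \sigma^{W_O}\,dB^W_s\,\sigma^{W_I}\psi(x_s^{(i)}) = (\sigma_w/\sqrt{D})\int_0^t dB^W_s\,\psi(x_s^{(i)})$ is a continuous local martingale and, since the entries of $B^W$ are independent scalar BMs, $d\langle M\rangle_t = (\sigma_w^2/D)\,\norm{\psi(x_t^{(i)})}^2\,dt\,\I_D$, with no cross terms between distinct output coordinates. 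By the representation theorem for continuous local martingales with absolutely continuous bracket — L\'evy's characterization after a time change, on an enlarged probability space if necessary — there is a $D$-dimensional standard BM $\widehat B$ with $M_t = (\sigma_w/\sqrt{D})\int_0^t \norm{\psi(x_s^{(i)})}\,d\widehat B_s$; together with the independent term $\sigma_b\,B^b$ this exhibits $x^{(i)}$ as a weak solution of \cref{eq:resnet_fc_iid}, and since \cref{ass:existence_uniqueness} (already verified in \cref{thm:resnet_fc_sde}) gives uniqueness in law, \cref{eq:resnet_fc_iid} is the law of $x^{(i)}$. The one place that needs care is exactly this reduction — checking that the bracket of the contracted matrix integral is the stated scalar multiple of $\I_D$ and that passing to a vector BM is harmless for the quantities at stake (the marginal law of $x^{(i)}$, the joint covariation already being handled in the second step); everything else is routine substitution.
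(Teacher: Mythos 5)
Your proof is correct and takes essentially the same route as the paper, which obtains \cref{thm:resnet_fc_iid} from \cref{thm:resnet_fc_matrix_variate} simply by setting $\sigma^b = \sigma_b \I_D$, $\sigma^{W_I} = \I_D$ and $\sigma^{W_O} = \sigma_w D^{-1/2}\I_D$. Your third step—rewriting the contracted matrix noise $dB^W_t\,\psi(x_t^{(i)})$ as $\norm{\psi(x_t^{(i)})}$ times a $D$-dimensional Brownian motion via the bracket computation and a representation/uniqueness-in-law argument—is a detail the paper leaves implicit in that substitution, and your handling of it is sound.
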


In all cases the activation function $\phi$ only impacts the dynamics through its local behavior at the origin, while this is not the case for the activation $\psi$. Under the setting of \cref{ass:diffusion_parameters_iid}, i.e. fully i.i.d. network's parameters, we have that $\V[\varepsilon^W_t \psi(x_t) + \varepsilon^b_t\,|\,x_t]$ is given by $\sigma_b^2 + \frac{\sigma_w^2}{D} \norm{\psi(x_t)}^2$. In particular, the dependency on the state $x_t$ in \cref{eq:resnet_fc_sde} goes only through the norm of $x_t$, which is permutation invariant in $d \in D$. Accordingly, the distribution of the stochastic processes $x_{t,d}$ is exchangeable across $d \in D$ if the distribution of $x_{0,d}$ is so. We will show in \cref{sec:doubly_infinite_diff} that, under \cref{ass:diffusion_parameters_iid}, as $D \uparrow \infty$ $x_{t,d}$ will become i.i.d. over $d$ if $x_{0,d}$ is so.

\begin{remark} According to Equation \cref{eq:resnet_fc_sde_2} and Equation \cref{eq:resnet_fc_sde_2_cross}, the joint evolution of the diffusion processes $x_t^{(i)}$ and $x_t^{(j)}$ corresponding to two inputs $x_0^{(i)}$ and $x_0^{(j)}$, respectively, is not perfectly correlated. This remains true also in the parameterizations of \cref{ass:diffusion_parameters_matrix_variate} and \cref{ass:diffusion_parameters_iid}. Thus in the limit of infinite total depth $L$ the distribution in function space does not suffer from the perfect correlation problem. The joint distribution $p(x_T^{(i)},x_T^{(j)}\,|\,x_0^{(i)},x_0^{(j)})$ is not Gaussian. We show in \cref{sec:doubly_infinite_diff} that we recover the Gaussian case as $D \uparrow \infty$ under the parametrization of \cref{ass:diffusion_parameters_iid}.
\end{remark}

\begin{remark} A standard time-change result for SDEs \citep[Theorem 8.5.7]{oksendal2003stochastic} implies that the time-scaling of an SDE is equivalent to multiplying the drift and the diffusion coefficients by the scaling constant and by the square root of the scaling constant, respectively. Furthermore, according to Equation \cref{eq:resnet_fc_sde}, it is possible to compensate changes in the integration time $T$ with changes in the ``hyper-parameters'' $\mu^b,\mu^W,\Sigma^b,\Sigma^W$ in \cref{ass:diffusion_parameters} to leave the dynamics of \cref{eq:resnet_fc_sde} invariant. These observations remain true also in the parameterizations of \cref{ass:diffusion_parameters_matrix_variate} and \cref{ass:diffusion_parameters_iid}. Therefore, without loss of generality, we can restrict to $T=1$.
\end{remark}

\begin{remark}\label{rem:explosion} Without further assumptions, the solutions to the limiting SDEs can be explosive solutions. \cref{ass:non_explosivity} is satisfied under all considered distributional assumptions for network's parameters if either: i) $\psi$ exhibits at most square-root growth, in particular $\psi$ is bounded; or ii) $\psi$ exhibits at most linear growth, in particular $\psi$ is the identity function, and $\phi''(0)=0$, in particular $\phi = \tanh$. We will show in \cref{sec:doubly_infinite_diff} that, under \cref{ass:diffusion_parameters_iid}, as $D \uparrow \infty$ in the case of $\phi''(0) \neq 0$ with $\psi$ the identity function, the explosion time becomes deterministic.
\end{remark}

\begin{remark} The limiting diffusion processes that we have obtained are based on smoothness assumptions for $\phi$. Given the popularity of the $\ReLU$ activation function $\phi(a) = \max(0, a)$, we consider here a brief analysis which includes it. In particular, we assume that $\phi(a)$ is positively homogeneous, i.e. $\phi(\alpha a) = \alpha \phi(a)$ for $\alpha > 0$, $h$ is random variable, and $\gamma > 0$ then: $\E[\phi(h \Dt^\gamma)/\Dt] = \E\left[\phi(h) \right] \Dt^{\gamma - 1}$ and $\E[\phi(h \Dt^\gamma)^2/\Dt] = \E\left[\phi(h)^2\right] \Dt^{2\gamma - 1}$. Comparing these results with \cref{eq:mu_x} and \cref{eq:sigma_x}, we see that unless $\E[\phi(h)] = 0$, the choice of $\gamma = 1/2$ would result in the drift term blowing up. The alternative of choosing $\gamma = 1$ recovers a non stochastic limit which can be interpreted as a particular form of \cite{chen2018neural}. The  positive homogeneity of $\ReLU$ activations makes equivalent to modify the recursion or reparameterize the parameter.
\end{remark}

We have considered $\xdt_0 \in \R^D$ to be the input of the ResNet. A neural network acts as a function approximator to be fitted to some dataset $\mathcal{D} = (\mathcal{Z},\mathcal{Y}) = \{(z^{(i)},y^{(i)})\}_{i=1}^N$ of size $N$ where $z^{(i)} \in \R^Z$ represents an input and $y^{(i)} \in \R^Y$ represents the corresponding output. Classification problems can be framed in this setting if we use a one-hot representation for $y^{(i)}$. In general, there can be a mismatch between $D,Z$ and $Y$, making it is necessary to introduce adaptation layers $z^{(i)} \mapsto \xdt_0^{(i)}$ and $\xdt_T^{(i)} \mapsto \widehat{y}^{(i)}$ where $\widehat{y}^{(i)}$ is the network prediction for $z^{(i)}$. As for $\xdt_t$, we denote a single data-point $(z^{(i)},y^{(i)})$ with $(z,y)$ when no confusion arises.

\subsection{Convolutional ResNets}\label{sec:cnn}

We extend the main results of \cite{pelux2020sde} to the more general setting of convolutional neural networks (CNNs). Such extension relies on the equivalence between convolutional transformations (either at a given position, or over all positions) and specific forms of matrix multiplication. For the sake of simplicity and clarity in exposition, we present our results for 2D convolutions and square filters. Analogous results follow in the more general setting. CNNs are best described by keeping the features, height and width dimensions separated, in which case $\xdt_t$ is a three-dimensional tensor. This does not cause issues to our analysis, since we can consider the vectorization $\vect(\xdt_t)$ which allows us to refer to definitions and results of \cref{sec:preliminaries_diffusion}. We denote the input image to the convolutional neural network and its layers with $\xdt_t$, $t=0,\Dt,\dots,T$. As before $\xdt_t$ needs to be of fixed dimensionality: $\xdt_t \in \R^{U \times V \times D}$, $D$ being the number of channels, and $U$ and $V$ being respectively the height and the width. 

We consider square filters of spatial length $K$, with $K$ being odd, in which case the off-center range of the filter is $E = (K - 1)/2$. Assuming unitary strides in both height and width dimensions, constant dimensionality is achieved by padding the width and height dimensions of $\xdt_t$, $t > 0$, with $E$ pixels borders. The padding can be performed arbitrarily here: typically the values next to the boarder are copied or paddings have the same value of a background reference level. We enumerate the set of $P = UV$ positions, where positions are ordered in row-wise manner (the ordering does not affect the results as long as it is the same everywhere). A convolutional transform $x \in \R^{(U \times V \times D)} \mapsto y \in \R^{(U \times V \times D)}$ is obtained by applying (convolving) the same filter $W \in \R^{D \times (U \times V \times D)}$ to the extracted patches $x_{\patch p} \in \R^{(U \times V \times D)}$ by matrix multiplication: $y_p = W x_{\patch p}$, $y_p \in \R^D$ for each $p=1,\dots,P$. Parentheses indicate how the dimensions are flattened (vectorized), and each patch is given by $x_{\patch p} = x_{\patch p,\bullet} = x_{u-E:u+E,v-E:v+E,\bullet}$ for position $p=(u,v)$, $u=1,\dots,U$, $v=1,\dots,V$. We incorporate the padding in the patch extraction operation: indexing outside the allowed ranges (which happens for positions at the boarders) returns the padded values. More generally a bias term $b \in \R^D$ can be included resulting in $y_p = W x_{\patch p} + b$. See  \cite{dumoulin2016guide} and references therein for a comprehensive account. For convenience let $F = UVD$ denote the extracted patch size. We begin with the most generic parametrization for CNNs covered in this work, which corresponds to \cref{ass:diffusion_parameters} for the fully-connected case.

\begin{assumption}[distributions of network's parameters and scaling] \label{ass:diffusion_parameters_cnn}
Let $W_t$ be a diffusion process in  $\R^{D \times K \times K \times D}$ and let $b_t$ be the diffusion process in $\R^{D}$, which are defined as the solutions of
\begin{equation}\label{eq:param_diff_W_cnn}
dW_t = \mu^W dt + d\widetilde{W}_t
\end{equation}
with $d\vect(\widetilde{W}_t) = \sigma^W d\vect(B^W_t)$, and 
\begin{equation}\label{eq:param_diffusion_b_cnn}
db_t = \mu^b dt + \sigma^b dB^b_t
\end{equation}
where $B^W_t\in\R^{D \times K \times K \times D}$ and $B^b_t\in\R^{D}$ are independent BMs with independent components, $\mu^W \in \R^{D \times K \times K \times D}, \mu^b \in \R^D$, $\sigma^W \in \R^{DF \times DF}, \sigma^b \in \R^{D \times D}$, and $\Sigma^W = \sigma^W {\sigma^W}\tran$, $\Sigma^b = \sigma^b {\sigma^b} \tran$ are positive semi-definite. That is, $W_t$ and $b_t$ are tensor-valued and vector-valued diffusion processes, respectively, solutions of SDEs with deterministic time-homogeneous  drift and diffusion coefficients.
\end{assumption}

Now, we consider the setting of \cref{ass:diffusion_parameters_cnn}. In particular, the discretizations of the diffusion processes $W_t$ and $b_t$ displayed in \eqref{eq:param_diff_W_cnn} and \eqref{eq:param_diffusion_b_cnn}, respectively, admit exact representations as
\begin{displaymath}
\Delta W_t = \mu^W \Dt + \varepsilon^W_t \sqrt{\Dt}
\end{displaymath}
and
\begin{displaymath}
\Delta b_t = \mu^b \Dt + \varepsilon^b_t \sqrt{\Dt},
\end{displaymath}
where $\vect(\varepsilon^W_t) \overset{i.i.d.}{\sim} \mathcal{N}_{DF}\big(0, \Sigma^W\big)$ and $\varepsilon^b_t \overset{i.i.d.}{\sim} \mathcal{N}_{D}\big(0, \Sigma^b\big)$ for $t=\Dt,\dots,T$ where $\mathcal{N}$ stands for the multivariate Gaussian distribution. As in \cite{pelux2020sde}, we consider shallow residual blocks and two activation functions. This leads to write the ResNet as follows
\begin{equation}\label{eq:resnet_cnn}
    \xdt_{t+\Dt,p} = \xdt_{t,p} + \phi(\Delta W_t \psi(\xdt_{t,\patch p}) + \Delta b_t)\qquad(p=1,\dots,P)
\end{equation}
where $\Delta W_t \psi(\xdt_{t,\patch p})$ is computed by means of matrix multiplication as we have explained above. The next theorem states our main convergence result in the setting of convolutional ResNets. In particular, the next theorem provides the convolutional counterpart of Theorem \ref{thm:resnet_fc_sde}.

\begin{theorem} \label{thm:resnet_cnn_sde}
Under \cref{ass:diffusion_parameters_cnn} and \cref{ass:activation}, \cref{ass:inf_coeff} with $\delta = 2$ holds true for the ResNet $\xdt_t$ defined in \cref{eq:resnet_cnn}. The infinitesimal mean $\mu(x)$ and covariance $\sigma^{2}(x)$ functions are such that
\begin{displaymath}
\mux(x) = \begin{bmatrix} \mu_{\patch}(x_{\patch 1}) \\ \vdots \\ \mu_{\patch}(x_{\patch P}) \end{bmatrix}
\end{displaymath}
where
\begin{displaymath}
\mu_{\patch}(x_{\patch p}) = \phi'(0) (\mu^b + \mu^W \psi(x_{\patch p})) + \frac{1}{2} \phi''(0) \diag(\V[\varepsilon^W_t \psi(x_{\patch p}) + \varepsilon^b_t\,|\,x_{\patch p}]),
\end{displaymath}
and
\begin{align*}
&\sigmax^2(x) = \phi'(0)^2 \begin{bmatrix} 
    \sigma^2_{\patch}(x_{\patch 1},x_{\patch 1}) & \cdots & \sigma^2_{\patch}(x_{\patch 1},x_{\patch P})\\
    \vdots & \vdots & \vdots \\
    \sigma^2_{\patch}(x_{\patch P},x_{\patch 1}) & \cdots & \sigma^2_{\patch}(x_{\patch P},x_{\patch P}) \end{bmatrix}
    \end{align*}
where
    \begin{displaymath}
\sigma^2_{\patch}(x_{\patch p},x_{\patch p'}) = \C[\varepsilon^W_t \psi(x_{\patch p}) + \varepsilon^b_t, \varepsilon^W_t \psi(x_{\patch p'}) + \varepsilon^b_t\,|\,x_{\patch p},x_{\patch p'}].
\end{displaymath}
Furthermore, \cref{ass:existence_uniqueness} is satisfied and, by means of  \cref{thm:sde_convergence}, the continuous-time interpolation $\vect(\overline{\xdt_t})$ of the ResNet $\vect(\xdt_t)$ converges in distribution to the solution on $[0,T]$ of the SDE
\begin{align}
&d\vect(x_t) \label{eq:resnet_cnn_sde}\\
&\quad= \phi'(0) \begin{bmatrix} 
    \sigma^2_{\patch}(x_{t,\patch 1},x_{t,\patch 1}) & \cdots & \sigma^2_{\patch}(x_{t,\patch 1},x_{t,\patch P})\\
    \vdots & \vdots & \vdots \\
    \sigma^2_{\patch}(x_{t,\patch P},x_{t,\patch 1}) & \cdots & \sigma^2_{\patch}(x_{t,\patch P},x_{t,\patch P}) 
\end{bmatrix}^{1/2} dB_t +\begin{bmatrix} \mu_{\patch}(x_{t,\patch 1}) \\ \vdots \\ \mu_{\patch}(x_{t,\patch P}) \end{bmatrix} dt\nonumber
\end{align}
with initial value $x_0 = \xdt_0$ where $B_t$ is a $PD$-dimensional BM vector with independent components.
\end{theorem}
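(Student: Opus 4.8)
The plan is to reduce \Cref{thm:resnet_cnn_sde} to the fully connected case handled in \Cref{thm:resnet_fc_sde}, using the fact that a convolutional transform is just a structured matrix multiplication with shared weights across positions. First I would set $\xdt_t^{\mathrm{vec}} = \vect(\xdt_t) \in \R^{PD}$ and note that, by \cref{eq:resnet_cnn}, the increment splits position-wise as $\Delta \xdt_{t,p} = \phi(\Delta W_t \psi(\xdt_{t,\patch p}) + \Delta b_t)$, so that conditionally on $\xdt_t$ the only source of randomness is the single pair $(\varepsilon^W_t, \varepsilon^b_t)$ shared by all $P$ positions. This is the key structural observation: the CNN is \emph{not} $P$ independent copies of a fully connected ResNet, but $P$ copies driven by common noise, exactly the ``multiple inputs'' situation already analysed in \cref{eq:resnet_fc_sde_2_cross}, with the $P$ patches $\{\xdt_{t,\patch p}\}_{p=1}^P$ playing the role of the $N$ inputs $\{\xdt_t^{(i)}\}$.

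Next I would verify \Cref{ass:inf_coeff} componentwise for $\xdt_t^{\mathrm{vec}}$. For the instantaneous mean \cref{eq:mu_x}, a Taylor expansion of $\phi$ around $0$ (legitimate because $\phi(0)=0$ and $\phi$ is $C^3$ with at-most-exponential tail growth of $\phi'',\phi'''$, by \Cref{ass:activation}) gives, for the $p$-th block,
\begin{equation*}
\E[\Delta \xdt_{t,p}\mid \xdt_t] = \phi'(0)\,\E[\Delta W_t \psi(\xdt_{t,\patch p}) + \Delta b_t] + \tfrac12 \phi''(0)\,\E[(\Delta W_t \psi(\xdt_{t,\patch p}) + \Delta b_t)^{\odot 2}] + o(\Dt),
\end{equation*}
and substituting $\Delta W_t = \mu^W \Dt + \varepsilon^W_t\sqrt{\Dt}$, $\Delta b_t = \mu^b \Dt + \varepsilon^b_t\sqrt{\Dt}$ and taking $\Dt \downarrow 0$ yields exactly $\mu_{\patch}(x_{\patch p})$; the Gaussian moment bounds control the remainder uniformly on compacts, where one uses that $\psi$ is locally bounded and $C^2$. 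For the instantaneous covariance \cref{eq:sigma_x} one works to leading order, where $\phi$ acts as $\phi'(0)$ times its (affine) argument, so the cross-block covariance between position $p$ and $p'$ is $\phi'(0)^2 \C[\varepsilon^W_t \psi(x_{\patch p}) + \varepsilon^b_t,\ \varepsilon^W_t \psi(x_{\patch p'}) + \varepsilon^b_t \mid x]$, i.e. $\phi'(0)^2 \sigma^2_{\patch}(x_{\patch p},x_{\patch p'})$ — note the off-diagonal blocks are in general nonzero precisely because of the shared noise. Positive semidefiniteness of the full $PD\times PD$ matrix is automatic since it is a genuine conditional covariance matrix. The moment condition \cref{eq:continuity_x} with $\delta=2$ follows from $\E[\|\Delta \xdt_t\|^4\mid\xdt_t] = O(\Dt^2) = o(\Dt)$, again by the Taylor bound and Gaussian fourth moments; this is essentially verbatim from the proof of \Cref{thm:resnet_fc_sde}, since adding the spatial index only multiplies the number of blocks by a constant $P$.

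To invoke \Cref{thm:sde_convergence} I then check \Cref{ass:existence_uniqueness}: $\mu_x$ and a square root $\sigma_x$ of $\sigma_x^2$ must be twice continuously differentiable. Here $\mu_{\patch}$ is $C^2$ because $\psi \in C^2$ and $x \mapsto \V[\varepsilon^W_t\psi(x_{\patch p})+\varepsilon^b_t\mid x]$ is a polynomial (quadratic form) in $\psi(x_{\patch p})$; the entries of $\sigma_x^2$ are likewise $C^2$ in $x$, and one takes a smooth square root as in the fully connected argument. Patch extraction $x \mapsto x_{\patch p}$ is a fixed linear selection map (incorporating the deterministic padding), hence smooth, so composition preserves $C^2$. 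Assembling: \Cref{ass:inf_coeff} and \Cref{ass:existence_uniqueness} hold, and under the theorem's hypothesis that the limiting SDE has a weak solution unique in law and non-explosive, \Cref{thm:sde_convergence} gives convergence in law on $\mathcal{D}([0,\infty),\R^{PD})$ of $\vect(\overline{\xdt}_t)$ to the solution of \cref{eq:resnet_cnn_sde}. I expect the main obstacle to be purely bookkeeping: keeping the indexing of the $DF \times DF$ patch-level covariance $\Sigma^W$, the position-blocked $PD\times PD$ matrices, and the vectorization conventions consistent, and verifying that the cross-position covariation $\sigma^2_{\patch}(x_{\patch p},x_{\patch p'})$ is computed over the correct overlapping entries of the shared $\varepsilon^W_t$ — once the shared-noise-multiple-input reduction is set up, no genuinely new probabilistic content is needed beyond what \Cref{thm:resnet_fc_sde} and \Cref{thm:resnet_fc_sde_2} already supply.
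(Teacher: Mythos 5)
Your proposal is correct and follows essentially the route the paper intends: the paper omits the CNN proofs, stating they mirror the proof of \cref{thm:resnet_fc_sde} with extra spatial bookkeeping, and your reduction (treating the $P$ patches as multiple inputs driven by the shared noise $(\varepsilon^W_t,\varepsilon^b_t)$, Taylor-expanding $\phi$ around $0$, controlling remainders uniformly on compacts via Gaussian moment bounds as in \cref{lem:bounds}, and checking \cref{ass:existence_uniqueness} through the quadratic-form dependence on $\psi$) is exactly that argument. The only gloss — smoothness of the matrix square root $\sigma_x$ — is treated at the same level of detail as in the paper's own fully connected proof, so no new gap is introduced.
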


The proof of Theorem \ref{thm:resnet_cnn_sde} is omitted. This is because the proof is obtained along lines similar to the proof of Theorem \ref{thm:resnet_fc_sde}, while being more cumbersome due to the extra spacial dimensions. Notice that the dimensionality of the driving Brownian motion depends on $U,V$. As in \cref{sec:residual_diffusions_sub1} we can restate \cref{thm:resnet_cnn_sde} by making explicit the dependency on the driving sources of randomness. In particular, this allows us to formulate the dynamics of $\xdt_t$ as integration with respect to Brownian motions whose dimensionality does not depend on the number of inputs, nor their spatial sizes $U,V$. Under \cref{ass:diffusion_parameters_cnn} and \cref{ass:activation}, the next corollary characterizes the limiting diffusion process of the convolutional RenNet at the initial value $\xdt_{0}^{(i)}$.

\begin{corollary} \label{thm:resnet_cnn_sde_2}
Let $\xdt_{t,p}^{(i)}$ be the ResNet corresponding to the initial value $\xdt_{0}^{(i)}$. Under \cref{ass:diffusion_parameters_cnn} and \cref{ass:activation}, the limiting diffusion process $x_{t,p}^{(i)}$ of $\xdt_{t,p}^{(i)}$ is the solution on $[0,T]$ of the SDE
\begin{equation} \label{eq:resnet_cnn_sde_2}
dx_{t,p}^{(i)} = \phi'(0)(dW_t \psi(x_{t,\patch p}^{(i)}) + db_t) + \frac{1}{2}\phi''(0)(d[W \psi(x_{\patch p}^{(i)})]_t + d[b]_t)
\end{equation}
for $p=1,\dots,P$, and 
\begin{equation} \label{eq:resnet_cnn_sde_2_cross}
d[x_{p}^{(i)},x_{p'}^{(j)}]_t = \phi'(0)^2(d[W\psi(x_{\patch p}^{(i)}),W\psi(x_{\patch p'}^{(j)})]_t + d[b,b]_t)
\end{equation}
\end{corollary}

The parametrization of \cref{ass:diffusion_parameters_cnn} is $\mathcal{O}(D^2F^2)$. Hereafter we introduce a more parsimonious parameterization which is based on tensor Gaussian distributions; this is a natural generalization of the matrix Gaussian distribution in \cite{gupta1999matrix}. The use of Kronecker products allows us to cover this parametrization with a compact notation. We also introduce a fully i.i.d. initialization with the same scaling with $D$ as in the fully-connected case.

\begin{assumption}[tensor Gaussian network's parameters] \label{ass:diffusion_parameters_tensor_variate}
Let $b_t,\mu^b,\sigma_b,B^b_t,\mu^W,B^W_t$ be defined as in \cref{ass:diffusion_parameters_cnn}, and let $W_t$ be a diffusion process in $\R^{D \times (K \times K \times D)}$, which is defined as the solution of
\begin{equation*}
dW_t = \mu^W dt + \sigma^{W_O} dB^W_t (\sigma^{W_U} \otimes \sigma^{W_V} \otimes \sigma^{W_I}),
\end{equation*}
where $\sigma^{W_O},\sigma^{W_I} \in \R^{D \times D}$, $\sigma^{W_U},\sigma^{W_V} \in \R^{K \times K}$ and $\Sigma^{W_O} = \sigma^{W_O} {\sigma^{W_O}} \tran$, $\Sigma^{W_U} = \sigma^{W_U} {\sigma^{W_U}} \tran$, $\Sigma^{W_V} = \sigma^{W_V} {\sigma^{W_V}} \tran$, $\Sigma^{W_I} = {\sigma^{W_I} }\tran \sigma^{W_I}$ are positive semi-definite matrices
\end{assumption}

We consider the setting of  \cref{ass:diffusion_parameters_tensor_variate}. Under this setting the discretization of $W_t$ satisfies:
\begin{equation*}
    \varepsilon^W_t \overset{i.i.d.}{\sim} \mathcal{TN}_{D,K,K,D}\big(0, \Sigma^{W_O}, \Sigma^{W_U}, \Sigma^{W_V}, \Sigma^{W_I} \big)
\end{equation*}
for $t=\Dt,\dots,T$, where $\mathcal{TN}$ stands for the tensor Gaussian distribution, and we have $\cov(\varepsilon^W_{o,u,v,i},\varepsilon^W_{o',u',v',i'}) = \Sigma^{W_O}_{o,o'}\Sigma^{W_U}_{u,u'}\Sigma^{W_V}_{v,v'}\Sigma^{W_I}_{i,i'}$. Under \cref{ass:diffusion_parameters_tensor_variate} and \cref{ass:activation}, the next corollary characterizes the limiting diffusion process of the convolutional ResNet at the initial value $\xdt_{0}^{(i)}$. In particular, the next corollary provides the convolutional counterpart of Corollary \ref{thm:resnet_fc_matrix_variate}.

\begin{corollary} \label{thm:resnet_cnn_matrix_variate}
Let $\xdt_{t,p}^{(i)}$ be the ResNet corresponding to the initial value $\xdt_{0}^{(i)}$. Under \cref{ass:diffusion_parameters_tensor_variate} and \cref{ass:activation}, the limiting diffusion process $x_{t,p}^{(i)}$ of $\xdt_{t,p}^{(i)}$ is the solution on $[0,T]$ of the SDE
\begin{align}
&dx_{t,p}^{(i)} = \phi'(0)\big((\mu^W \psi(x_{t,\patch p}^{(i)}) + \mu^b) dt\label{eq:resnet_cnn_matrix_variate}\\
&\quad\quad\quad+ \sigma^{W_O} dB^W_t (\sigma^{W_U} \otimes \sigma^{W_V} \otimes \sigma^{W_I}) \psi(x_{t, \patch p}^{(i)}) + \sigma^b dB^b_t\big)\nonumber\\ 
&\quad\quad\quad\quad+ \frac{1}{2}\phi''(0)\diag\big(\Sigma^b + \Sigma^{W_O} (\psi(x_{t,\patch p}^{(i)})\tran (\Sigma^{W_U} \otimes \Sigma^{W_V} \otimes \Sigma^{W_I}) \psi(x_{t, \patch p}^{(i)})) \big)dt\nonumber
\end{align}
and
\begin{displaymath}
d[x_{p}^{(i)},x_{p'}^{(j)}]_t = \phi'(0)^2\big(\Sigma^b + \Sigma^{W_O} \psi(x^{(i)}_{t,\patch p})\tran (\Sigma^{W_U} \otimes \Sigma^{W_V} \otimes \Sigma^{W_I}) \psi(x^{(j)}_{t, \patch p'})\big)dt.
\end{displaymath}
\end{corollary}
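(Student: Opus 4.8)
\emph{Proof sketch.} The plan is to start from \cref{thm:resnet_cnn_sde_2}, which already expresses the limiting process through the driving diffusions $W_t$ and $b_t$ without any structural assumption on the quadratic covariation of $W_t$, and then to specialize the drift term $dW_t$ and the (co)variations $d[W\psi(x_{\patch p}^{(i)})]_t$ and $d[W\psi(x_{\patch p}^{(i)}),W\psi(x_{\patch p'}^{(j)})]_t$ to the tensor Gaussian parametrization of \cref{ass:diffusion_parameters_tensor_variate}. Substituting $dW_t = \mu^W dt + \sigma^{W_O} dB^W_t (\sigma^{W_U} \otimes \sigma^{W_V} \otimes \sigma^{W_I})$ into the term $\phi'(0)(dW_t \psi(x_{t,\patch p}^{(i)}) + db_t)$ of \cref{eq:resnet_cnn_sde_2} immediately produces the first two lines of \cref{eq:resnet_cnn_matrix_variate}; the only point to check is that the matrix product $\sigma^{W_O} dB^W_t (\sigma^{W_U} \otimes \sigma^{W_V} \otimes \sigma^{W_I}) \psi(x_{t,\patch p}^{(i)})$ is well-defined, i.e. that the flattening order of the patch vector $\psi(x_{t,\patch p}^{(i)}) \in \R^{F}$, indexed by $(u,v,i)$, matches the column ordering induced by the Kronecker factor $\sigma^{W_U} \otimes \sigma^{W_V} \otimes \sigma^{W_I}$, which holds by the row-wise vectorization convention fixed in \cref{sec:cnn}.

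For the second-order terms I would use the tensor Gaussian covariance identity $\cov(\varepsilon^W_{o,u,v,i},\varepsilon^W_{o',u',v',i'}) = \Sigma^{W_O}_{o,o'}\Sigma^{W_U}_{u,u'}\Sigma^{W_V}_{v,v'}\Sigma^{W_I}_{i,i'}$ recorded just before the statement, together with the fact that this is precisely the instantaneous covariation of the components of $W_t$, i.e. $d[W_{o,u,v,i},W_{o',u',v',i'}]_t = \Sigma^{W_O}_{o,o'}\Sigma^{W_U}_{u,u'}\Sigma^{W_V}_{v,v'}\Sigma^{W_I}_{i,i'}\,dt$ (the drift $\mu^W dt$ contributes nothing to the quadratic variation). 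Since the $o$-th component of $W_t\psi(x_{\patch p}^{(i)})$ is $\sum_{u,v,i} W_{t,o,u,v,i}\,\psi(x_{\patch p}^{(i)})_{u,v,i}$, bilinearity of the covariation gives that its $(o,o')$ cross-covariation with the $o'$-th component of $W_t\psi(x_{\patch p'}^{(j)})$ equals $\Sigma^{W_O}_{o,o'}\big(\psi(x_{\patch p}^{(i)})\tran (\Sigma^{W_U}\otimes\Sigma^{W_V}\otimes\Sigma^{W_I})\,\psi(x_{\patch p'}^{(j)})\big)\,dt$, where I invoke the Kronecker mixed-product property to repackage the triple sum as the weighted inner product. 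Reading this as a $D\times D$ matrix identity, adding $d[b,b]_t = \Sigma^b\,dt$, and noting that the mixed terms $d[W\psi(x_{\patch p}^{(i)}),b]_t$ vanish because $B^W_t$ and $B^b_t$ are independent, yields \cref{eq:resnet_cnn_sde_2_cross} in the stated form. Setting $i=j$, $p=p'$ and taking the diagonal, and using that the quadratic form is a scalar and hence commutes with $\Sigma^{W_O}$, gives $d[W\psi(x_{\patch p}^{(i)})]_t + d[b]_t = \diag\big(\Sigma^b + \Sigma^{W_O}(\psi(x_{t,\patch p}^{(i)})\tran(\Sigma^{W_U}\otimes\Sigma^{W_V}\otimes\Sigma^{W_I})\psi(x_{t,\patch p}^{(i)}))\big)\,dt$, which is the $\tfrac12\phi''(0)$ line of \cref{eq:resnet_cnn_matrix_variate}. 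Assembling the pieces completes the argument.

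The computation is essentially bookkeeping once \cref{thm:resnet_cnn_sde_2} is granted, and follows the same template as \cref{thm:resnet_fc_matrix_variate} in the fully connected case; the only genuinely delicate point — and the one I would spell out most carefully — is the consistency of the multi-index conventions: making sure that (i) the patch index $(u,v,i)$ used to flatten $x_{\patch p}$ and $\psi(x_{\patch p})$, (ii) the ordering of the last three dimensions of $W_t \in \R^{D\times(K\times K\times D)}$, and (iii) the ordering of the Kronecker product $\Sigma^{W_U}\otimes\Sigma^{W_V}\otimes\Sigma^{W_I}$ all agree, so that the contracted sums collapse exactly into the compact weighted inner products written in \cref{eq:resnet_cnn_matrix_variate}. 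With unitary strides and the square-filter, odd-$K$ conventions from \cref{sec:cnn} this is automatic, but it is precisely where a transposed Kronecker factor or an off-by-one in the padding would slip in, so I would verify the identity component-wise before repackaging. $\qed$
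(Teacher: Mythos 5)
Your argument is correct and matches the paper's intended route: the paper omits the CNN proof, stating it follows the fully connected case, and your derivation is exactly that analogue — specializing \cref{thm:resnet_cnn_sde_2} by substituting the tensor-Gaussian dynamics of $W_t$ into the drift and using the covariance factorization $\cov(\varepsilon^W_{o,u,v,i},\varepsilon^W_{o',u',v',i'}) = \Sigma^{W_O}_{o,o'}\Sigma^{W_U}_{u,u'}\Sigma^{W_V}_{v,v'}\Sigma^{W_I}_{i,i'}$ together with bilinearity of the covariation and independence of $B^W_t,B^b_t$, just as in the proof of \cref{thm:resnet_fc_matrix_variate}. Your attention to the consistency of the flattening order of the patch with the Kronecker ordering is the right point to check and is indeed consistent with the paper's vectorization convention.
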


\begin{assumption}[fully i.i.d. network's parameters] \label{ass:diffusion_parameters_iid_cnn}
Let $W_t$  be a diffusion process in $\R^{D \times K \times K \times D}$ and let $b_t$ be the diffusion process in $\R^{D}$, which are defined as solutions of the SDEs
\begin{equation}\label{eq:param_diff_W_iid_cnn}
dW_t = \frac{\sigma_w}{\sqrt{D}} dB^W_t
 \end{equation}
 and
 \begin{equation}\label{eq:param_diffusion_b_iid_cnn}
db_t = \sigma_b dB^b_t,
\end{equation}
respectively, where $B^W_t\in\R^{D \times D}$ and $ B^b_t\in\R^D$ are independent BMs and $\sigma_w>0,\sigma_b>0$ are scalars.
\end{assumption}

Now, we consider the setting of \cref{ass:diffusion_parameters_iid_cnn}. In particular, the discretizations of the diffusion processes $W_t$ and $b_t$ displayed in \eqref{eq:param_diff_W_iid_cnn} and \eqref{eq:param_diffusion_b_iid_cnn}, respectively, admit exact representations as
\begin{displaymath}
\Delta W_t = \zeta^W_t \frac{\sigma_w}{\sqrt{D}} \sqrt{\Dt}
\end{displaymath}
and
\begin{displaymath}
\Delta b_t = \zeta^b_t \sigma_b \sqrt{\Dt},
\end{displaymath}
where $\zeta^W_t \overset{i.i.d.}{\sim} \mathcal{TN}_{D,K,K,D}\big(0, \I_D, \I_K, \I_K, \I_D \big)$ and $\zeta^b_t \overset{i.i.d.}{\sim} \mathcal{N}_{D}\big(0, \I_D \big)$. As for the setting of fully-connected neural networks, fully i.i.d. initializations are commonly used in the context of the training of convolutional neural networks. Under \cref{ass:diffusion_parameters_iid_cnn} and \cref{ass:activation}, the next corollary characterizes the limiting diffusion process of the convolutional ResNet at the initial value $\xdt_{0}^{(i)}$. In particular, the next corollary provides the convolutional counterpart of Corollary \ref{thm:resnet_fc_iid}.

\begin{corollary} \label{thm:resnet_cnn_iid}
Let $\xdt_{t,p}^{(i)}$ be the ResNet corresponding to the initial value $\xdt_{0}^{(i)}$. Under \cref{ass:diffusion_parameters_iid_cnn} and \cref{ass:activation}, the limiting diffusion process $x_{t,p}^{(i)}$ of $\xdt_{t,p}^{(i)}$ is the solution on $[0,T]$ of the SDE
\begin{align}\label{eq:resnet_cnn_iid}
    &dx_{t,p}^{(i)} = \phi'(0)\big(\frac{\sigma_w}{\sqrt{D}} dB^W_t \psi(x_{t,\patch p}^{(i)}) + \sigma_b dB^b_t\big)\\ 
    &\quad\quad\quad+ \frac{1}{2}\phi''(0)\big(\sigma_b^2 + \frac{\sigma_w^2}{D} \norm{\psi(x_{t,\patch p}^{(i)})}^2) \big) \I_D dt\nonumber
    \end{align}
    and
    \begin{displaymath}
d[x_{p}^{(i)},x_{p'}^{(j)}]_t = \phi'(0)^2\big(\sigma_b^2 + \frac{\sigma_w^2}{D} \dprod{\psi(x^{(i)}_{t,\patch p})}{\psi(x^{(j)}_{t, \patch p'})}\big) \I_D dt
\end{displaymath}
\end{corollary}

In view of the results obtained in this section, all the remarks of \cref{sec:residual_diffusions_sub1} have a corresponding remark that applies to infinitely deep convolutional ResNets. Namely, the main qualitative conclusions continue to hold. That is, the stochastic process limit is well-behaved and perfect-correlation problems are avoided, explosive solutions are possible whenever $\phi''(0) \neq 0$.

\section{Infinitely deep ResNets' gradient}\label{sec:residual_diffusion_gradient}

We consider the problem of trainability at initialization of very deep ResNets which are finitely wide. In a generic setting, gradient descent iterations with a fixed learning rate $\eta$ are of the form
\begin{equation*}
    \theta(b+1) = \theta(b) -\eta \nabla \mathcal{R}(\theta(b)))
\end{equation*}
for $b=0,1,\dots$, where $\theta(b) \in \R^{\Theta}$ is the generic iteration of network's parameters of interest, and $\mathcal{R}(\theta)$ is a smooth real-valued loss function to be minimized. Differently from the gradient descent, the SGD relies on unbiased estimates of the gradient of the loss function of interest. In particular, for $\E[\nabla \mathcal{R}_b(\theta)] = \nabla \mathcal{R}(\theta)$, SGD iterations with a fixed learning rate $\eta$ are of the form
\begin{equation*}
    \theta(b+1) = \theta(b) -\eta \nabla \mathcal{R}_b(\theta(b)).
\end{equation*}
Both $\mathcal{R}(\theta)$ and $\mathcal{R}_b(\theta)$ are obtained by summing or averaging terms of the form $R(\widehat{y}(z), y)$ with $R: \R^Y \times \R^Y \rightarrow \R$ being the loss function for 1 data-point $(z,y)$ and $\widehat{y}(z)$ being the prediction of the neural network for $z$. For the rest of this section we consider a single data point and smooth $R$. A key difficulty in training very deep neural networks is that the gradients with respect to lower layers, i.e. small $t$ for large $L$ in our setting, might vanish or explode. This phenomenon results in negligible or diverging network's parameter updates and ultimately in bad training performance. This intuition can be made rigorous by linking the norm of the gradients, or their expectations, to loss function decrements \citep{bottou2018optimization}. 

For each $t$, let $\theta_t$ denote the weight, or the bias, at layer $t$, either in the ``standard'' form $(\Delta W_t, \Delta b_t)$ or in the ``reparametrized'' form $(\varepsilon^W_t, \varepsilon^b_t)$. Then, we can write the following equations
\begin{displaymath}
(\nabla_{\theta_{t-\Dt}}R)\tran = J(R,\xdt_T)J(\xdt_T,\xdt_t)J(\xdt_t,\theta_{t-\Dt})
\end{displaymath}
and
\begin{displaymath}
J(\xdt_T,\xdt_t) = J(\xdt_T,\xdt_{T-\Dt})J(\xdt_{T-\Dt},\xdt_{T-2\Dt}) \cdots J(\xdt_{t+\Dt},\xdt_t).
\end{displaymath}
The problematic term is represented by the Jacobian matrix $J(\xdt_T,\xdt_t)$. Indeed the matrix $J(\xdt_T,\xdt_t)$ involves a large (infinite in the limit $L \uparrow \infty$) number of matrix multiplications for the lower layers of a neural network, where $t \approx 0$. Note that $J(\xdt_T,\xdt_t)$ is closely related to $J(\xdt_t,\xdt_0)$ as, provided that $J(\xdt_t,\xdt_0)$ is invertible, $J(\xdt_T,\xdt_t)$ can be obtained as $J(\xdt_T,\xdt_t)=J(\xdt_T,\xdt_0)J(\xdt_t,\xdt_0)^{-1}$. In any case, the properties of $J(\xdt_t,\xdt_0)$ are most closely related to the problem of a vanishing/exploding gradient. Hereafter, we show that for infinitely deep ResNets, under suitable assumptions on the activation functions $\phi$ and $\psi$, the problem of an exploding gradient is avoided. Moreover, we show that under the same assumptions on $\phi$ and $\psi$ the limiting process is invertible. Construction of invertible neural networks is the main focus of recent research \citep{behrmann2019invertible}, and the invertibility of ResNets has been empirically shown to be related to model robustness \citep{logan2019adversarial}.

Let $\xdt_t$ follow the ResNet \cref{eq:resnet_fc} with the activation functions $\psi$ and $\phi$ satisfying \cref{ass:activation}. Let $\gdt_t = J(\xdt_t,\xdt_0)$, hence $\gdt_{t+\Dt} = J(\xdt_{t+\Dt},\xdt_t)\gdt_t$, and by direct computation we can write
\begin{displaymath}
    \Delta \gdt_t = \big(\phi'(\Delta W_t \psi(\xdt_t) + \Delta b_t) {1_D}\tran \odot \Delta W_t \odot 1_{D} \psi'(\xdt_t)\tran \big)\gdt_t.
\end{displaymath}
Now, we show that the Jacobian matrix $J(\xdt_t,\xdt_0)$ is well behaved in the sense that it converges to the solution $J(x_t,x_0)$ of a matrix SDE as $L \uparrow \infty$. As in the case of $\xdt_t$, we can derive a limiting SDE to which $\gdt_t$ converges, as $L \uparrow \infty$, by establishing the convergence of the corresponding instantaneous mean and covariance of $\gdt_t$. We denote this limiting SDE with $g_t = J(x_t,x_0)$. Subsequently we can link $g_t$ with $W_t$ and $b_t$ by showing the equivalence in law between $g_t$ and the solution to another SDE. The next theorem states directly the final result.

\begin{theorem} \label{thm:jacobian_resnet_fc}
Let $\gdt_t = J(\xdt_t,\xdt_0)$ and let $\overline{\gdt}_t$ denote the continuous-time interpolation of $\gdt_t $. Under \cref{ass:diffusion_parameters} and \cref{ass:activation}, $\overline{\gdt}_t$ converges in distribution to the solution of the matrix SDE
\begin{equation} \label{eq:jacobian_resnet_fc}
    dg_t = \left(\left(\phi'(0)dW_t + \phi''(0)d[W \psi(x) {1_D}\tran \odot W]_t\right) \odot 1_{D}\psi'(x_t)\tran\right) g_t
\end{equation}
\end{theorem}

We require the matrix SDE \cref{eq:jacobian_resnet_fc} to be non-explosive. This is for instance the case when $\phi''(0)$ and $\psi$ is the identity function. See also \cref{rem:explosion} in \cref{sec:residual_diffusions_sub1}. As long as \cref{eq:jacobian_resnet_fc} is not explosive, $J(x_t,x_0)$ is invertible and we can find the SDE determining the evolution of its inverse.

\begin{corollary} \label{thm:jacobian_resnet_fc_inverse}
Let $g_{t}$ be the matrix-valued diffusion process satisfying the matrix SDE \cref{eq:jacobian_resnet_fc}. Under \cref{ass:diffusion_parameters} and \cref{ass:activation}, $g_{t}$ is invertible and its inverse satisfies the matrix SDE
\begin{equation} \label{eq:jacobian_resnet_fc_inverse}
    dg_t^{-1} = g_t^{-1}\big(-\left(\phi'(0)dW_t + \phi''(0)d[W \psi(x) {1_D}\tran \odot W]_t\right) \odot 1_{D}\psi'(x_t)\tran + \phi'(0)^2 d[W \odot 1_{D}\psi'(x)\tran]_t \big).
\end{equation}
\end{corollary}

Hence, $J(x_T,x_t)$ can be obtained as $J(x_T,x_t) = g_T g_t^{-1}$ by means of integrating \cref{eq:jacobian_resnet_fc} and \cref{eq:jacobian_resnet_fc_inverse}, which are driven by the same process $W$. \cref{thm:jacobian_resnet_fc} and \cref{thm:jacobian_resnet_fc_inverse} have two fundamental consequences: i) as $g_t$ is the Jacobian of the last layer with respect of the first layer of the limiting process $x_t$, it follows that the exploding gradient problem is avoided as long as \cref{eq:jacobian_resnet_fc} is not explosive; ii) by the inverse function theorem the limiting process $x_t$ is invertible. Note that the results of this section hold for all the parametrizations discussed in \cref{sec:residual_diffusions_sub1}.

\section{Doubly infinite ResNets}\label{sec:doubly_infinite}

We study the more general setting of infinitely deep and infinitely wide ResNets, that is ResNets where both the depth $L$ and the dimension $D$ grow unboundedly. In particular, it is assumed that first $L \uparrow \infty$, and then $D \uparrow \infty$. Most of the analysis that follows assumes that $\psi$ is the identity function. Although we only provide heuristics for the convergence results of \cref{thm:layer_function_ode} and \cref{thm:doubly_infinite_sde}, numerical experiments reported in \cref{sec:numerical_experiments} support their correctness. Moreover, the numerical experiments of \cref{sec:numerical_experiments} support the conjecture that analogous results hold when $D$ and $L$ grow unbounded jointly when $\psi$ is the identity function, and $\phi$ is suitably smooth as assumed thorough this work. A more detailed discussion on interchanging the width and depth limits under these assumptions is contained in \cref{sec:note_double_infinite}. Hereafter, we consider the setting of \cref{thm:resnet_fc_iid}, i.e. fully i.i.d. network's parameters. In this case, we can rewrite \cref{eq:resnet_fc_iid} as 
\begin{equation}\label{eq:doubly_infinite_fc}
dx_t^{(i)} = \phi'(0)(\frac{\sigma_w}{\sqrt{D}}dB_t^W\psi(x_t^{(i)}) + \sigma_b dB_t^b) + \frac{1}{2}\phi''(0)(\sigma_b^2 + \sigma_w^2 q_t^{(i)}) 1_D dt
\end{equation}
and
\begin{displaymath}
d[x^{(i)}, x^{(j)}]_t = \phi'(0)^2 (\sigma_b^2 + \sigma_w^2 \lambda_t^{(i,j)}) \I_{D} dt \nonumber
\end{displaymath}
with $\lambda_t^{(i,j)}=\dprod{\psi(x_t^{(i)})}{\psi(x_t^{(j)})}/D$ and $q_t^{(i)}=\lambda_t^{(i,i)}=\norm{\psi(x_t^{(i)})}^2/D$. As a starting point, we need to ensure the well-posedness of \cref{eq:doubly_infinite_fc} for small $t>0$ as $D \uparrow \infty$. Therefore, we assume that the following limits exist and are finite: $q_{0}^{(i),\infty} = \lim_{D \uparrow \infty} q_0^{(i)}$, and $\lambda_{0}^{(i,j),\infty} = \lim_{D \uparrow \infty} \lambda_0^{(i,j)}$. Note that the notation does not convey explicitly the dependence of $x_t$, and hence of $q_t,\lambda_t$ on $D$.

\subsection{Weakly and fully trained ResNets}\label{sec:doubly_infinite_weak_full}

The connection between Gaussian processes and infinitely wide neural networks is well-known \citep{neal1995bayesian,lee2018deep,garriga-alonso2018deep}. In \cref{sec:doubly_infinite_diff} we show that similar results hold true for infinitely deep and infinitely wide ResNets, thus obtaining convergence to a Gaussian process. For infinitely wide neural networks, the NTK of \cite{jacot2018neural,arora2019exact,lee2019wide} allows for computing the solution obtained by fully training a neural network according to continuous-time, i.e. infinitesimal learning rate, gradient descent under the assumption of a quadratic loss. More in detail, let $\widehat{y}^{(i)}_\theta \in \R$ be the output of a neural network with parameters $\theta \in \R^{\Theta}$ for its $i$-th input, $i=1,\dots,N$. Let $\mathcal{R}(\theta) = \frac{1}{2}\sum_{i=1}^N (\widehat{y}_\theta^{(i)} - y^{(i)})^2$ be the squared loss over training data, where $y^{(i)}$ denotes the $i$-th target. We report \cite[Proposition 3.1]{arora2019exact} according to our notation:

\begin{proposition}
Consider minimizing the squared loss $\mathcal{R}(\theta)$ by gradient descent with infinitesimally small learning rate: $\frac{ d \theta (t)}{ d t}=-\nabla_\theta \mathcal{R}( \theta (t))$. Let $\widehat{y}(t)=\{\widehat{y}^{(i)}_{\theta(t)}\}_{i=1}^N \in \R^{N}$ be the network outputs on all inputs at time $t$, and $y = \{y^{(i)}\}_{i=1}^N$ be the desired outputs. Then $\widehat{y}(t)$ follows the following evolution, where $\mathcal{K}(t)$ is an $N \times N$ positive semidefinite matrix whose $(i,j)$-th entry is $\mathcal{K}(t)^{(i,j)}=\dprod{\nabla_{\theta}\widehat{y}_{\theta(t)}^{(i)}}{\nabla_{\theta}\widehat{y}_{\theta(t)}^{(j)}}$:
\begin{equation}\label{eq:arora_ntk}
\frac{d\widehat{y}(t)}{dt}=- \mathcal{K}(t)( \widehat{y}(t) - y ).
\end{equation}
\end{proposition}
According to Equation \eqref{eq:arora_ntk}, the empirical NTK corresponding to inputs $i,j$ is then defined as follows:
\begin{equation}\label{eq:ntk}
    \mathcal{K}^{(i,j)} = \dprod{\nabla_{\theta}\widehat{y}_{\theta(0)}^{(i)}}{\nabla_{\theta}\widehat{y}_{\theta(0)}^{(j)}},
\end{equation}
i.e. it is $\mathcal{K}(t)$ for $t=0$. As the width of a neural network goes to infinity, $\mathcal{K}^{(i,j)}$ converges to a deterministic limit $\mathcal{K}^{(i,j,\infty)}$ for each pair of points under the considered assumptions. Moreover, it is possible to bound the fluctuations of $\mathcal{K}(t)$ around its initial value $\mathcal{K}(0)$ as the width increases. Building on these results, it is possible to establish the equivalence between the solution obtained by fully training a neural network via continuous-time gradient descent and kernel regression via the $\mathcal{K}^{(i,j,\infty)}$ kernel. Indeed, within the aforementioned setting, the time-independent $\mathcal{K}^{(i,j,\infty)}$ can be substituted for $\mathcal{K}(t)^{(i,j)}$ in \cref{eq:arora_ntk} yielding the constant coefficients ODE
\begin{equation}\label{eq:arora_ntk_2}
\frac{d\widehat{y}(t)}{dt}=- \mathcal{K}^{(\infty)}( \widehat{y}(t) - y ).
\end{equation}
The stationary solution of the ODE \cref{eq:arora_ntk_2}, with initial condition $\widehat{y}(0) = 0$, is given by the standard kernel regression formula associated to the kernel $\mathcal{K}^{(\infty)}$. In particular, the desired initial condition can be enforced either by scaling the neural network output by an appropriately small constant \citep{arora2019exact} or by subtracting the output of an independent and untrained copy of the considered neural network at initialization \citep{lee2019wide}. 

We show in \cref{sec:doubly_infinite_diff_grad} that similar results hold for the case of infinitely deep and infinitely wide ResNets: \cref{eq:ntk} at initialization converges to a deterministic limit. Also, it is known \citep{arora2019exact} that in the aforementioned setting, training only the last output of a neural network under the same conditions corresponds to performing Bayesian inference under the Gaussian process prior arising in the infinite wide limit. Hence, we talk equivalently of Bayesian inference and weak training, and we refer to the standard NTK setting as full training. All the results of \cref{sec:doubly_infinite_diff} and \cref{sec:doubly_infinite_diff_grad} concern with a ResNet $\xdt_{0:T}$ with input $\xdt_0$. As previously mentioned, it is necessary to complete the ResNet with an input layer adapting the infinitely wide ResNet to finite-dimensional inputs. Moreover, to more closely resemble neural networks used in practice, an output adaptation layer is commonly introduced as well. In \cref{sec:doubly_infinite_diff_complete} we study the implications to the training of completing the ResNet with input and output layers. Moreover, in line with the NTK literature, it is necessary to consider an appropriate parametrization in order to obtained the desired NTK convergence results. The reparametrized gradients used by gradient descent are computed with respect to network's parameters which are i.i.d. distributed as standard Gaussian distributions and any scaling is expressed via multiplication, not via the Gaussian distribution's variance. In our context this corresponds to gradients with respect to $(\varepsilon^W_t, \varepsilon^b_t)$ in \cref{eq:fc_iid_discr_std1} and \eqref{eq:fc_iid_discr_std2}. 

\subsection{Weakly trained ResNets - general case}

Observe  that the evolution of \cref{eq:doubly_infinite_fc} is directly governed by $q_t^{(i)}$ and $\lambda_t^{(i,j)}$. The following results forms the basis of our analysis. Its proof is obtained by a straightforward but tedious application of multi-dimensional Ito's formula \citep[Section 4.2]{oksendal2003stochastic} and is thus omitted.

\begin{proposition}\label{thm:generic_q_l_evolution}
Let $x_t=x_t^{(i)}$ evolve according to the SDE \cref{eq:doubly_infinite_fc}, and let $f(x) = \dprod{g(x)}{h(x)}/D$ with $g$ and $h$ being twice continuously differentiable scalar functions applied element-wise. Then
\begin{align*}
df(x_t) &= \phi'(0)\frac{1}{D}(g'(x_t)h(x_t) + g(x_t)h'(x_t))\tran(\frac{\sigma_w}{\sqrt{D}}dB_t^W\psi(x_t) + \sigma_b dB_t^b)\\
&\quad+\frac{1}{2}(\sigma_b^2 + \sigma_w^2 q_t^{(i)})\phi'(0)^2(\dprod{g''(x_t)}{h(x_t)}/D + \dprod{g(x_t)}{h''(x_t)}/D + 2 \dprod{g'(x_t)}{h'(x_t)}/D)dt\\
&\quad+\frac{1}{2}(\sigma_b^2 + \sigma_w^2 q_t^{(i)})\phi''(0)(\dprod{g'(x_t)}{h(x_t)}/D + \dprod{g(x_t)}{h'(x_t)}/D)dt
\end{align*}
Moreover,  Let $x_t=x_t^{(i)}$ and $y_t=x_t^{(j)}$ evolve according to \cref{eq:doubly_infinite_fc}, and let $F(x,y) = \dprod{G(x)}{H(y)}/D$ with $G$ and $H$ being twice continuously differentiable scalar functions applied element-wise. Then
\begin{align*}
dF(x_t,y_t) &= \phi'(0)\frac{1}{D}(G'(x_t)H(y_t))\tran(\frac{\sigma_w}{\sqrt{D}}dB_t^W\psi(x_t) + \sigma_b dB_t^b)\\
&\quad+ \phi'(0)\frac{1}{D}(G(x_t)H'(y_t))\tran(\frac{\sigma_w}{\sqrt{D}}dB_t^W\psi(y_t) + \sigma_b dB_t^b)\\
&\quad+\frac{1}{2D}(\sigma_b^2 + \sigma_w^2 q_t^{(i)})\phi''(0)\dprod{G'(x_t)}{H(y_t)} dt\\
&\quad+\frac{1}{2D}(\sigma_b^2 + \sigma_w^2 q_t^{(j)})\phi''(0)\dprod{G(x_t)}{H'(y_t)} dt\\
&\quad+\frac{1}{2D}(\sigma_b^2 + \sigma_w^2 q_t^{(i)})\dprod{G''(x_t)}{H(y_t)} dt\\
&\quad+\frac{1}{2D}(\sigma_b^2 + \sigma_w^2 q_t^{(j)})\dprod{G(x_t)}{H''(y_t)} dt\\
&\quad+\frac{1}{D}(\sigma_b^2 + \sigma_w^2 \lambda_t^{(i,j)})\dprod{G'(x_t)}{H'(y_t)} dt
\end{align*}
\end{proposition}

The evolution of $q_t^{(i)}$ and $\lambda_t^{(i,j)}$ is obtained by an application of \cref{thm:generic_q_l_evolution} with $h \coloneqq g \coloneqq H \coloneqq G \coloneqq \psi$, resulting in $dq_t^{(i)}=df(x_t^{(i)})$ and $d\lambda_t^{(i,j)}=dF(x_t^{(i)},x_t^{(j)})$. Inspecting the corresponding dynamics highlights the difficulties in obtaining closed-form solutions for the evolution of $q_t^{(i)}$ and $\lambda_t^{(i,j)}$. The drift and diffusion coefficients describing the evolution of $q_t^{(i)}$ involves inner-products of the form $\dprod{\psi^{(m)}(x_t^{(i)})}{\psi^{(n)}(x_t^{(i)})}$ where $m,n=0,1,2$ denote the degrees of differentiation. By relying on \cref{thm:generic_q_l_evolution} to determine the evolution of $\dprod{\psi^{(m)}(x_t^{(i)})}{\psi^{(n)}(x_t^{(i)})}$ results in drift and diffusion coefficients depending on other terms of the form $\dprod{\psi^{(m)}(x_t^{(i)})}{\psi^{(n)}(x_t^{(i)})}$ involving higher degrees of differentiation. Similar considerations hold true for $\lambda_t^{(i,j)}$. Ultimately, determining the evolution of $q_t^{(i)}$ and $\lambda_t^{(i,j)}$ separately from $x_t^{(i)}$ requires solving an infinite recursion of SDEs. Similarly, solving \cref{eq:doubly_infinite_fc} directly requires solving an infinite dimensional SDE as $D \uparrow \infty$. In either cases it doesn't seem possible to solve \cref{eq:doubly_infinite_fc} exactly in the infinite-width limit with a finite amount of computational effort. In the special case where $\psi$ is the identity function the dynamics of $q_t^{(i)}$ and $\lambda_t^{(i,j)}$ simplify considerably. We explore the implications of this modeling assumption in the remaining sections of this work.

\subsection{Weakly trained ResNets - identity $\psi$ case}\label{sec:doubly_infinite_diff}

Hereafter, we assume the activation function $\psi$ to be the identity function, and hence only the activation function $\phi$ affects the neural network. In \cref{thm:sde_layers} in \cref{app:proofs} we derive the evolution of $q_t^{(i)}$ and $\lambda_t^{(i,j)}$ in this specific setting. Now, we show that as $D$ increases $q_t^{(i)}$ and $\lambda_t^{(i,j)}$ converge to deterministic limits which are obtained as solutions of a finite-dimensional ODE system.

\begin{proposition}[Heuristic] \label{thm:layer_function_ode}
As $D \uparrow \infty$, the quantities $m_t^{(i)}$, $q_t^{(i)}$, $\lambda_t^{(i,j)}$ converge to the solutions of the ODEs
\begin{displaymath}
dm_t^{(i),\infty} = \frac{1}{2} \phi''(0) \big(\sigma_b^2 + \sigma_w^2 q_t^{(i),\infty}\big) dt,
\end{displaymath}
\begin{displaymath}
dq_t^{(i),\infty} = \big(\phi''(0)m_t^{(i),\infty} + \phi'(0)^2\big) \big(\sigma_b^2 + \sigma_w^2 q_t^{(i),\infty}\big)dt
\end{displaymath}
and
\begin{align*}
&d\lambda_t^{(i,j),\infty} = \big(\frac{1}{2}\phi''(0)((\sigma_b^2 + \sigma_w^2 q_t^{(i),\infty})m_t^{j,\infty} + (\sigma_b^2 + \sigma_w^2 q_t^{(j),\infty})m_t^{(i),\infty})\\
    &\quad\quad\quad\quad+ \phi'(0)^2\big(\sigma_b^2 + \sigma_w^2 \lambda_t^{(i,j),\infty})\big)dt,
\end{align*}
respectively. Moreover, under the assumption $\phi''(0)=0$, the solutions for $m_T^{(i),\infty}$, $q_T^{(i),\infty}$ and $\lambda_T^{(i,j),\infty}$ are
\begin{displaymath}
m_T^{(i),\infty} = m_0^{(i),\infty},
\end{displaymath}
\begin{displaymath}
q_T^{(i),\infty} = q_0^{(i),\infty} + \left(q_0^{(i),\infty} + \frac{\sigma_b^2}{\sigma_w^2}\right)\left( e^{\phi'(0)^2\sigma_w^2 T}  - 1\right).
\end{displaymath}
and
\begin{displaymath}
\lambda_T^{(i,j),\infty} = \lambda_0^{(i,j),\infty} + \left(\lambda_0^{(i,j),\infty} + \frac{\sigma_b^2}{\sigma_w^2}\right)\left( e^{\phi'(0)^2\sigma_w^2 T}  - 1\right),
\end{displaymath}
respectively. Moreover, under the assumption $\phi''(0) \neq 0$, the solutions for $m_T^{(i),\infty}$ and $q_T^{(i),\infty}$ are
\begin{equation}\label{eq:explosive_m}
m_T^{(i),\infty} = \frac{1}{\phi^{\prime\prime}(0)}\left\{-\phi'(0)^{2}+\frac{1}{\sigma_{w}}\sqrt{C}\tan\left(\frac{1}{2}\sigma_{w}\sqrt{C}(T+2c_{2})\right)\right\}
\end{equation}
and
\begin{equation}\label{eq:explosive_q}
q_T^{(i),\infty} = \frac{1}{\phi^{\prime\prime}(0)^{2}\sigma^{2}_{w}}\left\{-\phi^{\prime\prime}(0)^{2}\sigma^{2}_{b}+C\sec\left(\frac{1}{2}\sigma_{w}\sqrt{C}(T+2c_{2})\right)^{2}\right\},
\end{equation}
respectively, where $c_{1}$ and $c_{2}$ are two constants that depend on the initial conditions of \cref{eq:explosive_m,eq:explosive_q}, and $C=-\phi'(0)^{4}\sigma^{2}_{w}+\phi^{\prime\prime}(0)^{2}(\sigma^{2}_{b}+\sigma^{2}_{w}c_{1})$.
\end{proposition}

In \cref{thm:layer_function_ode}, $m_t^{(i)} = D^{-1}\sum_{1\leq d\leq D} x_{t,d}^{(i)}$ and we further require that $m_{0}^{(i),\infty} = \lim_{D \uparrow \infty} m_0^{(i)}$ exists and it is finite. Crucially, when $\psi$ is the identity function, it is not possible to decouple the evolution of $q_t^{(i)}$ and $\lambda_t^{(i,j)}$ from the evolution of the driving stochastic process $x_t$ when $D \uparrow \infty$. The drift, diffusion, and correlation coefficients driving the SDE \cref{eq:doubly_infinite_fc} converge to deterministic limit too which results in i.i.d. stochastic processes across the dimensions of the neural network.

\begin{proposition}[Heuristic] \label{thm:doubly_infinite_sde}
As $D \uparrow \infty$ each $x_t^{(i)}$ converges to $x_t^{(i),\infty}$, with the $x_t^{(i),\infty}$'s being i.i.d. across the dimensions of the neural network. Moreover, for $x_t^{(i),\infty} = x_{t,1}^{(i),\infty},x_{t,2}^{(i),\infty},\dots$, and $d,u \geq 1$ it holds
\begin{equation}\label{eq:doubly_infinite_fc_ode}
dx_{t,d}^{(i),\infty} = \phi'(0) (\sigma_b^2 + \sigma_w^2 q_t^{(i),\infty})^{1/2}dB^{(i),\infty}_{t,d}
\end{equation}
and
\begin{displaymath}
d[B^{(i),\infty}_d,B^{(j),\infty}_u]_t = \frac{\sigma_b^2 + \sigma_w^2 \lambda_t^{(i,j),\infty}}{\big((\sigma_b^2 + \sigma_w^2 q_t^{(i),\infty})(\sigma_b^2 + \sigma_w^2 q_t^{(j),\infty})\big)^{1/2}} \delta_{d,u} dt,
\end{displaymath}
where $B^{(i),\infty}_{t,1},B^{(i),\infty}_{t,2},\dots$ are scalar BMs dependent over $i$ and $q_t^{(i),\infty},\lambda^{(i,j),\infty}_t$ are obtained by solving the ODEs in \cref{thm:layer_function_ode}. Over the two data-points indexed by $i,j$ this is a 2-dimensional SDE with time-dependent and deterministic drift and diffusion coefficients, and such that
\begin{align} \label{eq:fc_iid_infty_analytical}
    p(x_{T,d}^{(i),\infty},x_{T,d}^{(j),\infty}\,|\,x_{0,d}^{(i)},x_{0,d}^{(j)}) = \mathcal{N}_2\Bigg(&\begin{bmatrix*} 
        x_{0,d}^{(i)} + m_T^{(i),\infty} - m_0^{(i),\infty}\\
        x_{0,d}^{(j)} + m_T^{(j),\infty} - m_0^{(j),\infty}
    \end{bmatrix*},\\
    &\begin{bmatrix*} 
        v_T^{(i),\infty} - v_0^{(i),\infty} & c_T^{(i,j),\infty} - c_0^{(i,j),\infty}\\
        c_T^{(i,j),\infty} - c_0^{(i,j),\infty} & v_T^{(j),\infty} - v_0^{(j),\infty}
    \end{bmatrix*}\Bigg),\nonumber
\end{align}
where $v_t^{(i),\infty} = q_t^{(i),\infty} - (m_t^{(i),\infty})^2$, $c_t^{(i,j),\infty} = \lambda_t^{(i,j),\infty} - m_t^{(i),\infty}m_t^{(j),\infty}$.
\end{proposition}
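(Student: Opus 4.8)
The plan is to combine the ODE limits from \cref{thm:layer_function_ode} with the representation \cref{eq:doubly_infinite_fc} and a standard propagation-of-chaos argument. First I would fix the two data points $i,j$ and view the pair $(x_t^{(i)},x_t^{(j)}) \in \R^{2D}$ as a single $2D$-dimensional diffusion whose drift and diffusion coefficients depend on the state only through the scalar summaries $q_t^{(i)}, q_t^{(j)}, \lambda_t^{(i,j)}$ (and, via the drift, $m_t^{(i)}, m_t^{(j)}$ only through the above since the drift in \cref{eq:doubly_infinite_fc} is state-independent given $q_t$). By \cref{thm:layer_function_ode} these summaries converge, as $D\uparrow\infty$, uniformly on $[0,T]$ to the deterministic solutions $q_t^{(\cdot),\infty},\lambda_t^{(i,j),\infty}$ of the stated ODEs. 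I would then substitute these deterministic limits into \cref{eq:doubly_infinite_fc} to obtain the limiting system \cref{eq:doubly_infinite_fc_ode}: the point is that once $q_t,\lambda_t$ are frozen at their deterministic values, the coefficients of the SDE no longer depend on the state $x$ at all, they depend only on $t$, so the equation decouples across coordinates $d$ and becomes a linear (indeed additive-noise) SDE with time-dependent deterministic coefficients. Coupled with the fact that each coordinate is driven by an independent scalar BM (the cross-coordinate quadratic covariation $d[B_d,B_u]_t$ carries the Kronecker $\delta_{d,u}$ from the $\I_D$ in \cref{eq:resnet_fc_iid}), this yields that $x_{t,1}^{(i),\infty},x_{t,2}^{(i),\infty},\dots$ are i.i.d.\ across $d$ (given i.i.d.\ initial conditions).

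Second, for the explicit transition density I would exploit that \cref{eq:doubly_infinite_fc_ode}, for a fixed coordinate $d$ and over the pair $(i,j)$, is a two-dimensional Gaussian (Ornstein–Uhlenbeck-type, but here with zero mean-reversion) SDE: the drift is a deterministic function of $t$ only and the diffusion matrix is a deterministic function of $t$ only. Such linear SDEs with additive, deterministic-in-time noise have Gaussian solutions whose mean and covariance solve the associated linear ODEs. The mean of $x_{T,d}^{(i),\infty}$ is $x_{0,d}^{(i)} + \int_0^T \frac12\phi''(0)(\sigma_b^2+\sigma_w^2 q_t^{(i),\infty})\,dt$, and comparing with the $m$-ODE in \cref{thm:layer_function_ode} (whose right-hand side is exactly $\frac12\phi''(0)(\sigma_b^2+\sigma_w^2 q_t^{(i),\infty})$) this integral equals $m_T^{(i),\infty}-m_0^{(i),\infty}$. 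For the covariance, $\var[x_{T,d}^{(i),\infty}] = \var[x_{0,d}^{(i)}] + \int_0^T \phi'(0)^2(\sigma_b^2+\sigma_w^2 q_t^{(i),\infty})\,dt$; I would show this integral equals $v_T^{(i),\infty}-v_0^{(i),\infty}$ with $v_t^{(i),\infty}=q_t^{(i),\infty}-(m_t^{(i),\infty})^2$ by differentiating: $\frac{d}{dt}v_t^{(i),\infty} = \dot q_t^{(i),\infty} - 2 m_t^{(i),\infty}\dot m_t^{(i),\infty}$, and substituting the $q$- and $m$-ODEs the $\phi''(0)m_t^{(i),\infty}$ terms cancel, leaving $\phi'(0)^2(\sigma_b^2+\sigma_w^2 q_t^{(i),\infty})$, as required. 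The cross-covariance identity $c_T^{(i,j),\infty}-c_0^{(i,j),\infty}=\int_0^T \phi'(0)^2(\sigma_b^2+\sigma_w^2\lambda_t^{(i,j),\infty})\,dt$ follows by the same computation using $c_t^{(i,j),\infty}=\lambda_t^{(i,j),\infty}-m_t^{(i),\infty}m_t^{(j),\infty}$ and the $\lambda$-ODE together with the product rule for $m_t^{(i),\infty}m_t^{(j),\infty}$; again the $\phi''(0)$ contributions cancel, which is consistent with the cross-quadratic-variation read off from the second line of \cref{eq:doubly_infinite_fc_ode}.

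The main obstacle is making rigorous the passage from the finite-$D$ SDE \cref{eq:doubly_infinite_fc} to the limiting SDE \cref{eq:doubly_infinite_fc_ode}, i.e.\ justifying that one may freeze $q_t,\lambda_t$ at their ODE limits inside the coefficients. This requires (a) a self-contained derivation of the SDEs for $q_t^{(i)},\lambda_t^{(i,j)}$ (deferred to \cref{thm:sde_layers} as mentioned in the text) via Itô's formula applied to $\norm{x_t^{(i)}}^2/D$ and $\dprod{x_t^{(i)}}{x_t^{(j)}}/D$, (b) showing the martingale/fluctuation terms in those SDEs vanish as $D\uparrow\infty$ (they carry a $1/\sqrt D$ factor, being averages of $D$ weakly dependent coordinates), so that the drift terms converge to the stated ODE right-hand sides, and (c) a stability/continuity argument for the map from coefficient functions to the law of the solution, e.g.\ via the martingale problem or a Gronwall-type estimate, to conclude convergence in law of $x_t^{(i)}$ to $x_t^{(i),\infty}$. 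Since the paper explicitly flags that ``some results presented in this section are not completely rigorous,'' I would present steps (a)--(b) carefully and invoke a standard continuity-of-solutions result for (c), then derive the Gaussianity and the mean/covariance identities as above, which are fully rigorous once the limiting SDE is in hand. The explosive ($\phi''(0)\neq 0$) expressions \cref{eq:explosive_m}--\cref{eq:explosive_q} are solved separately as a Riccati-type ODE system and are not needed for the Gaussian transition density, which is stated for general $\phi''(0)$ but is only globally well-posed up to the (now deterministic) explosion time $\frac12\sigma_w\sqrt C(T+2c_2)=\pi/2$.
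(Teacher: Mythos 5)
Your proposal follows essentially the same route as the paper's (explicitly heuristic) argument: invoke \cref{thm:layer_function_ode} to freeze $q_t,\lambda_t$ at their deterministic ODE limits, argue (or, as the paper does, assume) that the solution of \cref{eq:doubly_infinite_fc} converges to that of the resulting time-inhomogeneous additive-noise SDE \cref{eq:doubly_infinite_fc_ode}, which decouples across coordinates and is Gaussian, and then read off the transition density. Your explicit checks that $\tfrac{d}{dt}v_t^{(i),\infty}=\phi'(0)^2(\sigma_b^2+\sigma_w^2 q_t^{(i),\infty})$ and $\tfrac{d}{dt}c_t^{(i,j),\infty}=\phi'(0)^2(\sigma_b^2+\sigma_w^2\lambda_t^{(i,j),\infty})$ (with the $\phi''(0)$ terms cancelling) are correct and simply fill in the mean/covariance bookkeeping the paper leaves implicit.
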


According to \cref{thm:doubly_infinite_sde}, doubly infinite ResNets are non-centered Gaussian processes with covariance kernel $K^{(i,j),\infty} = c_T^{(i,j),\infty} - c_0^{(i,j),\infty}$ and mean function $M_d^{(i),\infty} = x_{0,d}^{(i)} + m_T^{(i),\infty} - m_0^{(i),\infty}$. That is: i)  when $\phi''(0) = 0$, we have $M_d^{(i),\infty} = x_{0,d}^{(i)}$ and $K^{(i,j),\infty} =(\lambda_0^{(i,j),\infty} + \sigma_b^2/\sigma_w^2)( e^{\phi'(0)^2\sigma_w^2 T}  - 1)$; ii) when $\phi''(0) \neq 0$, from Equation \cref{eq:explosive_m} and Equation \cref{eq:explosive_q} we obtain the deterministic explosion time of $x_{t,d}^{(i),\infty}$ by solving $2^{-1}\sigma_{w}\sqrt{C}(T+2c_{2}) = \pi/2$ in $T$; in particular, the constants $c_1$, $c_2$ depend on $m_0^{(i),\infty}$, $q_0^{(i),\infty}$ and have to be determined numerically.

\subsection{Fully trained ResNets}\label{sec:doubly_infinite_diff_grad}

Let $\theta = \{\varepsilon^W_t,\varepsilon^b_t\}_{t=0}^{T-1}$ denote the ``reparametrized'' collection of network's parameters with respect to which we compute the NTK. We establish the convergence of $\mathcal{K}^{(i,j)}$ to a deterministic limit as $L \uparrow \infty$ and then $D \uparrow \infty$. We operate under the following assumptions: i) $\xdt_t$ follows \cref{eq:resnet_fc_iid} with $\psi$ being the identity function; ii) network's parameters follow \cref{ass:diffusion_parameters_iid}; iii) $\phi''(0) = 0$; iv) $\widehat{y} = \xdt_{T,1}$. We have $\mathcal{K}^{(i,j)} = \mathcal{K}_W^{(i,j)} + \mathcal{K}_b^{(i,j)}$, where $\mathcal{K}_W^{(i,j)} = \sum_{t=\Dt}^T \mathcal{K}_{W,t}^{(i,j)}$, $\mathcal{K}_b^{(i,j)} = \sum_{t=\Dt}^T \mathcal{K}_{b,t}^{(i,j)}$, and
\begin{align*}
    &\mathcal{K}_{W,t}^{(i,j)} =J(\widehat{y}^{(i)},\xdt_T^{(i)})J(\xdt_T^{(i)},\xdt_t^{(i)})J(\xdt_t^{(i)},\Delta W_{t-\Dt})\\
    &\quad\quad\quad\quad\times\Big(J(\widehat{y}^{(j)},\xdt_T^{(j)})J(\xdt_T^{(j)},\xdt_t^{(j)})J(\xdt_t^{(j)},\Delta W_{t-\Dt})\Big)\tran {\sigma_w^2} \Dt / D
    \end{align*}
and
    \begin{align*}
    &\mathcal{K}_{b,t}^{(i,j)} =J(\widehat{y}^{(i)},\xdt_T^{(i)})J(\xdt_T^{(i)},\xdt_t^{(i)})J(\xdt_t^{(i)},\Delta b_{t-\Dt})\\
    &\quad\quad\quad\quad\times\Big(J(\widehat{y}^{(j)},\xdt_T^{(j)})J(\xdt_T^{(j)},\xdt_t^{(j)})J(\xdt_t^{(j)},\Delta b_{t-\Dt})\Big)\tran {\sigma_b^2} \Dt,
\end{align*}
as
\begin{displaymath}
J(\xdt_t,\zeta^W_{t-\Dt}) = J(\xdt_t,\Delta W_{t-\Dt}) {\sigma_w} \sqrt{\Dt} / \sqrt{D}
\end{displaymath}
and
\begin{displaymath}
J(\xdt_t,\zeta^b_{t-\Dt}) = J(\xdt_t,\Delta b_{t-\Dt}) {\sigma_b} \sqrt{\Dt}.
\end{displaymath}

Recall from our study in \cref{sec:residual_diffusion_gradient} that, as $L \rightarrow \infty$, we have that $J(\xdt_T^{(i)},\xdt_t^{(i)}) \rightarrow g_Tg_t^{-1}$ and $J(\xdt_T^{(j)},\xdt_t^{(j)}) \rightarrow g_Tg_t^{-1}$, as the evolution of $g_t$ does not depend on $x_t$ when $\phi''(0) = 0$. Furthermore, note that $J(\xdt_t,\Delta W_{t-\Dt})_{d,i,j} \rightarrow \phi'(0) \delta_{d,i} x_{t,j}$ and $J(\xdt_t,\Delta b_{t-\Dt})_{d,i} \rightarrow \phi'(0) \delta_{d,i}$. By combining these results, and by assuming that the interchange of limits is justified, we write
\begin{displaymath}
\mathcal{K}_W^{(i,j)} \rightarrow \phi'(0)^2 {\sigma_w^2} J(\widehat{y}^{(i)},x_T^{(i)}) g_T \left[\int_0^T \frac{\dprod{x_t^{(i)}}{x_t^{(j)}}}{D} g_t^{-1} {g_t^{-1}}\tran dt \right] g_T\tran J(\widehat{y}^{(j)},x_T^{(j)})\tran
\end{displaymath}
and
\begin{displaymath}
\mathcal{K}_b^{(i,j)} \rightarrow \phi'(0)^2 {\sigma_b^2} J(\widehat{y}^{(i)},x_T^{(i)}) g_T \left[\int_0^T g_t^{-1} {g_t^{-1}}\tran dt \right] g_T\tran J(\widehat{y}^{(j)},x_T^{(j)})\tran.
\end{displaymath}
Now, $g_t^{-1} {g_t^{-1}}\tran = (g_t\tran g_t)^{-1}$. Accordingly, by an application of Ito's formula for matrix SDE products \citep[Chapter V, Theorem 47]{protter2005stochastic}, for $U_t = g_t\tran g_t$ we obtain the following SDE
\begin{equation*}
    dU_t = \phi'(0)\frac{\sigma_w}{\sqrt{D}}g_t\tran\left(dB_t^W + {dB^W_t}\tran\right)g_t + \phi'(0)^2\sigma_w^2 U_t dt,
\end{equation*}
where $U_0 = \I_D$, and whose quadratic variation (a matrix, in this particular case) is of the following form
\begin{equation*}
    d[U]_t = \phi'(0)^2\frac{\sigma_w^2}{D} \Big({g_t}\tran \odot {g_t}\tran\Big)\Big(g_t \odot g_t\Big)dt,
\end{equation*}
vanishing as $D \rightarrow \infty$. Therefore, $U_t \rightarrow U_t^{\infty}$ where $dU_t^{\infty} = \phi'(0)^2\sigma_w^2 U_t^{\infty} dt$. Thus, as $D \rightarrow \infty$ the term $g_t\tran g_t$ is an infinite dimensional diagonal matrix with constant element $u_t^{\infty}$ computable by solving the ODE $du_t^{\infty} = \phi'(0)^2\sigma_w^2 u_t^{\infty}dt$ with initial value $u_0^{\infty} = 1$, i.e. $u_t^{\infty} = \exp(\phi'(0)^2\sigma_w^2 t)$. 

Note that: i) the matrix $\int_0^T (g_t\tran g_t)^{-1}dt$ is asymptotically diagonal with constant element $(1 - \exp(\phi'(0)^2\sigma_w^2 T))/(\phi'(0)^2\sigma_w^2)$; ii) the matrix $g_T g_T\tran$ is asymptotically diagonal with constant element $\exp(\phi'(0)^2\sigma_w^2 T)/(\phi'(0)^2\sigma_w^2)$. Therefore, one has that the matrix $g_T \left[\int_0^T g_t^{-1} {g_t^{-1}}\tran dt \right] g_T\tran$ is asymptotically diagonal with value $(\exp(\phi'(0)^2\sigma_w^2 T) - 1)/(\phi'(0)^2\sigma_w^2)$. Note that we rely on the assumption that the approximation errors due to considering each term separately vanish as $D \uparrow \infty$, or at least the approximation errors cancel out. Finally, $\widehat{y} = \xdt_{T,1}$ corresponds to selecting the first element of this diagonal matrix. If  $E = e^{\phi'(0)^2\sigma_w^2 T}$ then
\begin{displaymath}
    \mathcal{K}_b^{(i,j)} \rightarrow \mathcal{K}_b^{(i,j),\infty} = {\frac{\sigma_b^2}{\sigma_w^2}}(E - 1).
\end{displaymath}
Along similar lines we obtain the deterministic limit to which $K_{\mathcal{NT},W,t}^{(i,j)}$ converges as $D \uparrow \infty$, i.e., 
\begin{displaymath}
    \mathcal{K}_W^{(i,j)} \rightarrow \mathcal{K}_W^{(i,j),\infty} = \lambda_0^{(i,j),\infty} \phi'(0)^2 {\sigma_w^2} T E + \frac{\sigma_b^2}{\sigma_w^2}\left[ \phi'(0)^2  {\sigma_w^2} T E - {(E - 1)} \right]
\end{displaymath}
hence obtaining
\begin{equation*}
    \mathcal{K}^{(i,j)} \rightarrow \mathcal{K}^{(i,j),\infty} = \lambda_0^{(i,j),\infty} {C}E + {\frac{\sigma_b^2}{\sigma_w^2} CE}.
\end{equation*}
where $E = \exp(C)$ and $C=\phi'(0)^2\sigma_w^2T$. This can be contrasted with the main result of \cref{sec:doubly_infinite_diff}, where we have shown that the (standard) kernel corresponding to $D \uparrow \infty$ for $\phi''(0) = 0$ is given by
\begin{equation*}
    K^{(i,j)} \rightarrow K^{(i,j),\infty} = \lambda_0^{(i,j),\infty}(E - 1) + \frac{\sigma_b^2}{\sigma_w^2}( E  - 1).
\end{equation*}
Note that the two kernels correspond to two different training regimes: i) training all layers of the neural network; ii) training only the output layer of the neural network. However, the two kernels are  qualitatively similar. In particular, both kernels depend linearly on $\lambda_0^{(i,j),\infty}$. The only difference is with respect to the behavior of $(E - 1)$ compared to $CE$ as a function of $C$.

\subsection{Training of completed ResNets} \label{sec:doubly_infinite_diff_complete}

Results presented in \cref{sec:doubly_infinite_diff} and \cref{sec:doubly_infinite_diff_grad} entail a neural network with an infinite-dimensional input. Let $z,z' \in \R^Z$\footnote{for convenience we use in this section the $z,z'$ notation instead of $z^{(i)},z^{(j)}$, and proceed in the same way for all other quantities depending on $i,j$} be two inputs of the neural network. We consider a linear adaptation layer, i.e. an embedding, of the form $\xdt_0 = A z$ where, in line with \cref{sec:doubly_infinite_diff} and \cref{sec:doubly_infinite_diff_grad}, the elements of $A \in \R^{D \times Z}$ are i.i.d. as $\mathcal{N}(0,\sigma_Z^2)$. It follows that across $d$ we have
\begin{displaymath}
(\xdt_{0,d},\xdt'_{0,d}) \overset{i.i.d.}{\sim} \mathcal{N}_2(0,\Sigma^Z(z,z')),
\end{displaymath}
where
\begin{equation*}
\Sigma^Z(z,z') = \sigma_Z^2 \begin{bmatrix*} \norm{z}^2 & \dprod{z}{z'} \\ \dprod{z'}{z} & \norm{z'}^2 \end{bmatrix*}.
\end{equation*}
By the strong law of large numbers $\lambda_0 = \frac{1}{D} \dprod{\xdt_0}{\xdt'_0} \rightarrow  \lambda_0^{\infty} = \E[\xdt_{0,1}\xdt'_{0,1}] = \sigma_Z^2 \dprod{z}{z'}$ as $D \uparrow \infty$, hence $q_0^{\infty} = \sigma_Z^2 \norm{z}^2$ and ${q'}_0^{\infty} = \sigma_Z^2 \norm{z'}^2$. In the weakly training setting, which is equivalent to Bayesian inference with a Gaussian process prior, we know from \cref{sec:doubly_infinite_diff} that across $d$ we have
\begin{equation*}
    (x_{T,d}^{\infty},{x'}_{T,d}^{\infty}\,|\,\xdt_{0,d},{\xdt'}_{0,d}) \overset{i.i.d.}{\sim} \mathcal{N}_2\Bigg(\begin{bmatrix*} 
        \xdt_{0,d}\\
        {\xdt'}_{0,d}
    \end{bmatrix*},
    \Sigma^{\text{weak}}(z,z')\Bigg),
\end{equation*}
where 
\begin{equation*}
\Sigma^{\text{weak}}(z,z') = \sigma_Z^2 (E-1) \begin{bmatrix*} \norm{z}^2 & \dprod{z}{z'} \\ \dprod{z'}{z} & \norm{z'}^2 \end{bmatrix*} + \frac{\sigma_b^2}{\sigma_w^2}(E - 1)
\end{equation*}
with $E = e^{\phi'(0)^2\sigma_w^2 T}$. Then, by direct computation, we obtain the following Gaussian distribution
\begin{align*}
    (x_{T,d}^{\infty},{x'}_{T,d}^{\infty}) \overset{i.i.d.}{\sim} &\mathcal{N}_2\Bigg(\begin{bmatrix*} 
        0\\
        0
    \end{bmatrix*},
    \Sigma^Z(z,z') + \Sigma^{\text{weak}}(z,z')\Bigg)\\
    =\ &\mathcal{N}_2\Bigg(\begin{bmatrix*} 
        0\\
        0
    \end{bmatrix*},
    \sigma_Z^2 E \begin{bmatrix*} \norm{z}^2 & \dprod{z}{z'} \\ \dprod{z'}{z} & \norm{z'}^2 \end{bmatrix*} + \frac{\sigma_b^2}{\sigma_w^2}(E - 1)\Bigg).
\end{align*}
That is, the prior distribution induced by a doubly infinite ResNet with the input adaptation layer is i.i.d. across the dimensions $d$, and distributed as a centered Gaussian process with kernel
\begin{equation}\label{eq:doubly_infinite_cgk}
\overline{K}(z,z') = \sigma_Z^2 E \dprod{z}{z'} + \frac{\sigma_b^2}{\sigma_w^2}(E - 1).
\end{equation}

We also augment the neural network with an output adaptation layer $\widehat{y} = G \xdt_T$, where the elements of $G \in \R^{1 \times D}$ are i.i.d. as $\mathcal{N}(0,\sigma_Y^2/D)$. Then, it follows that the doubly infinite ResNet with both input and output adaption layers still follows a Gaussian process whose kernel is
\begin{equation}\label{eq:doubly_infinite_cgk_2}
\overline{\overline{K}}(z,z') = \sigma_Y^2\overline{K}(z,z').
\end{equation}
Now, consider the Bayesian noiseless linear model with fully independent prior distributions formulated by $\widehat{y} = \alpha + \beta z$ where $\alpha \in \R$, $\alpha \sim \mathcal{N}(0, \sigma_{\alpha}^2)$, $\beta \in \R^Z$, $\beta_i \sim \mathcal{N}(0, \sigma_{\beta}^2)$ for $i=1,\dots,Z$, then:
\begin{equation*}
    (\widehat{y},\widehat{y}') \sim \mathcal{N}_2\Bigg(\begin{bmatrix*} 
        0\\
        0
    \end{bmatrix*},
\sigma_{\beta}^2 \begin{bmatrix*} \norm{z}^2 & \dprod{z}{z'} \\ \dprod{z'}{z} & \norm{z'}^2 \end{bmatrix*} + \sigma_{\alpha}^2 \Bigg).
\end{equation*}
Thus, according to \eqref{eq:doubly_infinite_cgk} it follows that, within the doubly infinite limit, the completed ResNet prior model collapses to a noiseless Bayesian linear regression prior where $\sigma_{\alpha}^2 = \frac{\sigma_b^2}{\sigma_w^2}\sigma_Y^2(E - 1)$ and $\sigma_{\beta}^2 = \sigma_Y^2 \sigma_Z^2 E$. 

Under the fully trained setting, in \cref{sec:doubly_infinite_diff_grad} we have established the convergence of the NTK. Now, we consider directly the doubly infinite ResNet augmented with both input and output layers as previously defined. Recall that in the NTK literature the input layer is sometimes not trained, and the output layer is sometimes omitted \citep{arora2019exact}. Hereafter, we report only the results for the special case in which all layers are present and trained as it most closely resembles standard practice for finitely-sized networks. In particular,
\begin{align}
\overline{\overline{\mathcal{K}}}(z,z') &= \sigma_Y^2\left(\sigma_Z^2 (C + 1)E \dprod{z}{z'} + \frac{\sigma_b^2}{\sigma_w^2} CE\right) + \overline{\overline{K}}(z,z')\notag\\
&=\sigma_Y^2\left(\sigma_Z^2 (C + 2)E \dprod{z}{z'} + \frac{\sigma_b^2}{\sigma_w^2} (CE +  E - 1)\right).\label{eq:doubly_infinite_ntk_2}
\end{align}
For more general cases, the results follow along lines similar to the steps detailed in \cref{sec:doubly_infinite_diff_grad}. In particular, in all cases the kernel remains affine in $\dprod{z}{z'}$, and only the coefficients are affected.

The problem of establishing the equivalence between kernel regression and fully trained neural networks requires two steps: i) establishing the NTK convergence at initialization, as we have proved in \cref{sec:doubly_infinite_diff_grad}; ii) bounding the NTK fluctuations during training \citep{arora2019exact,jaehoon2019wide}. To the best of our knowledge, step ii) has not been formally established for architectures which are not feed-forward, e.g. for ResNets. Assuming such a result, Equation  \cref{eq:doubly_infinite_ntk_2} shows that in the doubly infinite limit fully trained ResNet correspond to noiseless (kernel) linear regression. Kernel regression is equivalent to the posterior predictive mean of a Gaussian process with the same kernel. Hence, relatively to point predictions, both weakly and fully trained doubly infinite ResNets of the considered class collapse to a linear model.

\subsection{Concluding remarks} \label{sec:note_double_infinite}

We considered doubly infinite ResNets by first establishing a diffusion limit ($L \uparrow \infty$) and then by considering increasing dimensionality of the diffusion process ($D \uparrow \infty$). Here, we briefly present the alternative approach where we invert the order to taking limits, that is first the width tends to infinity, then the depth tends to infinity. We assume fully i.i.d. network's parameters according to a time-discretization of \cref{ass:diffusion_parameters_iid}, i.e. \cref{eq:fc_iid_discr_std1} and \cref{eq:fc_iid_discr_std2}, with $\sigma_b^2=0,\sigma_w^2=1$ for simplicity of exposition. We consider two inputs $x,y\in\R^I$ and the same input adaption layer of \cref{sec:doubly_infinite_diff_complete}, with $\sigma_I^2=1$. Once again for simplicity of exposition, we consider in this section the ResNet \cref{eq:resnet_fc} without the $\phi$ activation function. In such a setting, the case $L \uparrow \infty$ is thoroughly studied, starting with the seminal work of \cite{neal1995bayesian} and proceeding with the more recent developments of \cite{lee2018deep}, \cite{jacot2018neural} and \cite{lee2019wide}. By means of the standard approach we obtain the following recursion over the layers of an infinitely wide ResNet:
\begin{displaymath}
(x_{0,d}, y_{0,d}) \overset{i.i.d.}{\sim} \mathcal{N}_2(0, \Sigma_0)
\end{displaymath}
with
\begin{displaymath}
\Sigma_0^{xy}=\dprod{x}{y},
\end{displaymath}
and
\begin{displaymath}
(x_{t+\Dt,d}, y_{t+\Dt,d}) \overset{i.i.d.}{\sim} \mathcal{N}_2(0, \Sigma_{t+\Dt} = \Sigma_t + \Sigma_{t:t+\Dt}) 
\end{displaymath}
with
\begin{displaymath}
\Sigma_{t:t+\Dt}^{xy} = \E_{\varepsilon,\gamma \sim \Sigma_t^{xy}}[\psi(\varepsilon)\psi(\gamma)]\Dt,
\end{displaymath}
where $d=1,\dots$ are the neural network units, which are i.i.d. in the infinitely wide limit, and we used the notation $\Sigma^{xy}$ to denote upper-right element of a $2 \times 2$ covariance matrix (the remaining elements are obtained setting $x=y$ or $y=x$). The recursion over layer depth (time):
\begin{equation*}
\Sigma_{t+\Dt}^{xy} = \Sigma_t^{xy} + \E_{\varepsilon,\gamma \sim \Sigma_t^{xy}}[\psi(\varepsilon)\psi(\gamma)]\Dt
\end{equation*}
provides an easy way to establish the infinitely deep limit ($D \uparrow \infty$) via the ODE limit of the recursion over
\begin{equation}\label{eq:cov_ode}
\dot{\Sigma}_{t}^{xy} = \E_{\varepsilon,\gamma \sim \Sigma_t^{xy}}[\psi(\varepsilon)\psi(\gamma)].
\end{equation}
The extension to an arbitrary number of inputs is immediate. In particular, the resulting continuum of Gaussian processes, one at each $t \geq 0$, requires to solve the ODE \cref{eq:cov_ode}. The expectation defining the drift of \cref{eq:cov_ode} has a closed-form solution for some specific choices of the activation functions, and requires numerical approximations in the general case. The ODE  \cref{eq:cov_ode} also requires numerical integration aside from specific cases, such as $\psi$ being the identity function. It is easy to check that in this case one obtains the same results of \cref{sec:doubly_infinite_diff} for $\phi'(0)=1$ (for instance when $\phi=\tanh$). Thus, in this setting, the order in which the limits of depth and width are taken does not matter. When $\phi'(0) \neq 1$, the results of this Section only differ by the multiplicative constant $\phi'(0)$ which can be absorbed into $\sigma_b$ and $\sigma_w$, resulting again in an equivalent model.  Moreover, the derivations of this Section can be extended to the cover the case of a suitably smooth activation $\phi$, in which case the constant $\phi'(0)$ would be recovered as well. Ultimately, the $\phi$ activation plays a very limited role for large $L$, and when the $\psi$ activation is missing (i.e. it is the identity) the ResNet tends to a linear neural network in the doubly infinite limit.

\section{Numerical results}\label{sec:numerical_experiments}

We start by introducing all the neural network models considered in this section. In all the experiments we set $\psi$ to the identity function and, without loss of generality, we assume $T=1$.

Regarding fully-connected networks, we consider the fully i.i.d. parametrization of \cref{ass:diffusion_parameters_iid}. When $Z=1$, i.e. for 1-dimensional inputs, we can opt for copying the input across all dimensions: $\xdt_{0,\bullet} = z$ for an input $z$, i.e. $\xdt_{0,d} = z$ for each $d \in D$. We refer to this model as $\mathcal{F}_{\tanh}$ when $\phi=\tanh$ and as $\mathcal{F}_{\swish}$ when $\phi=\swish$. The $\swish$ activation function ($\swish(x) = x \sigmoid(x)$) has been shown empirically \citep{ramachandran2017searching} and theoretically \citep{hayou2019impact} to be competitive. More in general, for any input dimension $Z$, we complete the model with input and output adaptation layers as defined in \cref{sec:doubly_infinite_diff_complete}. We choose to use $\sigma_Z^2=Z/I$ and $\sigma_Y=1/D$. We will refer to such completed models as $\overline{\overline{\mathcal{F}}}_{\tanh}$ and $\overline{\overline{\mathcal{F}}}_{\swish}$. 

Regarding convolutional networks, we consider the fully i.i.d. parameterization of \cref{ass:diffusion_parameters_iid_cnn}. A generic input is here of dimension $U \times V \times C$, with $U,V,C$ representing the input height, width and number of channels. The adaptation layer is here an 1-by-1 convolution adapting the number of channels to the model dimension $D$. More precisely: for each $p$ $\xdt_{0,p} = A z_{p}$ where $p$ index the $UV$ positions and the elements of $A \in \R^{D \times Z}$ are i.i.d. as $\mathcal{N}(0,1/C)$. The output layer is composed again of a 1-by-1 convolution which is followed by global space averaging. That is: $\widehat{y} = \frac{1}{UV} \sum_{p=1}^{UV} G\xdt_{T,p}$, here again $p$ index the $UV$ positions and the elements of $G \in \R^{Y \times D}$ are i.i.d. as $\mathcal{N}(0,1/D)$. We refer to this convolutional model with $\phi=\tanh$ as $\overline{\overline{\mathcal{C}}}_{\tanh}$.

\subsection{Correctness checks}\label{sec:numerical_correct}

We start with a numerical study of the correctness of the results of \cref{sec:residual_diffusions}, \cref{sec:residual_diffusion_gradient} and \cref{sec:doubly_infinite}. We consider $\mathcal{F}_{\tanh}$ with $\sigma_w^2=\sigma_b^2=1$ and two 1-dimensional inputs $z^{(1)} = 0$, $z^{(2)} = 1$, hence $\xdt^{(1)}_{0,\bullet} = z^{(1)},\xdt^{(2)}_{0,\bullet} = z^{(2)}$, and simulate $10.000$ draws of the first dimension ($d = 1$) of:
\begin{enumerate}[a)]
    \item $\xdt^{(1)}_T$, $\xdt^{(2)}_T$ via the ResNet recursion \cref{eq:shallow_block};
    \item $x^{(1)}_T$, $x^{(2)}_T$ via the discretization \cref{eq:euler_sde} of the limiting SDE \cref{eq:doubly_infinite_fc};
    \item $x^{(1),\infty}_T$, $x^{(2),\infty}_T$ via the analytical transition density \cref{eq:fc_iid_infty_analytical}.
\end{enumerate}
for $L=D=500$. We only consider the first dimension because, as observed in \cref{sec:doubly_infinite_diff}, in the limit of $D \uparrow \infty$ the dimensions are i.i.d. Our analysis imply that a) and b) are equivalent when $L \uparrow \infty$, and c) is equivalent to b) when additionally $D \uparrow \infty$. As both $D$ and $L$ are large we expect good agreement between the distributions corresponding to a) b) and c). Numerical results are reported in \cref{fig:sanity_check} where indeed a good agreement with the theory is observed.
\begin{figure}[!htb]
    \centering
    \includegraphics[width=\textwidth]{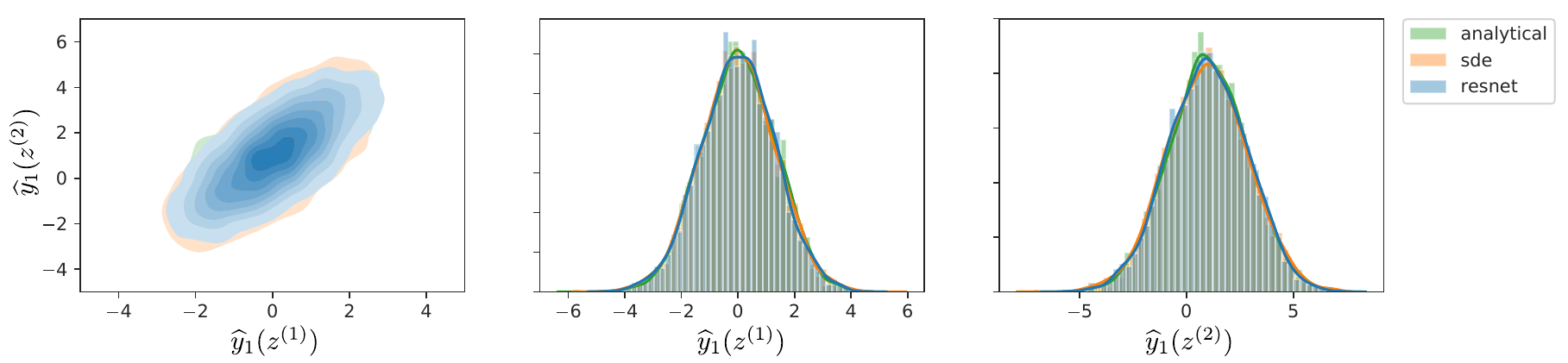}
    \caption{For model $\mathcal{F}_{\tanh}$: 2D KDE (kernel density estimator) plot for $(\widehat{y}_1(z^{(1)}),\widehat{y}_1(z^{(2)}))$ (left), 1D KDE and histogram plots for $\widehat{y}_1(z^{(1)})$ (center), $\widehat{y}_1(z^{(2)})$ (right) when $\widehat{y}_1$ is sampled from a ResNet (resnet), from the Euler discretization of its limiting SDE for $L \uparrow \infty$ (sde) and from the analytical SDE transition density for $L,D \uparrow \infty$ (analytical); $\widehat{y}$ denotes a generic model output, hence $\widehat{y}_1$ is its first dimension.}
    \label{fig:sanity_check}
\end{figure}

For the same neural network model $\mathcal{F}_{\tanh}$, \cref{fig:sanity_check_ntk} displays the convergence of the NTKs $\mathcal{K}_W^{(1,2)},\mathcal{K}_b^{(1,2)}$ to their limits $\mathcal{K}_W^{(1,2),\infty},\mathcal{K}_b^{(1,2),\infty}$ for $z^{(1)} = 1$, $z^{(2)} = 2$. The convergence is assessed in the setting where both the depth $L$ and the dimension $D$ grow unbounded jointly. Results displayed in \cref{fig:sanity_check_ntk} support the numerical analysis of \cref{sec:doubly_infinite_diff_grad}. Results also support the conjecture that the order in which the limits are taken does not impact the results for the smooth activation functions considered in the present work.
\begin{figure}[!htb]
    \centering
    \includegraphics[width=0.6\textwidth]{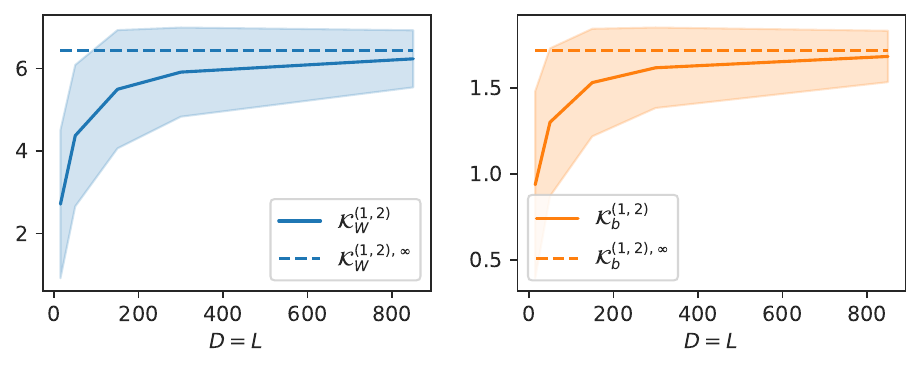}
    \caption{For model $\mathcal{F}_{\tanh}$: NTKs $\mathcal{K}_W^{(1,2)},\mathcal{K}_b^{(1,2)}$ as function of $L=D$ and their analytical limits $\mathcal{K}_W^{(1,2),\infty},\mathcal{K}_b^{(1,2),\infty}$ corresponding to $D,L \uparrow \infty$; empirical average plotted with solid line, shaded areas correspond to $\pm2$ empirical standard deviations.}
    \label{fig:sanity_check_ntk}
\end{figure}

\subsection{Function space distributions}

We show empirically that the dependency on the input is retained, and that the output distribution does not exhibit perfect correlation for very deep residual networks constructed as in the present paper. Again, we consider the neural network model $\mathcal{F}_{\tanh}$ with $\sigma_w^2=\sigma_b^2=1$. \cref{fig:sanity_check} shows that $\xdt^{(1)}_{T,1}$ and $\xdt^{(2)}_{T,1}$ have different distributions. This means that the input dependency is retained in the neural network. Furthermore, from the left plot we see that $\xdt^{(1)}_{T,1}$ and $\xdt^{(2)}_{T,1}$ are not perfectly correlated, otherwise the 2D KDE would collapse to a straight line.

\cref{fig:f_diffusion} (top panels), which can be contrasted with \cref{fig:fspace_T_example}, displays samples of $\xdt_{T,1}$ from $\mathcal{F}_{\tanh}$ in function space for different combinations of $L$ and $D$. More specifically, we approximate function draws by considering 400 inputs $z^{(i)}$ equally spaced on $[-2,2]$. Using the ResNet recursion \cref{eq:shallow_block} we obtain 400 output values $\xdt_{T,1}^{(i)}$. We repeat this procedure to obtain 10.000 function draws and report the results in \cref{fig:f_diffusion} (top). For $L=D=500$, i.e. for jointly large width and depth, the function draws are close to linear in agreement with \cref{sec:doubly_infinite_diff_grad}. We then replicate this experiment for $\mathcal{F}_{\swish}$ and we report the results in \cref{fig:f_diffusion} (bottom panels). In this case $\phi'(0) = \phi''(0) = 1/2$ and \cref{ass:non_explosivity} is not satisfied, but in this specific instance we did not observe divergent trajectories for the $10.000$ function draws. The impact of adding an input adaptation layer is limited to symmetrizing the function space distributions around the origin, while $\mathcal{F}_{\tanh}$ and $\mathcal{F}_{\swish}$ trend upward with $z$. Hence, we do not include additional plots for this additional case as they add little information. \cref{app:additional_plots} contains additional 2D plots of samples of $\xdt_{T,1}$ for both $\mathcal{F}_{\tanh}$ and $\mathcal{F}_{\swish}$.
\begin{figure}
    \centering
    \includegraphics[width=\linewidth]{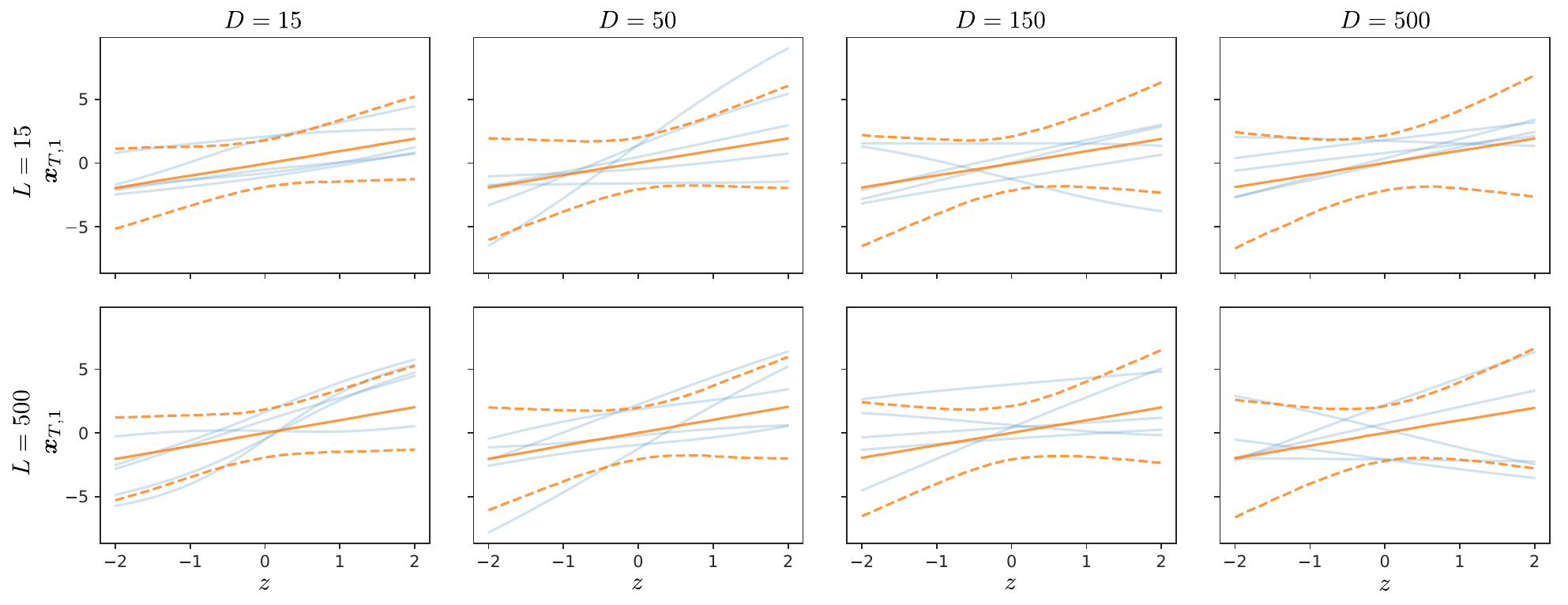}
    \includegraphics[width=\linewidth]{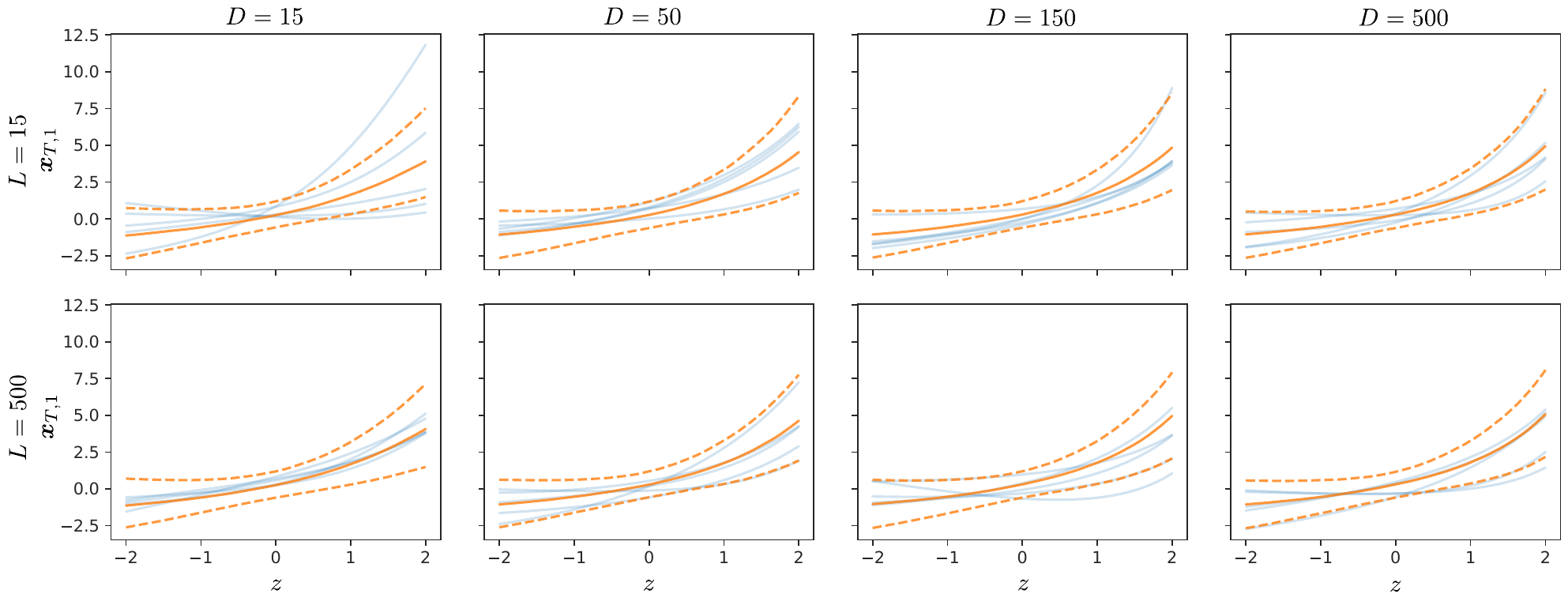}
    \caption{Function samples of $\xdt_{T.1}$ for $\mathcal{F}_{\tanh}$ (top) and ${F}_{\swish}$ (bottom), see \cref{fig:fspace_T_example} for the description of the plotted quantities.}
    \label{fig:f_diffusion}
\end{figure}

Finally, \cref{fig:corr_sde} (top panels) displays the correlations $\rho[\xdt^{(1)}_{T,1},\xdt^{(2)}_{T,1}]$ for the neural network's inputs $(z^{(1)},z^{(2)})$ in the range $[-2,2] \times [-2,2]$, for the $\tanh$ and $\swish$ activation functions: for different inputs, the corresponding output correlations are far from 1. We refer to the model of \cref{fig:fspace_T_example} with $\tanh$ activation as $\mathcal{E}_{\tanh}$, and to the model of \cref{fig:fspace_T_example} with $\ReLU$ activation as $\mathcal{E}_{\ReLU}$. For the sake of comparison, we show in \cref{fig:corr_sde} (bottom panels) the correlations $\rho[x^{(1)}_{last,1},x^{(2)}_{last,1}]$ for pre-activation 1 for $\mathcal{E}_{\tanh}$ and $\mathcal{E}_{\ReLU}$: all correlations are close to 1.
\begin{figure}
    \centering
    \includegraphics[width=0.65\linewidth]{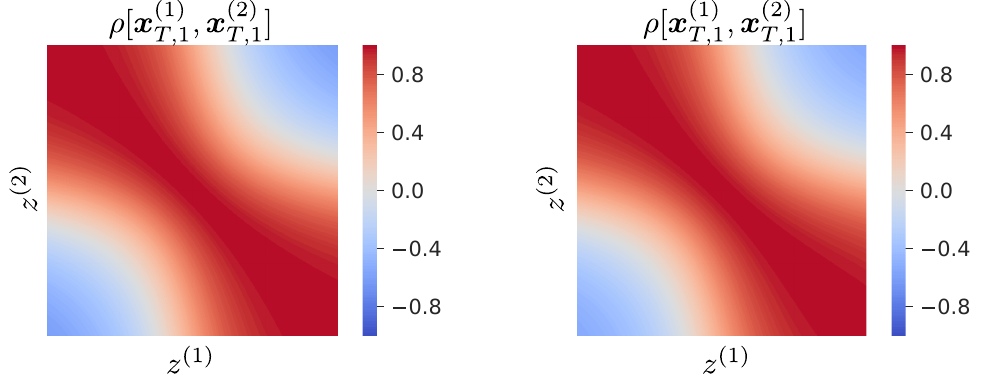}
    \includegraphics[width=0.65\linewidth]{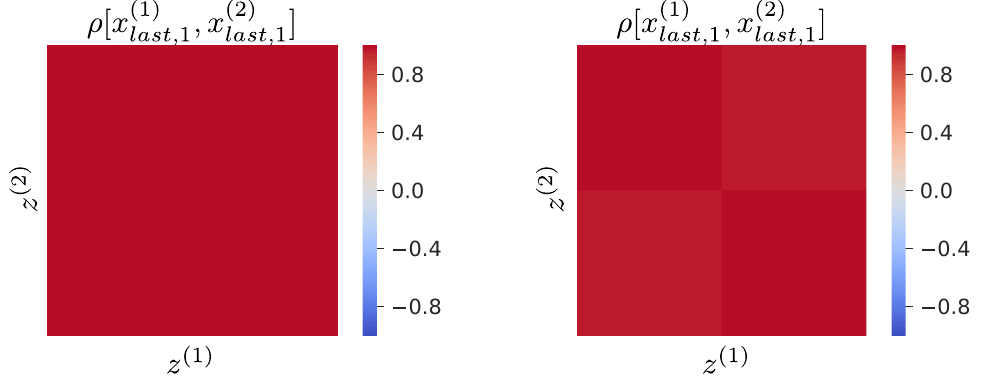}
    \caption{Output correlation heatmap for $\mathcal{F}_{\tanh}$ (top-left), $\mathcal{F}_{\swish}$ (top-right),  $\mathcal{E}_{\tanh}$ (bottom-left), $\mathcal{E}_{\ReLU}$ (bottom-right).}
    \label{fig:corr_sde}
\end{figure}

\subsection{Doubly infinite fully trained fully-connected ResNets}\label{sec:numerical_ft_fc}

We consider the MNIST dataset \citep{lecun1998mnist}. In particular, each observation $(z,k)$ is composed of an image $z$ and a target $k$ among 10 classes representing the numbers $0$ to $9$. We flatten the images obtaining $z \in \R^{784}$ and, as common, we rescale each $z$ as $z/255$ to bound the inputs on $[0,1]$. We consider the neural network $\overline{\overline{\mathcal{F}}}_{\tanh}$ trained via full-batch GD training and average MSE loss. In order to frame classification as a regression problem, we use 1-hot encoding: each class $k=1,\dots,10$ is encoded as $y_k \in \R^{10}$ which has the $k$-th component equal to $1$ and all other components equal to $0$. The gradients are computed with respect to $(\varepsilon^W_t, \varepsilon^b_t)$ in \cref{eq:fc_iid_discr_std1} and \eqref{eq:fc_iid_discr_std2}.

We consider $\overline{\overline{\mathcal{F}}}_{\tanh}$ with $\sigma_w^2=1$ and $\sigma_b^2=0.1^2$. The use of a smaller bias variance is common in the NTK literature \citep{arora2019exact}. From \cref{sec:doubly_infinite_diff_complete} we know that as $L$ and $D$ increase the fully trained $\overline{\overline{\mathcal{F}}}_{\tanh}$ collapses to noiseless Bayesian linear regression. We consider 20.000 randomly sampled observations from the training portion of the MNIST dataset, and we compute the test accuracy on the test portion of the MNIST dataset, which is composed of 10.000 observations. Using 1-hot encoding we perform kernel regression using kernel \cref{eq:doubly_infinite_ntk_2} via standard kernel regression \citep{williams2006gaussian} for the predictive posterior mean of Gaussian processes. For numerical stability the model is augmented with a small noise variance equal to $1/20.000$, and we obtain a test accuracy of $85.36\%$. We compare this accuracy with test accuracies computed for $\overline{\overline{\mathcal{F}}}_{\tanh}$ under different values of $D=L$, which is fully-trained for 120 epochs. We use a single learning rate tuned to optimize final test accuracy. 

In practice, the training of neural networks typically is performed via SGD, or via other stochastic variants of GD, as full-batch training is prohibitively expensive for large datasets. Accordingly,  here we perform SGD training of $\overline{\overline{\mathcal{F}}}_{\tanh}$, with batches of 200 observations each. Again, we consider 120 epochs and different different values of $D=L$. The same learning rate is used. In both experiments no further adjustments are performed, such as gradient clipping. The results are reported in \cref{fig:nkt_fc_train} (left). We observe that there is strong agreement between the limiting theoretical test accuracy and the final test accuracy of $\overline{\overline{\mathcal{F}}}_{\tanh}$ fully trained with GD, which is the case covered by our theory. Moreover this result empirically extends to SGD.
\begin{figure}
    \centering
    \includegraphics[width=0.3\linewidth]{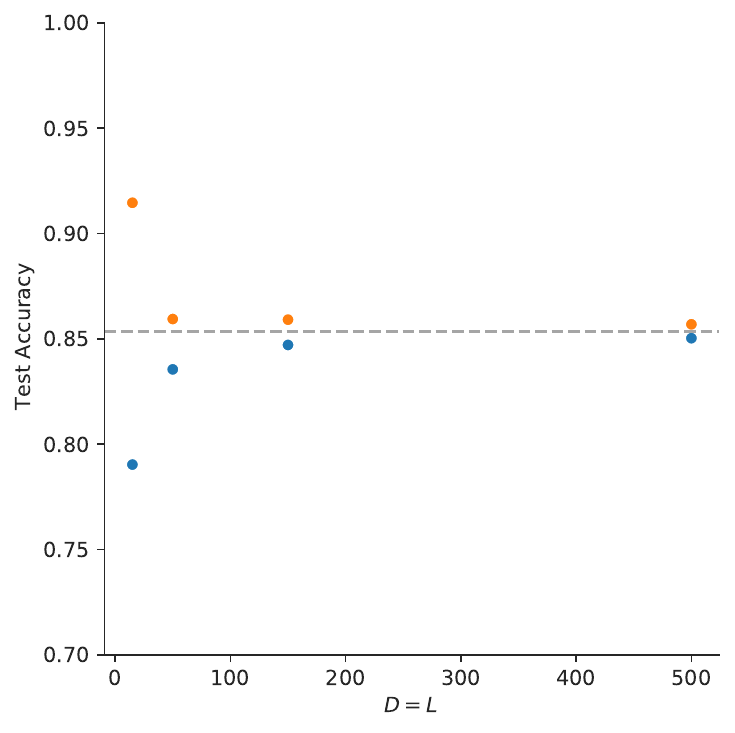}
    \includegraphics[width=0.3\linewidth]{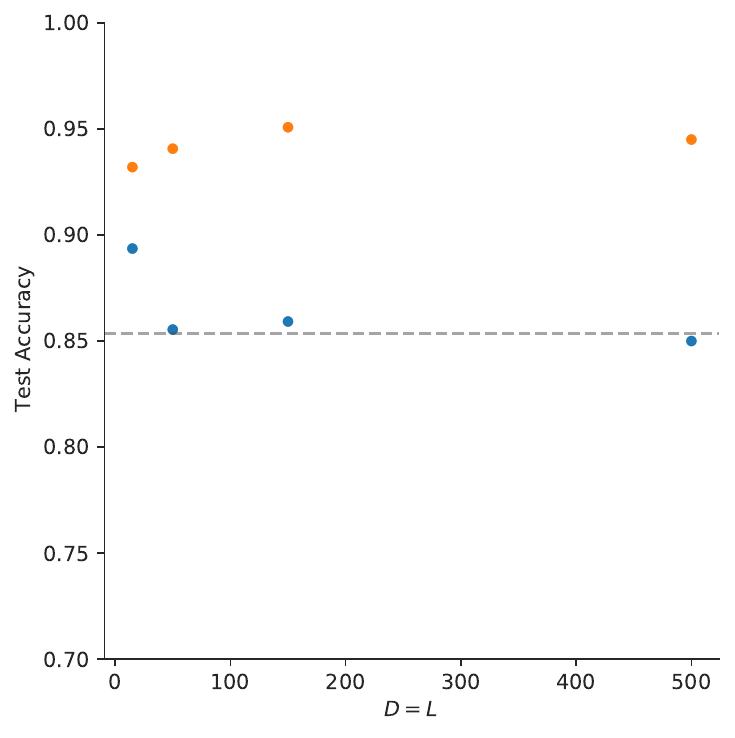}
    \caption{Final MNIST test accuracy after 120 epochs of training $\overline{\overline{\mathcal{F}}}_{\tanh}$ via GD (left, blue), SGD (left, orange), full-batch Adam (right, blue), mini-batch Adam (right, orange) compared with the theoretical limiting value corresponding to Bayesian linear regression (dashed gray).}
    \label{fig:nkt_fc_train}
\end{figure}

For completeness, we consider the same training setting with Adam \citep{kingma2015adam}, a popular adaptive stochastic optimizer, and we report the results in \cref{fig:nkt_fc_train} (right) for both full-batch and mini-batch variants. While there is no strong consensus on whether Adam outperforms or underperforms SGD when a carefully tuned learning rate is used \citep{wilson2017marginal,choi2019empirical}, Adam is known to be less sensitive to learning rate specifications and exhibits more robust behavior in difficult optimization problems. In particular, the proposed experiment provides an alternative viewpoint: Adam (with mini-batching, as standard) is able to ``escape'' the domain of attraction of linear model solutions, at least up to the largest model size here considered. We suspect that more complex neural network architectures might exhibit analogous pathologies at initialization when the number of network's parameters is very large, and Adam seems more robust to these issues. In any case, a formal investigation would require new results in the NTK literature to cover adaptive optimizers.

\subsection{Doubly infinite fully trained convolutional ResNets}\label{sec:numerical_ft_cnn}

While a theoretical investigation of the backward properties of CNNs is beyond the scope of the present paper, in this section we empirically investigate to what extent the observations of \cref{sec:numerical_ft_fc} extends to convolutional neural networks. In particular, we consider $\overline{\overline{\mathcal{C}}}_{\tanh}$ with $\sigma_w^2=1$ and $\sigma_b^2=0.1^2$. The setting is the same of \cref{sec:numerical_ft_fc}, with the exception that the input images are not flattened. We consider training under MSE loss for 120 epochs with both SGD and Adam. For computation reasons we restrict the maximum model size to $D=L=150$ and do not investigate full-batch training. We report the results of this experiment in \cref{fig:nkt_cnn_train}. These results suggests convergence as $D=L$ which is reminiscent of what observed in \cref{fig:nkt_fc_train}. Similarly to \cref{fig:nkt_fc_train} Adam exhibits superior performance for large $D=L$.

\begin{figure}
    \centering
    \includegraphics[width=0.3\linewidth]{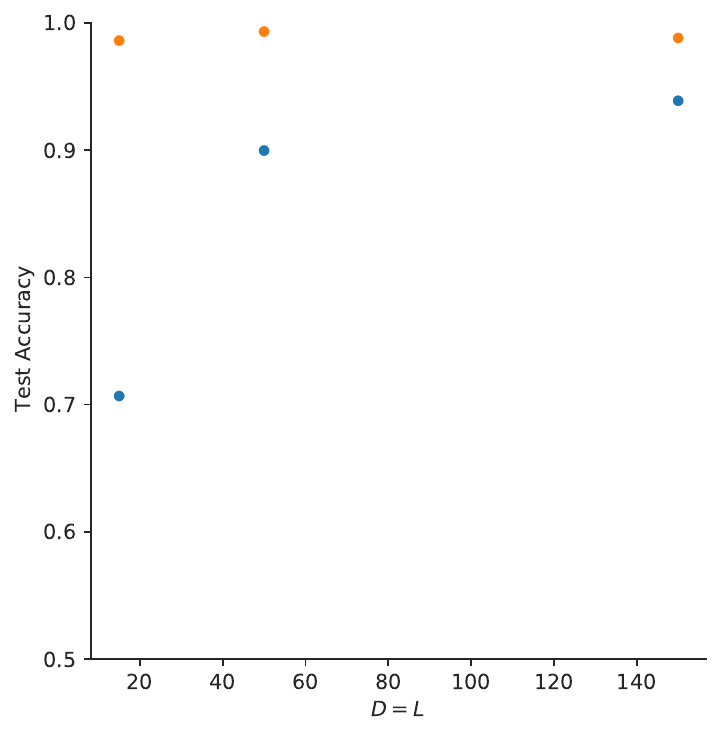}
    \caption{Final MNIST test accuracy after 120 epochs of training $\overline{\overline{\mathcal{C}}}_{\tanh}$ with SGD (blue) and Adam (orange).}
    \label{fig:nkt_cnn_train}
\end{figure}

\subsection{Empirical Bayes for wide and deep fully-connected ResNets}\label{sec:empirical_bayes}

In this section we present a proof of concept experiment where hyperparameter optimization is performed via empirical Bayes on a surrogate Bayesian linear model (BLM). In \cref{sec:doubly_infinite_diff_complete} we established that the prior of a ResNet with fully i.i.d. parameterization and the identity $\psi$ activation converges to a BLM as width and depths grows unboundedly. We can thus consider the marginal likelihood of the limiting BLN as a proxy for the marginal likelihood of a wide and deep ResNet and leverage on it to conduct approximate empirical Bayes inference for the finite ResNet. More in detail, we consider a fully connected ResNet with the input adaptation layer detailed in \cref{sec:doubly_infinite_diff_complete}. The corresponding limiting kernel in the doubly infinite case is given by \cref{eq:doubly_infinite_cgk}. We consider a simple classification task consisting of discriminating between the digits '3' and '7' on the MNIST dataset. We randomly sample 100 images corresponding to the classes '3' and '7', which we will use to maximize the marginal likelihood. Each greyscale image $z$ is mapped to the $[-1,+1]$ interval via the transform \texttt{z -> z/255*2-1}. The '3' class is mapped to $-1$ while '7' is mapped to $+1$. We assume a small i.i.d. Gaussian observation noise ($\sigma_e=0.01$) to improve numerical stability. The marginal likelihood of the limiting BLM is optimized via a differential evolution algorithm \citep{price2013differential} over the positive hyperparameters $\sigma_z,\sigma_w,\sigma_b$. We obtain an average, i.e. per sample, negative marginal log-likelihood (NLL) of $0.65$ which suggests a good fit to the data: for comparison the ``default'' hyperparameters $\sigma_z=\sigma_w=\sigma_b=1.0$ result in a NLL of $3.25$. In \cref{fig:marginal_likel} we plot the optimal and ``default'' NLL of the BLM jointly with the NLL of the ResNet for different widths and depths. The NLL of the finite ResNet is naïvely estimated via Monte Carlo integration over the hidden layers' units, with the last layer integrated out analytically. This estimation can be considered reliable only for large width, where the stochastic dependency of each layer on the previous layer becomes deterministic \citep{lee2018deep}. Nonetheless, this simple approach suffices to illustrate the convergence of the finite width and depth NLLs to their limiting counterparts.

\begin{figure}
    \centering
    \includegraphics[width=0.35\linewidth]{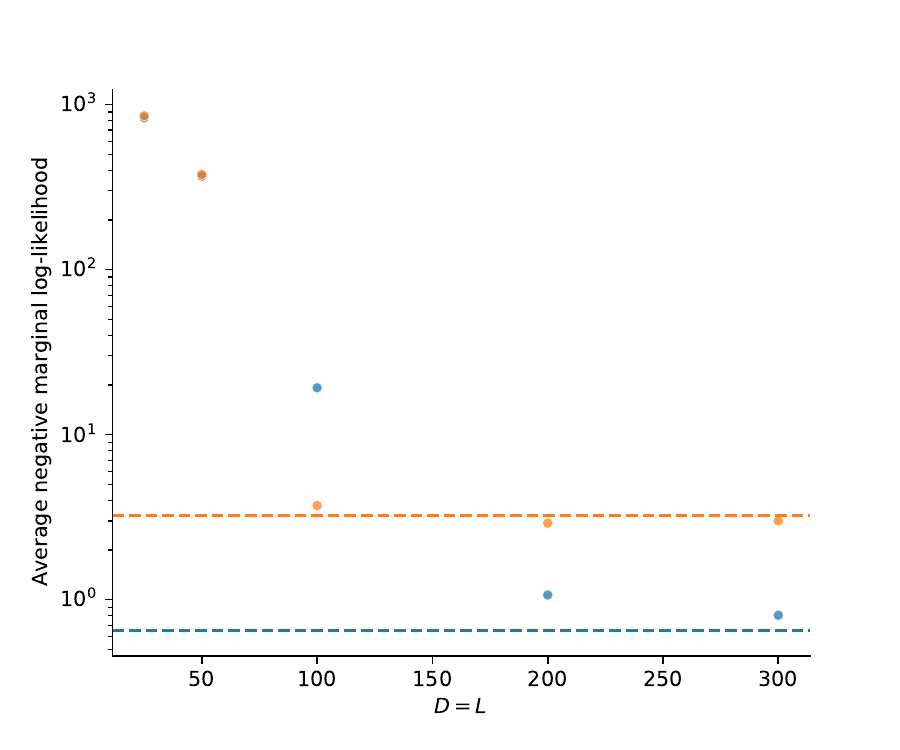}
    \caption{NLL of the limiting BLM (dashed line) and of the corresponding ResNet for different widths and depths (dots); orange: ``default'' hyperparameters, blue: optimal hyperparameters.}
    \label{fig:marginal_likel}
\end{figure}

\section{Discussion}\label{sec:conclusion}

We investigated the large-depth limit of identity ResNets \citep{he2016identity}, establishing convergence to solutions of SDEs. Our results rely on smooth activation functions and on distributions for network's parameters which shrink as the total depth increases; further conditions on activation functions are obtained by restricting the limiting SDEs to be non explosive. Building upon our connection between infinitely deep fully-connected ResNets and diffusion processes, we showed that both forward and backward dynamics are well-behaved. With regards to forward-propagation, we showed that as the total depth grows unboundedly: i) the dependency of the last layer on the input does not vanish; ii) the last layer, as a stochastic function on input space, remains flexible without collapsing to restrictive families of distributions, iii) the last layer does not collapse to a deterministic limit, nor it diverges. Moreover, we established conditions on the activation functions in order to avoid explosive dynamics over the layers. With regards to backward-propagation, under non-explosivity conditions, we showed that the Jacobian of the final layer with respect to any layer can be expressed as the multiplication of two matrix diffusions which satisfy the same desiderata i), ii) and iii) in the large-depth limit, and hence it is well-behaved. Moreover, we addressed the problem of the trainability at initialization of such a neural network, showing that exploding gradients are not possible in the large-width limit, and that the ResNet is invertible. In contrast to the information propagation approach, our analysis covers finitely-wide neural networks and correlated network's parameters. While we focussed on fully-connected ResNets, there are no theoretical issues to extend our results to the more general convolutional architectures, as we have shown for the forward-propagation analysis of convolutional  ResNets.

Limited to fully i.i.d. network's parameters and fully-connected neural networks without the second activation $\psi$, we investigated the case of the doubly infinite ResNets where both the network's depth and the network's width grow unboundedly. The attractiveness of the doubly infinite setting is mainly related to the potential of obtaining analytical results. We showed that doubly infinite fully-connected ResNets converge to Gaussian processes whose kernels can be computed by solving corresponding systems of ODEs. In particular, when $\phi''(0) = 0$ and the model is completed with a fully i.i.d. input adaptation, we showed that the doubly infinite ResNet collapses to a Bayesian linear model with a fully factorial prior distribution. To conclude, we obtained the form of the NTK that corresponds to full training with continuous time gradient descent and quadratic loss of doubly infinite ResNets. In particular, we observed that such a kernel is qualitatively identical to the kernel of the Gaussian process arising in this doubly infinite limit, thus implying that fully trained doubly infinite networks of the considered class are again equivalent to performing linear regression. Although our results on the doubly infinite setting are not completely mathematically rigorous, numerical experiments support the validity of the proposed derivations. Our study on doubly infinite ResNets illustrate the pitfalls that must be overcome in order to derive non-trivial limits. Architectures, network's parameters and activation functions need to satisfy precise conditions. However, still under these conditions, the resulting limiting behavior can be very unexpressive. While this is an undesirable result if inference via the limiting process is the final goal, the connection to simple models allows to perform hyper-parameter optimization on the finitely-sized neural network by means of empirical Bayes procedures on the corresponding linear model. 

The fields of  diffusion processes and SDEs are mature and rich fields \citep{oksendal2003stochastic,karatzas1999brownian,revuz1999continuous,kloeden1992numerical,stroock2006multidimensional}, with a vast range of theoretical results and simulation methods. We envision that examining neural networks properties from the point of view of SDEs will bring further insights. Our study suggests two main research directions of future work. Firstly, to overcome modeling limitations one could narrow the gap between theory and practice by considering more realistic residual blocks consisting of multiple layers. This may be approached either via fractional Brownian motions \citep{biagini2008stochastic} or via re-scaled Brownian motions; such an extension would allow to consider neural networks which are infinitely wide only in the residual blocks internal dimension. Secondly, a mathematically rigorous treatment of of doubly infinite setting of \cref{sec:doubly_infinite} could be developed. This, as a main difficulty, involves to  work with infinite-dimensional stochastic processes \citep{prato2014stochastic}. Indeed, the standard approach to establish large-width limits consists in postulating both the limiting and the converging processes on the same infinite space, while limiting the connectivity of the converging processes \citep{matthews2018gaussian}. This approach allows to establish convergence on a space of fixed (infinite) dimension. While the standard theory of diffusion limits is well established \citep{stroock2006multidimensional}, it covers only the case of diffusion processes of finite-dimensions. 

\clearpage
\begin{appendices}
\crefalias{section}{appsec}

\section{Proofs} \label{app:proofs}

This appendix contains all the proofs of the theorems stated in the main text and the lemmas required to prove them.

\begin{proof}[Proof of \cref{thm:sde_convergence}]
This is \cite[Theorem 2.2]{nelson1990arch}: \cref{ass:inf_coeff} and the postulated weakly unique and non-explosive weak solution satisfy all the conditions required for the application of \cite[Theorem 2.2]{nelson1990arch}.
Note that we use a stronger non-explosivity condition (\cite{oksendal2003stochastic}).
Alternatively, for this standard result the reader can refer to the monograph \cite{stroock2006multidimensional} on which \cite{nelson1990arch} is based; yet another reference is \cite{ethier2009markov}.
\end{proof}

\begin{lemma} \label{lem:bounds}
If $\phi$ satisfies \cref{ass:activation}, $\epsilon \sim \mathcal{N}(0, \sigma^2)$ with $\sigma^2 \leq \sigma_*^2$, $\alpha > 0$, then we can find $M_2(\alpha, \sigma_*^2) < \infty$ and $M_3(\alpha, \sigma_*^2) < \infty$ such that:
\begin{align*}
    &\E \left[ |\phi''(\epsilon)|^\alpha \right]  \leq M_2(\alpha, \sigma_*^2) \\ 
    &\E \left[ |\phi'''(\epsilon)|^\alpha \right] \leq M_3(\alpha, \sigma_*^2)
\end{align*}
\end{lemma}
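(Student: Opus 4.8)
The plan is to reduce everything to the exponential tail hypothesis on $\phi''$ and $\phi'''$ in \cref{ass:activation} combined with the finiteness of all exponential moments of a Gaussian.

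First I would upgrade the limit condition in \cref{ass:activation} to a global pointwise bound. Since $\phi$ is three times continuously differentiable, $\phi''$ and $\phi'''$ are continuous, hence bounded on every compact set $[-R,R]$. The hypothesis $\lim_{|x|\uparrow\infty} |\phi''(x)|/e^{k|x|} < \infty$, and the analogous one for $\phi'''$, yields, for $R$ large enough, a constant $c$ with $|\phi''(x)| \le c\, e^{k|x|}$ and $|\phi'''(x)| \le c\, e^{k|x|}$ for all $|x| \ge R$. Combining with the bound on $[-R,R]$ and using $e^{k|x|} \ge 1$, I obtain global bounds $|\phi''(x)| \le C_2\, e^{k|x|}$ and $|\phi'''(x)| \le C_3\, e^{k|x|}$ valid for all $x \in \R$, with $C_2, C_3$ depending only on $\phi$ (and in particular not on $\alpha$ or $\sigma_*^2$).

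Next, raising to the power $\alpha > 0$ gives $|\phi''(\epsilon)|^\alpha \le C_2^\alpha e^{\alpha k |\epsilon|}$ and $|\phi'''(\epsilon)|^\alpha \le C_3^\alpha e^{\alpha k |\epsilon|}$, so it only remains to bound $\E[e^{\alpha k |\epsilon|}]$ uniformly over $\sigma^2 \le \sigma_*^2$. I would use $e^{\alpha k|\epsilon|} \le e^{\alpha k \epsilon} + e^{-\alpha k \epsilon}$ together with the Gaussian moment generating function $\E[e^{t\epsilon}] = e^{t^2 \sigma^2 / 2}$, which gives $\E[e^{\alpha k |\epsilon|}] \le 2\, e^{\alpha^2 k^2 \sigma^2 / 2} \le 2\, e^{\alpha^2 k^2 \sigma_*^2 / 2}$, where the last inequality uses $\sigma^2 \le \sigma_*^2$ and the monotonicity of $\sigma^2 \mapsto e^{\alpha^2 k^2 \sigma^2/2}$. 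Setting $M_2(\alpha, \sigma_*^2) = 2\, C_2^\alpha e^{\alpha^2 k^2 \sigma_*^2/2}$ and $M_3(\alpha, \sigma_*^2) = 2\, C_3^\alpha e^{\alpha^2 k^2 \sigma_*^2/2}$ then completes the argument.

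There is no substantive obstacle here; the argument is elementary. The only point requiring mild care is that the bound must be uniform in $\sigma^2$ over the range $\sigma^2 \le \sigma_*^2$ — which is precisely why the statement is phrased in terms of a uniform upper bound $\sigma_*^2$ — and this is exactly what the monotonicity of the Gaussian MGF in $\sigma^2$ delivers. This lemma is of the kind used later to justify dominated-convergence and uniform-integrability arguments when passing to the $\Dt \downarrow 0$ limit (e.g. in verifying \cref{ass:inf_coeff} for the ResNet \cref{eq:resnet_fc}), where $\epsilon$ plays the role of a preactivation noise whose variance is controlled by $\Dt$ times a locally bounded function of the state, hence stays below a fixed $\sigma_*^2$ on compacts for $\Dt$ small.
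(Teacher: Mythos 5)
Your proof is correct and follows essentially the same route as the paper's: control $\phi''$ (resp.\ $\phi'''$) by its supremum on a compact set plus the exponential tail bound from \cref{ass:activation}, then bound the resulting exponential moment of the Gaussian via its moment generating function, which is increasing in $\sigma^2$ and hence uniformly bounded for $\sigma^2 \leq \sigma_*^2$. The only cosmetic difference is that you assemble a single global bound $|\phi''(x)| \leq C_2 e^{k|x|}$ with explicit constants, whereas the paper splits the expectation into $\{|\epsilon| \leq L\}$ and $\{|\epsilon| > L\}$; the substance is identical.
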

\begin{proof}
We prove the result only for $\phi''(\epsilon)$, the case for $\phi'''(\epsilon)$ being identical.
Let $L$ large enough such that $|\phi''(x)| \leq K_1 e^{K_2 |x|}$ for $|x| \geq L$ then:
\begin{align*}
    \E \left[ |\phi''(\epsilon)|^\alpha \right] &= \E \left[ |\phi''(\epsilon)|^\alpha \1_{|\epsilon| \leq L} \right] + \E \left[ |\phi''(\epsilon)|^\alpha \1_{|\epsilon| > L} \right] \\
                                                   &\leq \sup_{|x| \leq L}|\phi''(x)|^\alpha + K_1^\alpha \E[e^{K_2 \alpha |\epsilon|}]
\end{align*}
The first term is finite, that the second one can be bounded by a finite and increasing function in $\sigma^2$ follows from the symmetry in law of $\epsilon$ and the form of its movement generating function.
\end{proof}

\begin{proof}[Proof of \cref{thm:resnet_fc_sde}]
We suppress the dependency on $t$ of vector and matrices and the conditioning in expectations and covariances in this proof to ease the notation. We also drop the boldness of $x_t$ as no confusion arises in this setting. We instead reserve subscripts for indexing: for example $x_d$ denotes the $d$-th element of a vector $x$.

Let $h = (\mu^W \sqrt{\Dt} + \varepsilon^W) \psi(x) + (\mu^b \sqrt{\Dt} + \varepsilon^b)$ so that $h \sqrt{\Dt} = \Delta W \psi(x) + \Delta b$. By second order Taylor expansion of $\phi$ around 0 we have for $d = 1, \dots, D$
\begin{equation*}
     \frac{\Delta x_d}{\Dt} = \frac{\phi(h_d \sqrt{\Dt})}{\Dt} = \phi'(0) h_d \Dt ^ {-1/2} + \frac{1}{2} \phi''(0) h_d ^ 2 + \frac{1}{6} \phi'''(\vartheta_d) h_d ^ 3 \Dt ^ {1/2}
\end{equation*}
with $\vartheta_d \in (-h_d \sqrt{\Dt}, h_d \sqrt{\Dt})$. To prove \cref{eq:mu_x} we want to show that $\forall R > 0$
\begin{equation*}
    \lim_{\Dt \downarrow 0} \sup_{\norm{x} < R} \left| \mux(x)_d - \E\left[\phi'(0) h_d \Dt ^ {-1/2} + \frac{1}{2} \phi''(0) h_d ^ 2 + \frac{1}{6} \phi'''(\vartheta) h_d ^ 3 \Dt ^ {1/2} \right] \right| = 0.
\end{equation*}
Now, $h_d = (\mu^W_d \sqrt{\Dt} + \varepsilon^W_d) \psi(x) + \mu^b_d \sqrt{\Dt} + \varepsilon^b_d$ and the distribution assumptions on $\varepsilon^W$ and $\varepsilon^b$ lead to
\begin{align*}
    &\E\left[\phi'(0)h_d\Dt^{-1/2} + \frac{1}{2} \phi''(0) h_d ^ 2 \right]\\
    &= \phi'(0)(\mu^b_d + \mu^W_d \psi(x))\\
    &\quad+ \frac{1}{2} \phi''(0) \V[\varepsilon^W \psi(x) + \varepsilon^b]_{d,d}\\
    &\quad+ \frac{1}{2} \phi''(0) \left(\mu^b_d + \mu^W_d \psi(x)\right)^2\Dt \\
    &= \mux(x)_d + \frac{1}{2} \phi''(0) \left(\mu^b_d + \mu^W_d \psi(x)\right)^2\Dt.
\end{align*}
It remains to show that
\begin{equation*}
    \lim_{\Dt \downarrow 0} \sup_{\norm{x} < R} \left| \left(\mu^b_d + \mu^W_d \psi(x)\right)^2 \right| \Dt = 0,
\end{equation*}
which holds as $\psi$ is locally bounded, and that
\begin{equation*}
    \lim_{\Dt \downarrow 0} \sup_{\norm{x} < R} \left| \E\left[ \phi'''(\vartheta_d) h_d ^ 3 \right] \right| \Dt ^ {1/2} = 0,
\end{equation*}
for which it suffices to show that $\sup_{\norm{x} < R} \left| \E\left[ \phi'''(\vartheta_d) h_d ^ 3 \right] \right|$ can be bounded by $M(R) < \infty$ uniformly in $\Dt$. By Cauchy–Schwarz
\begin{equation}
    \sup_{\norm{x} < R} \left| \E\left[ \phi'''(\vartheta_d) h_d ^ 3 \right] \right| \leq \sup_{\norm{x} < R} \E\left[ \phi'''(\vartheta_d) ^2 \right]^{1/2} \sup_{\norm{x} < R} \E\left[h_d ^ 6 \right]^{1/2}.
\end{equation}
Again, as $\psi$ is locally bounded the constraint $\sup_{\norm{x} < R}$ corresponds to a constraint on the variance of $h_d$ hence the second $\sup$ is finite. By \cref{lem:bounds} the first $\sup$ is finite too and not increasing in $\Dt$ as $|\vartheta_d| \leq \sqrt{\Dt}|h_d|$ which allows us to produce the desired bound $M(R)$.

Regarding \cref{eq:continuity_x}, by first order Taylor expansion of $\phi$ around 0 we need to show that for $d = 1, \dots, D$ and $R>0$
\begin{equation*}
    \lim_{\Dt \downarrow 0} \sup_{\norm{x} < R} \left| \E\left[ \frac{\left(\phi'(0) h_d \Dt^{1/2} + \frac{1}{2} \phi''(\vartheta_d) h_d ^ 2 \Dt \right)^4}{\Dt} \right] \right| = 0
\end{equation*}
with $\vartheta_d \in (-h_d \sqrt{\Dt}, h_d \sqrt{\Dt})$. Note that the term inside the expectation is composed of a sum of terms of the form $k h_d^n \phi''(\vartheta_d)^m \Dt^\alpha$ for integers $n,m \geq 0$ and reals $\alpha > 0$, $k \in \mathbb{R}$. This results from repeated applications of the Cauchy–Schwarz inequality and \cref{lem:bounds} as we did previously to prove \cref{eq:mu_x}.

Regarding \cref{eq:sigma_x}, we can compute $\E[\Delta x (\Delta x)\tran]/\Dt$ instead of $\V[\Delta x]/\Dt$ as in the infinitesimal limit of $\Dt \downarrow 0$ the two quantities have to agree due to the convergence of the infinitesimal mean that we have already established. Hence by first order Taylor expansion of $\phi$ around 0 we need to show that for $d,u = 1, \dots, D$ and $R>0$:
\begin{align*}
    & \lim_{\Dt \downarrow 0} \sup_{\norm{x} < R} \Bigg| \sigmax^2(x)_{d,u}\\
    &- \E\left[ \frac{\big(\phi'(0) h_d \Dt^{1/2} + \frac{1}{2} \phi''(\vartheta_d) h_d ^ 2 \Dt \big) \big(\phi'(0) h_u \Dt^{1/2} + \frac{1}{2} \phi''(\vartheta_u) h_u ^ 2 \Dt \big)}{\Dt} \right] \Bigg| = 0
\end{align*}
with $\vartheta_d \in (-h_d \sqrt{\Dt}, h_d \sqrt{\Dt})$, $\vartheta_u \in (-h_u \sqrt{\Dt}, h_u \sqrt{\Dt})$. The only term inside the expectation not vanishing in $\Dt$ is 
\begin{align*}
    &\E[\phi'(0)^2 h_d h_u]\\
    &= \phi'(0)^2 \V[\varepsilon^W \psi(x) + \varepsilon^b]_{d,u} + \phi'(0)^2 \left(\mu^b_d + \mu^W_d \psi(x)\right)\left(\mu^b_u + \mu^W_u \psi(x)\right) \Dt\\
    &= \sigmax^2(x)_{d,u} + \phi'(0)^2 \left(\mu^b_d + \mu^W_d \psi(x)\right)\left(\mu^b_u + \mu^W_u \psi(x)\right) \Dt.
\end{align*}
The (uniform on compacts) convergence of all terms aside from $\sigmax^2(x)_{d,u}$ to $0$ once again follows from repeated applications of the Cauchy–Schwarz inequality and \cref{lem:bounds}.

Now, the continuity of $\mux(x)$ and $\sigmax(x)$ are a consequence of the continuity of the conditional covariance $\V[\varepsilon^W \psi(x) + \varepsilon^b]$, and as $\V[\varepsilon^W \psi(x) + \varepsilon^b]$ is positive semi-definite so is $\sigmax^2(x)$. Hence all the conditions of Assumption \cref{ass:inf_coeff} hold true.

Finally, as $\psi$ is differentiable two times with continuity, it follows from the dependency of $\mux$ and $\sigmax^2$ on $x$ only through $\V[\varepsilon^W \psi(x) + \varepsilon^b]$ that \cref{ass:existence_uniqueness} is satisfied too. The application of \cref{thm:sde_convergence} completes the proof.
\end{proof}

\begin{proof}[Proof of \cref{thm:resnet_fc_sde_2}]
Notice that
\begin{equation*}
    d[W \psi(x)]_t + d[b]_t = d[W \psi(x) + b]_t = \diag(\V[\varepsilon^W_t \psi(x_t) + \varepsilon^b_t|x_t])dt
\end{equation*}
Then expanding $dW_t$ and $db_t$ in \cref{eq:resnet_fc_sde_2} shows that the drift terms are matched between \cref{eq:resnet_fc_sde} and \cref{eq:resnet_fc_sde_2}. The quadratic variation of \cref{eq:resnet_fc_sde} is
\begin{equation*}
    \phi'(0)^2 \diag(\V[\varepsilon^W_t \psi(x_t) + \varepsilon^b_t|x_t]) dt
\end{equation*}
which is equal to the quadratic variation of \cref{eq:resnet_fc_sde_2} as it is computed as
\begin{equation*}
    d[x]_t = d[\phi'(0)(W \psi(x) + b)]_t = \phi'(0)^2 d[W \psi(x) + b]_t
\end{equation*}
This shows the equivalence in law between the solution of \cref{eq:resnet_fc_sde} and the solution of \cref{eq:resnet_fc_sde_2}. Then \cref{eq:resnet_fc_sde_2_cross} immediately follows by direct computation.
\end{proof}

\begin{proof}[Proof of \cref{thm:resnet_fc_matrix_variate} and \cref{thm:resnet_fc_iid}]
Notice that
\begin{align*}
    &d[W\psi(x^{(i)}) + b, W\psi(x^{(j)}) + b]_t\\ 
    &=\C[\varepsilon^W_t \psi(x_t^{(i)}) + \varepsilon^b_t, \varepsilon^W_t \psi(x_t^{(j)}) + \varepsilon^b_t|x_t^{(i)},x_t^{(j)}]dt\\
    &=\big(\Sigma^b + \C[\varepsilon^W_t \psi(x_t^{(i)}), \varepsilon^W_t \psi(x_t^{(j)})|x_t^{(i)},x_t^{(j)}]\big)dt
\end{align*}
and
\begin{align*}
    &\C[\varepsilon^W_t \psi(x_t^{(i)}), \varepsilon^W_t \psi(x_t^{(j)})|x_t^{(i)},x_t^{(j)}]_{r,c}\\
    &=\E[(\varepsilon^W_{t,r,\bullet} \psi(x_t^{(i)}))(\varepsilon^W_{t,c,\bullet} \psi(x_t^{(j)}))|x_t^{(i)},x_t^{(j)}]\\
    &=\sum_{d,u=1}^{D} \psi(x_{t,d}^{(i)}) \psi(x_{t,u}^{(j)}) \E[W_{r,d} W_{c,u}]\\
    &=\Sigma^{W_O}_{r,c} \sum_{d,u=1}^{D} \psi(x_{t,d}^{(i)}) \psi(x_{t,u}^{(j)}) \Sigma^{W_I}_{d,u}\\
    &=\Sigma^{W_O}_{r,c} (\psi(x_t^{(i)})\tran \Sigma^{W_I} \psi(x_t^{(j)})).
\end{align*}
This proves \cref{thm:resnet_fc_matrix_variate}. \cref{thm:resnet_fc_iid} follows by setting $\sigma^b = \sigma_b \I_D$, $\sigma^{W_I} = \I_D$ and $\sigma^{W_O} = \sigma_w D^{-1/2} \I_D$.
\end{proof}

\begin{proof}[Proof of \cref{thm:jacobian_resnet_fc}]
Here $g_t$ is a $D \times D$ matrix-valued SDE instead of standard $D$-dimensional (vector) SDEs.
All the theory presented in \cref{sec:preliminaries_diffusion} continues to hold with the obvious modifications by working on the vectorization of matrix-valued processes. When establishing the limits for $g_t$ in \cref{ass:inf_coeff} the conditioning is both on $g_t$ and on $x_t$, indeed the convergence to the limiting process is obtained jointly in $x_t$ and $g_t$. This proof follows the exact same path of the proofs of \cref{thm:resnet_fc_sde} and \cref{thm:resnet_fc_sde_2} so we highlight the main steps only. And once again we suppress the dependency on $t$ of vector and matrices, the conditioning in expectations and covariances, and the boldness of $x_t$ and $g_t$ as no confusion arises in this setting:
\begin{equation*}
    \Delta g = \big(\phi'(\Delta W \psi(x) + \Delta b) {1_D}\tran \odot \Delta W \odot 1_{D}\psi'(x)\tran \big)g
\end{equation*}

Let $h = (\mu^W \sqrt{\Dt} + \varepsilon^W) \psi(x) + (\mu^b \sqrt{\Dt} + \varepsilon^b)$ so that $h \sqrt{\Dt} = \Delta W \psi(x) + \Delta b$.
By second order Taylor expansion of $\phi'$ around 0 we have for $d = 1, \dots, D$
\begin{equation*}
    \phi'(h_d \sqrt{\Dt}) = \phi'(0) + \phi''(0)h_d\sqrt{\Dt} + \frac{1}{2}\phi'''(\vartheta_d) h_d^2 \Dt
\end{equation*}
with $\vartheta_d \in (-h_d \sqrt{\Dt}, h_d \sqrt{\Dt})$.
Then with $\vartheta = \begin{bmatrix}\vartheta_1 \cdots \vartheta_D\end{bmatrix}$
\begin{equation*}
    \Delta g = \big((\phi'(0)1_D\sqrt{\Dt} + \phi''(0)h \Dt + \frac{1}{2}\phi'''(\vartheta) h^2 \Dt^{3/2}) {1_D}\tran \odot (\mu^W \sqrt{\Dt} + \varepsilon^W) \odot 1_{D}\psi'(x)\tran \big)g
\end{equation*}
In order to obtain the instantaneous mean of $g$ we need to compute
\begin{align*}
    \E\left[\frac{\Delta g}{\Dt}\right] &= \phi'(0) (\mu^W \odot 1_{D}\psi'(x)\tran) g + \phi''(0) (\E[\varepsilon^W \psi(x) 1_D\tran \odot \varepsilon^W] \odot 1_{D}\psi'(x)\tran) g + r_{g,\mu}(g,x,\Dt)\\
    &= \mu_g(g,x) + r_{g,\mu}(g,x,\Dt)
\end{align*}
where $r_{\mu}(g,x,\Dt)$ is a vector-valued remainder term and we want to show that for each $R > 0$
\begin{equation*} \label{eq:g_mu_limit}
    \lim_{\Dt \downarrow 0}\sup_{\norm{g} + \norm{x} < R}\norm{r_{g,\mu}(g,x,\Dt)} = 0
\end{equation*}

By first order Taylor expansion of $\phi'$ around 0 we have for $d,d'=1,\dots,D$
\begin{equation*}
    \Delta g_{d,d'} = \big((\phi'(0) + \phi''(\vartheta_d)h_d\sqrt{\Dt})1_D\tran \odot \Delta W_{d,\bullet} \odot \psi'(x) \big) g_{\bullet,d'}
\end{equation*}
with $\vartheta_d \in (-h_d \sqrt{\Dt}, h_d \sqrt{\Dt})$.
In order to obtain the instantaneous covariance of $g$ we need to compute for $d,d',u,u'=1,\dots,D$
\begin{align*}
    \E\left[\frac{\Delta g_{d,d'} \Delta g_{u,u'}}{\Dt}\right] &= \phi'(0)^2 \E[(\varepsilon^W_{d,\bullet} \odot \psi'(x)) g_{\bullet,d'} (\varepsilon^W_{u,\bullet} \odot \psi'(x)) g_{\bullet,u'}] + r_{g,\sigma}(g,x,\Dt)_{d,d',u,u'}\\
    &= \sigma_g^2(g,x)_{d,d',u,u'} + r_{g,\sigma^2}(g,x,\Dt)_{d,d',u,u'}
\end{align*}
where $r_{g,\sigma^2}(g,x,\Dt)$ is a remainder term (a 4 dimensional tensor, as $\sigma_g^2(g,x)$) and we want to show that for each $R > 0$
\begin{equation*}
    \lim_{\Dt \downarrow 0}\sup_{\norm{g} + \norm{x} < R}\norm{\vect(r_{g,\sigma^2}(g,x,\Dt))} = 0
\end{equation*}

Again by first order Taylor expansion of $\phi'$ around 0 we want to prove for $d,d'=1,\dots,D$
\begin{equation*}
    \lim_{\Dt \downarrow 0}\sup_{\norm{g} + \norm{x} < R}\norm{c_{g}(g,x,\Dt)} = 0
\end{equation*}
where
\begin{equation*}
    c_g(g,x,\Dt) = \E\left[\frac{\left(\big((\phi'(0) + \phi''(\vartheta)h\sqrt{\Dt})1_D\tran \odot \Delta W \odot 1_{D}\psi'(x) \big) g \right)^4}{\Dt}\right]
\end{equation*}
(here the fourth power is element-wise) with $\vartheta = \begin{bmatrix}\vartheta_1 \cdots \vartheta_D\end{bmatrix}$ and $\vartheta_d \in (-h_d \sqrt{\Dt}, h_d \sqrt{\Dt})$ to satisfy the continuity in probability requirement.

Then the limit
\begin{equation*}
    \lim_{\Dt \downarrow 0}\sup_{\norm{g} + \norm{x} < R} \left(\norm{r_{g,\mu}(g,x,\Dt)} + \norm{\vect(r_{g,\sigma^2}(g,x,\Dt))} + \norm{c_{g}(g,x,\Dt)}\right) = 0
\end{equation*}
again follows from repeated applications of the Cauchy–Schwarz inequality and \cref{lem:bounds}.
Now, \cref{ass:existence_uniqueness} follows from the linearity of the expectation operator and the positive semi-definiteness of $\sigma_g^2(g,x)$ is easily checked.
The equivalence of \cref{eq:jacobian_resnet_fc} to the matrix-SDE defined by $\mu_g(g,x)$ and $\sigma_g^2(g,x)$ is established by comparing the drift and quadratic covariation terms.
This completes the proof.
\end{proof}

\begin{proof}[Proof of \cref{thm:jacobian_resnet_fc_inverse}]
Let $dZ_t = \left(\phi'(0)dW_t + \phi''(0)d[W \psi(x) {1_D}\tran \odot W]_t\right) \odot 1_{D}\psi'(x_t)$.
Then $g_t$ given by $dg_t = dZ_t g_t$ is the right stochastic exponential of $Z_t$ which we denote, following \cite{protter2005stochastic}, as $g = \mathcal{E}^{R}(Z)$.
Let define the (left) stochastic exponential $u = \mathcal{E}(Z)$ of $Z_t$ by $du_t = u_t dZ_t$. 
From \cite[Chapter V, Theorem 48]{protter2005stochastic} we know that $g_t$ is invertible and that
\begin{equation*}
    \mathcal{E}(Z) \mathcal{E}^R(-Z + [Z,Z]) = \I_D
\end{equation*}
It follows that
\begin{equation*}
    \mathcal{E}^R(Z) \mathcal{E}(-Z + [Z,Z]) = \I_D
\end{equation*}
hence $g^{-1} = \mathcal{E}(-Z + [Z,Z])$ which completes the proof.
\end{proof}

\begin{lemma} \label{thm:sde_layers}
When $\psi$ is the identity function SDE \cref{eq:resnet_fc_iid} is equivalent, in distribution, to the representation where each data point $i$ has an associated $D$-dimensional BM $B_t^{(i)}$, $\{B_t^{(i)}\}_{i=1}^N$ are dependent over $i$, and each $B_t^{(i)}$ corresponds to both the weights and biases sources of randomness. That is, the SDE \cref{eq:resnet_fc_iid} is equivalent to the following
\begin{align*}
    &dx_t^{(i)} = \phi'(0)(\sigma_b^2 + \sigma_w^2 q_t^{(i)})^{1/2} dB_t^{(i)} + \frac{1}{2}\phi''(0)(\sigma_b^2 + \sigma_w^2 q_t^{(i)}) 1_D dt\\
    &d[B^{(i)},B^{(j)}]_t = \frac{\sigma_b^2 + \sigma_w^2 \lambda_t^{(i,j)}}{\big((\sigma_b^2 + \sigma_w^2 q_t^{(i)})(\sigma_b^2 + \sigma_w^2 q_t^{(j)})\big)^{1/2}} \I_D dt
\end{align*}
Let $\widetilde{\mu}_x(x) = \frac{1}{2}\phi''(0)(\sigma_b^2 + \sigma_w^2 q(x))$, $\widetilde{\sigma}_x(x) = \phi'(0)(\sigma_b^2 + \sigma_w^2 q(x))^{1/2}$, $\widetilde{\sigma}_{xx}(x,y) = \phi'(0)(\sigma_b^2 + \sigma_w^2 \lambda(x,y))^{1/2}$.
Then the processes $m_t^{(i)}$, $q_t^{(i)}$, $\lambda_t^{(i,j)}$ follow the SDEs:
\begin{align*}
    &dm_t^{(i)} = \widetilde{\mu}_x(x_t^{(i)}) dt + \widetilde{\sigma}_x(x_t^{(i)}) \frac{1}{D} \sum_{d=1}^{D} dB^{(i)}_{t,d}\\
    &dq_t^{(i)} = \left(2\widetilde{\mu}_x(x_t^{(i)})m_t^{(i)} + \widetilde{\sigma}_x^2(x_t^{(i)})\right)dt + 2\widetilde{\sigma}_x(x_t^{(i)}) \frac{1}{D} \sum_{d=1}^{D} x_{t,d}^{(i)} dB^{(i)}_{t,d}\\
    &d\lambda_t^{(i,j)} = \left(\widetilde{\mu}_x(x_t^{(i)})m_t^{(j)} + \widetilde{\mu}_x(x_t^{(j)})m_t^{(i)} + \widetilde{\sigma}_{xx}^2(x_t^{(i)},x_t^{(j)})\right)dt\\
    &\quad+ \frac{1}{D}\widetilde{\sigma}_x(x_t^{(i)})\sum_{d=1}^{D}x_{t,d}^{(j)}dB^{(i)}_{t,d} + \frac{1}{D}\widetilde{\sigma}_x(x_t^{(j)})\sum_{d=1}^{D}x_{t,d}^{(i)}dB^{(j)}_{t,d}
\end{align*}
where $m_t^{(i)} = \frac{1}{D}\sum_{d=1}^D x_{t,d}^{(i)}$.
\end{lemma}
\begin{proof}
This is again a direct consequence of multi-dimensional Ito's formula \citep[Section 4.2]{oksendal2003stochastic}.
\end{proof}

\begin{proof}[Heuristic for \cref{thm:layer_function_ode}]
From \cref{thm:sde_layers} we have
\begin{align*}
    &[m^{(i)}]_T = \frac{1}{D}\int_0^T \widetilde{\sigma}^2_x(x_t^{(i)}) dt\\
    &[q^{(i)}]_T = \frac{1}{D}\int_0^T 4 \widetilde{\sigma}^2_x(x_t^{(i)}) q_t^{(i)} dt\\
    &[\lambda^{(i,j)}]_T = \frac{1}{D}\int_0^T \widetilde{\sigma}^2_x(x_t^{(i)})q_t^{(j)} + \widetilde{\sigma}^2_x(x_t^{(j)})q_t^{(i)} dt
\end{align*}
where $\widetilde{\sigma}^2_x(x) = \phi'(0)^2(\sigma_b^2 + \sigma_w^2 q(x))$.
Assuming that $q_t^{(i)}$ can be controlled (for instance bounds on SDE solutions can be used to bound $\E[\sup_{0 \leq t \leq T} q_t^{(i)}]$ when $\phi''(0) = 0$) all the quadratic variations can be shown to converge to $0$ leaving out only the deterministic component.
The rest of \cref{thm:layer_function_ode} follows by assuming that the small noise limit of the SDEs is given by the corresponding ODEs, and by computing the ODEs solutions.
\end{proof}

\begin{proof}[Heuristic for \cref{thm:doubly_infinite_sde}]
We know from \cref{thm:layer_function_ode} that
\begin{align*}
    &\phi'(0)(\sigma_b^2 + \sigma_w^2 q_t^{(i)}) \rightarrow \phi'(0)(\sigma_b^2 + \sigma_w^2 q_t^{(i),\infty})\\
    &\frac{1}{2}\phi''(0)(\sigma_b^2 + \sigma_w^2 q_t^{(i)}) \rightarrow \frac{1}{2}\phi''(0)(\sigma_b^2 + \sigma_w^2 q_t^{(i),\infty})\\
    &\frac{\sigma_b^2 + \sigma_w^2 \lambda_t^{(i,j)}}{\big((\sigma_b^2 + \sigma_w^2 q_t^{(i)})(\sigma_b^2 + \sigma_w^2 q_t^{(j)})\big)^{1/2}} \rightarrow \frac{\sigma_b^2 + \sigma_w^2 \lambda_t^{(i,j),\infty}}{\big((\sigma_b^2 + \sigma_w^2 q_t^{(i),\infty})(\sigma_b^2 + \sigma_w^2 q_t^{(j),\infty})\big)^{1/2}}
\end{align*}
as $D \uparrow \infty$.
Then \cref{thm:doubly_infinite_sde} follows by assuming that the solution of \cref{eq:doubly_infinite_fc} converges to the solution of \cref{eq:doubly_infinite_fc_ode} as $D \uparrow \infty$ and by computing the transition density of \cref{eq:doubly_infinite_fc_ode}.
\end{proof}

\section{Additional plots}\label{app:additional_plots}

In \cref{fig:f_2d} we plot 2D function samples of $\xdt_{T,1}$ for $\mathcal{F}_{\tanh}$ and ${F}_{\swish}$ to complement the visualizations of Section 4.2.

\begin{figure}[!htb]
    \centering
    \includegraphics[width=0.45\linewidth]{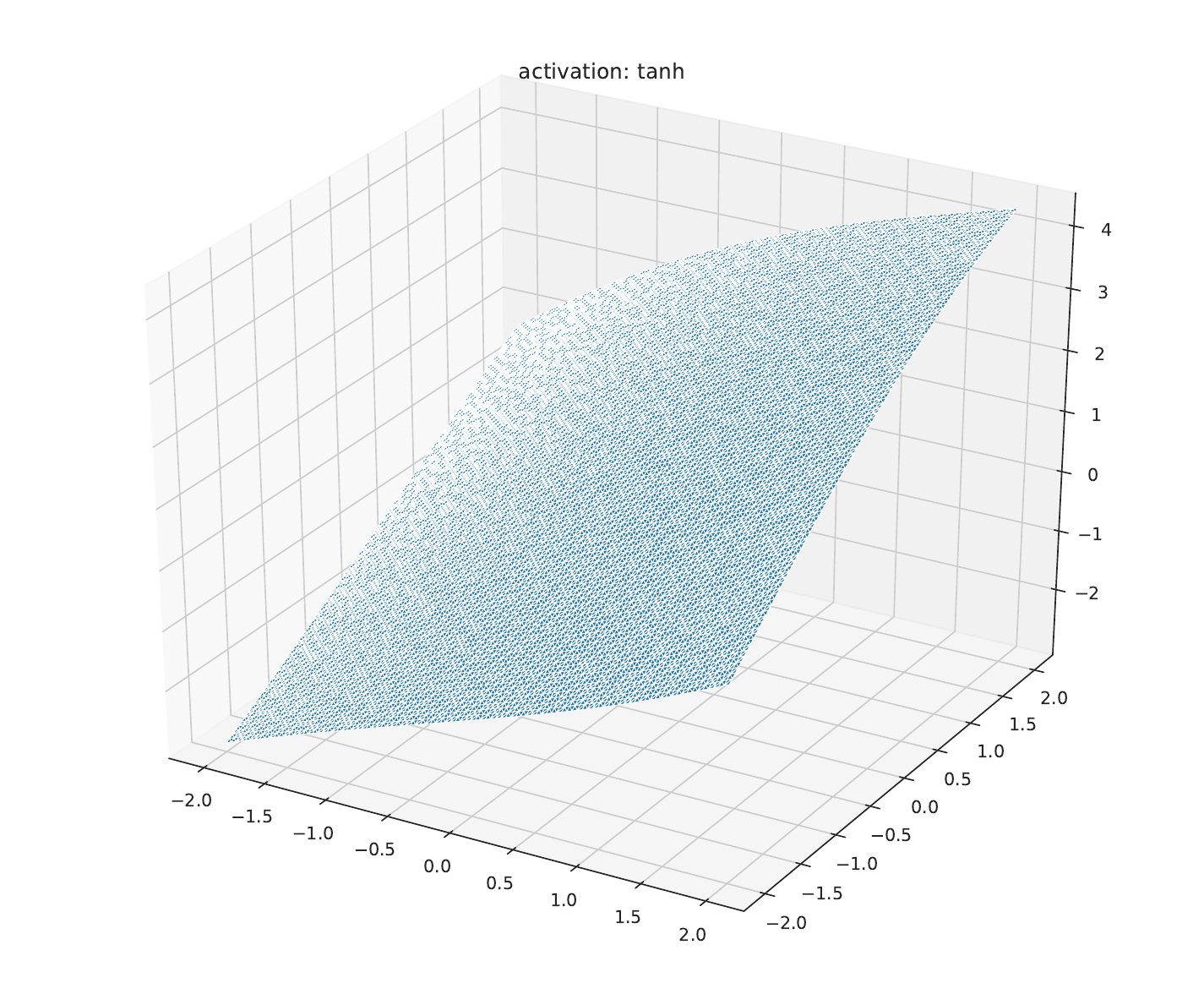}
    \includegraphics[width=0.45\linewidth]{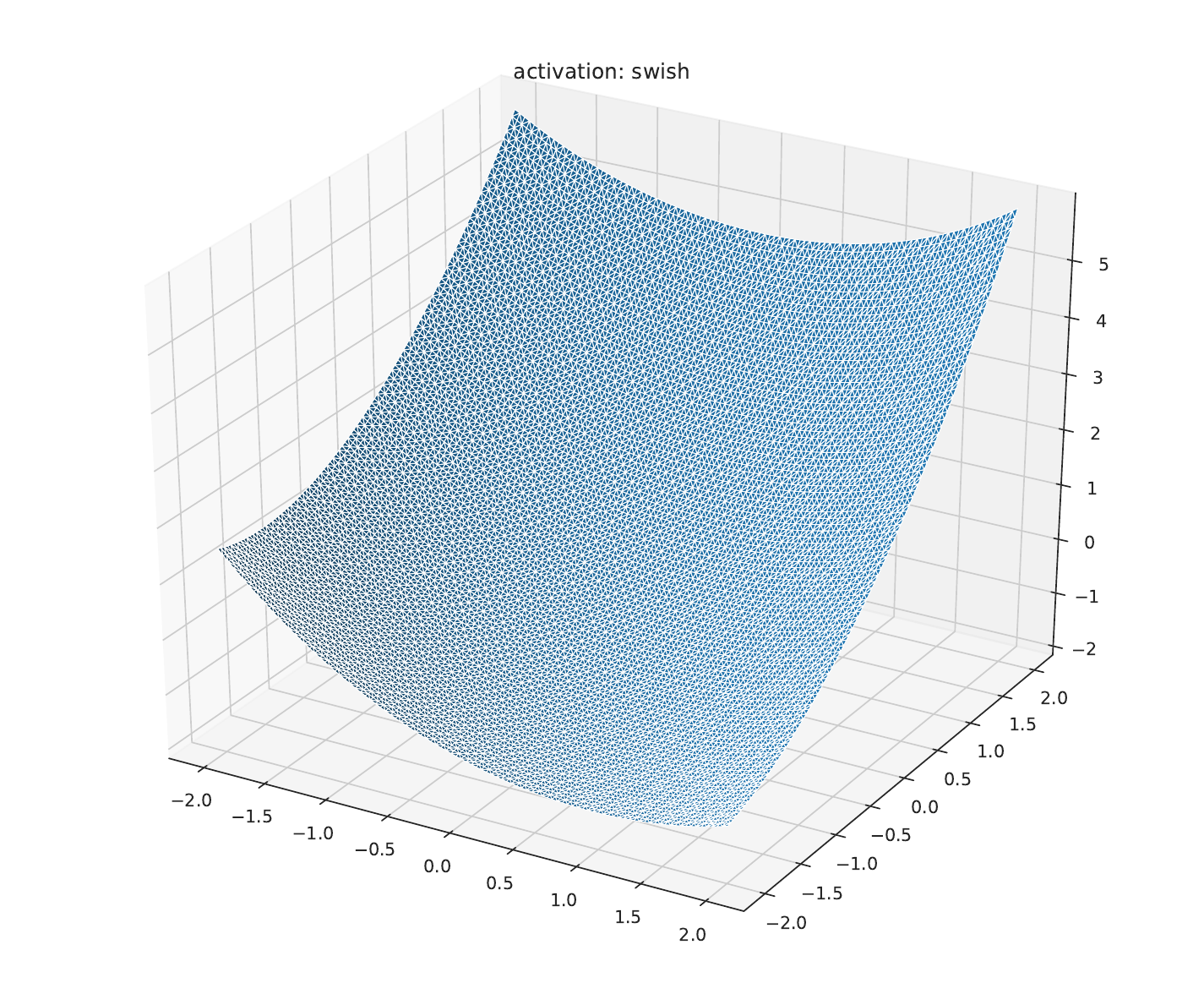}
    \caption{Function samples of $\xdt_{T,1}$ for $\mathcal{F}_{\tanh}$ (left) and ${F}_{\swish}$ (right) for $L=100$ and $D=100$ on the rectangle $[-2,2] \times [-2,2]$.}
    \label{fig:f_2d}
\end{figure}

\end{appendices}

\section*{Acknowledgements}

The authors are grateful to an Associate Editor and three anonymous Referees for all their constructive comments, corrections, and suggestions which improved remarkably the paper. Stefano Favaro is also affiliated to IMATI-CNR ``Enrico Magenes" (Milan, Italy). Stefano Favaro received funding from the European Research Council (ERC) under the European Union's Horizon 2020 research and innovation programme under grant agreement No 817257. Stefano Favaro gratefully acknowledge the financial support from the Italian Ministry of Education, University and Research (MIUR), ``Dipartimenti di Eccellenza'' grant 2018-2022.

\bibliography{refs}

\end{document}